\definecolor{blue}{rgb}{0,0,0.9}
\definecolor{red}{rgb}{0.9,0,0}
\definecolor{green}{rgb}{0,0.9,0}
\newcommand{\argmin}{\mathop{\mathrm{argmin}}}
\def\diag#1{\mbox{diag}(#1)}
 \journalname{International Journal of Computer Vision}
\def\R{\mathbb{R}}
\def\norm#1{\|#1\|}
\def\inprod#1#2{\langle #1, #2\rangle}
\def\dN{\delta N}
\def\DD{\mathbb{D}} \def\QQ{\mathbb{Q}}
\def\AA{\mathbb{A}}
\def\cP{\mathcal{P}}
\def\cX{{\cal X}}
\def\cY{{\cal Y}}
\def\hx{\widehat{x}} \def\hy{\widehat{y}}
\def\tx{\widetilde{x}}
\begin{document}

\title{Practical Matrix Completion and Corruption Recovery using Proximal Alternating Robust Subspace Minimization
}

\titlerunning{Practical Matrix Completion and corruption recovery using PARSuMi}        

\author{Yu-Xiang Wang         \and
        Choon Meng Lee \and
        Loong-Fah Cheong \and
        Kim-Chuan Toh  
}

\authorrunning{Wang, Lee, Cheong and Toh} 

\institute{Y.X. Wang, C.M. Lee, L.F. Cheong, K.C. Toh \at
              National University of Singapore \\
              \email{\{yuxiangwang, leechoonmeng, eleclf, mattohkc\} @nus.edu.sg}             \\
The support of the PSF grant 1321202075 is gratefully acknowledged.
}

\date{Received: 5 September 2013 / Accepted: 26 June 2014}

\maketitle

\begin{abstract}
   Low-rank matrix completion is a problem of immense practical importance. Recent works on the subject often use nuclear norm as a convex surrogate of the rank function. Despite its solid theoretical foundation, the convex version of the problem often fails to work satisfactorily in real-life applications. Real data often suffer from very few observations, with support not meeting the randomness requirements, ubiquitous presence of noise and potentially gross corruptions, sometimes with these simultaneously occurring.

This paper proposes a Proximal Alternating Robust Subspace Minimization (PARSuMi) method to tackle the three problems. The proximal alternating scheme explicitly exploits the rank constraint on the completed matrix and uses the $\ell_0$ pseudo-norm directly in the corruption recovery step.
We show that the proposed method for the non-convex and non-smooth model converges to a stationary point. Although it is not guaranteed to find the global optimal solution, in practice
we find that our algorithm can typically arrive at a good local minimizer when it is
supplied with a reasonably good starting point based on convex optimization.
Extensive experiments with challenging synthetic and real data demonstrate that our algorithm succeeds in a much larger range of practical problems where convex optimization fails, and
it also outperforms various state-of-the-art algorithms.


\keywords{matrix completion \and matrix factorization \and RPCA \and robust \and low-rank \and sparse \and nuclear norm \and non-convex optimization \and SfM \and photometric stereo}
\end{abstract}

\section{Introduction}
\label{intro}
Completing a low-rank matrix from partially observed entries, also known as matrix completion, is a central task in many real-life applications. The same abstraction of this problem has appeared in diverse fields such as signal processing, communications, information retrieval, machine learning and computer vision. For instance, the missing data to be filled in may correspond to plausible movie recommendations \citep{koren2009MF,funk2006netflix}, occluded feature trajectories for rigid or non-rigid structure from motion, namely SfM \citep{hartley2003powerfactorization,Damped_Newton_2005} and NRSfM \citep{Paladini_NRSfM}, relative distances of wireless sensors \citep{oh2010sensor}, pieces of uncollected measurements in DNA micro-array \citep{DNA06}, just to name a few.
\begin{figure}[ht]
\centering
 \includegraphics[width=0.9\linewidth, height=0.5\linewidth]{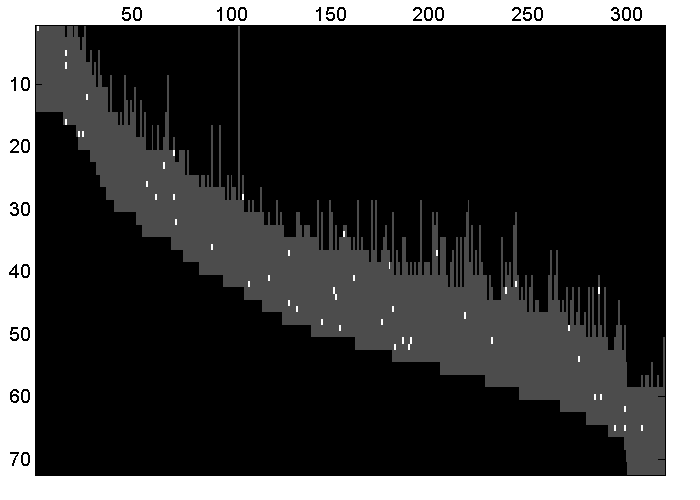}
\caption{
Sampling pattern of the Dinosaur sequence: 316 features are tracked over 36 frames. Dark area represents locations where no data is available; sparse highlights are injected gross corruptions. Middle stripe in grey are noisy observed data, occupying 23\% of the full matrix. The task of this paper is to fill in the missing data and recover the corruptions.
}
\label{fig:DinosaurSampling}
\end{figure}

The common difficulty of these applications lies in the scarcity of the observed data, uneven distribution of the support, noise, and more often than not, the presence of gross corruptions in some observed entries. For instance, in the movie rating database Netflix \citep{netflix}, only less than 1\% of the entries are observed and 90\% of the observed entries correspond to 10\% of the most popular movies. In photometric stereo, the missing data and corruptions (arising from shadow and specular highlight as modeled in \citet{Wu_photometric}) form contiguous blocks in images and are by no means random. In structure from motion, the observations fall into a diagonal band shape, and feature coordinates are often contaminated by tracking errors (see the illustration in Fig.~\ref{fig:DinosaurSampling}). Therefore, in order for any matrix completion algorithm to work in practice, these aforementioned difficulties need to be tackled altogether. We refer to this problem as \textbf{practical matrix completion}.
Mathematically, the problem to be solved is the following:
\begin{equation*}
\boxed{
\begin{aligned}
& \underset{}{\text{Given}}
& & \Omega, \widehat{W}_{ij}\text{ for all }(i,j)\in\Omega,  \\
& \underset{}{\text{find}}
& & W, \tilde{\Omega},\\
& \text{s.t.}
& & \mathrm{rank}(W)\text{ is small};\; \mathrm{card}(\tilde{\Omega})\text{ is small};\\
& & & |W_{ij}-\widehat{W}_{ij}| \text{ is small } \forall(i,j)\in\Omega|\tilde{\Omega}.
\end{aligned}
}
\end{equation*}
where $\Omega$ is the index set of observed entries whose locations are not necessarily
selected at random, $\tilde{\Omega}\in\Omega$ represents the index set of corrupted data, $\widehat{W}\in \mathbb{R}^{m\times n}$ is the measurement matrix with only $\widehat{W}_{ij\in{\Omega}}$ known, i.e., its support is
contained in $\Omega$. Furthermore, we define the projection $\mathcal{P}_{\Omega}:\mathbb{R}^{m\times n}\mapsto \mathbb{R}^{|\Omega|}$ so that $\mathcal{P}_{\Omega}(\widehat{W})$ denotes the vector of observed data. The adjoint of $\mathcal{P}_\Omega$ is
denoted by  $\mathcal{P}_\Omega^*$.

Extensive theories and algorithms have been developed to tackle some aspect of the
challenges listed in the preceding paragraph,
but those tackling the full set of challenges are far and few between, thus resulting in a dearth of practical algorithms. Two dominant classes of approaches are nuclear norm minimization, e.g. \citet{Candes_ExactMC,Candes2011_JACM,CandesNoise,chen2011erasures},
and matrix factorization, e.g., \citet{koren2009MF,Damped_Newton_2005,WibergL2,Subspace_ChenPei_2008,WibergL1}. Nuclear norm minimization methods minimize the convex relaxation of rank
instead of the rank itself,
and are supported by rigorous theoretical analysis and efficient
numerical computation. However, the conditions under which they succeed are often too restrictive for it to work well in real-life applications (as reported in \citet{shi2011limitations} and \citet{jain2012alt_min_global}).
In contrast,  matrix factorization is widely used in practice and are considered very effective for problems such as movie recommendation \citep{koren2009MF} and structure from motion \citep{Kanade_Factorization,Paladini_NRSfM} despite its lack of rigorous theoretical foundation. Indeed, as one factorizes matrix $W$ into $UV^T$, the formulation becomes bilinear and thus optimal solution is hard to obtain except in very specific cases (e.g., in \citet{jain2012alt_min_global}). A more comprehensive survey of the algorithms and review of the strengths and weaknesses will be given in the next section.

In this paper, we attempt to solve the practical matrix completion problem under the prevalent case where the rank of the matrix $W$ and the cardinality of $\tilde{\Omega}$ are upper bounded by some known parameters $r$ and $N_0$ via the
following non-convex, non-smooth optimization model:
\begin{eqnarray}\label{eq:MC_Formulation_0}
\begin{array}{rl}
 \underset{W,E}{\text{min}}\;
 & \frac{1}{2}\|\mathcal{P}_{\Omega}(W-\widehat{W}+E)\|^2 + \frac{\epsilon}{2}
\norm{\mathcal{P}_{\overline{\Omega}}(W)}^2
\\[5pt]
 \text{s.t.} \;
 & \mathrm{rank}(W)\leq r, \; W\in \R^{m\times n}
\\[5pt]
 & \|E\|_0\leq N_0, \;  \norm{E} \leq K_E, \; E\in \R^{m\times n}_\Omega
\end{array}
\end{eqnarray}
where
$\R^{m\times n}_\Omega$ denotes the set of $m\times n$ matrices  whose
supports are subsets of $\Omega$ and $\norm{\cdot}$ is the Frobenius norm; $K_E$ is a finite constant introduced to facilitate the convergence proof.
Note that the restriction of $E$ to $\R^{m\times n}_\Omega$ is natural since the role of $E$ is to capture the gross corruptions in the observed data $\widehat{W}_{ij\in\Omega}$. The bound constraint on $\norm{E}$ is natural in some problems when the true matrix $W$ is bounded (e.g., Given the typical movie ratings of 0-10, the gross outliers can only lie in [-10, 10]). In other problems, we simply choose $K_E$ to be some large multiple (say 20) of $\sqrt{N_0}\times {\rm median}(\mathcal{P}_\Omega(\widehat{W}))$, so that the constraint is essentially inactive and has no impact on the optimization.
Note that without making any randomness assumption on the index set $\Omega$ or
assuming that the problem has a unique solution $(W^*,E^*)$ such that the singular
vector matrices of $W^*$ satisfy some inherent conditions like those
in \citet{Candes2011_JACM}, the problem
of practical matrix completion is generally ill-posed. This motivated us to
include the Tikhonov regularization term $\frac{\epsilon}{2}
\norm{\mathcal{P}_{\overline{\Omega}}(W)}^2$ in
\eqref{eq:MC_Formulation_0}, where $\overline{\Omega}$ denotes the
complement of $\Omega$, and $0<\epsilon<1 $ is a small constant.
Roughly speaking, what the regularization term does
is to pick the solution $W$ which has the smallest
$\norm{\mathcal{P}_{\overline{\Omega}}(W)}$ among all
the candidates in the optimal solution set of the
non-regularized problem. Notice that we only put a regularization
on those elements of $W$ in $\overline{\Omega}$ as we do not
wish to perturb those elements of $W$ in the fitting term.
Finally, with the Tikhonov regularization and the bound constraint on $\norm{E}$,
we can show that problem \eqref{eq:MC_Formulation_0} has a global
minimizer.

By defining $H\in\R^{m\times n}$ to be the matrix such that
\begin{eqnarray}
 H_{ij} = \left\{ \begin{array}{ll}
  1 & \mbox{if $(i,j)\in \Omega$} \\[5pt]
 \sqrt{\epsilon} &  \mbox{if $(i,j)\not\in \Omega$},
\end{array} \right.
\label{eq:H}
\end{eqnarray}
and those elements of $E$ and $\widehat{W}$ in $\overline{\Omega}$ to be zero, we can rewrite the objective function in \eqref{eq:MC_Formulation_0}
in a compact form, and the problem becomes:
\begin{eqnarray}\label{eq:MC_Formulation}
\begin{array}{rl}
 \underset{W,E}{\text{min}}\;
 & \frac{1}{2}\|H\circ(W+E-\widehat{W})\|^2
\\[5pt]
 \text{s.t.}\;
 & \mathrm{rank}(W)\leq r, \; W\in \R^{m\times n}
\\[5pt]
 & \|E\|_0\leq N_0, \;  \norm{E} \leq K_E, \; E\in \R^{m\times n}_\Omega.
\end{array}
\end{eqnarray}
 In the above,
the notation ``$\circ$" denotes the element-wise product between two matrices.

We propose PARSuMi, a proximal alternating minimization algorithm motivated by the algorithm in \citet{attouch2010proximal} to solve
\eqref{eq:MC_Formulation}. This involves solving two subproblems each with an auxiliary proximal regularization term.
It is important to emphasize that
the subproblems in our case are non-convex and hence it is essential to
design appropriate algorithms to solve the subproblems to global optimality, at
least empirically.
We develop essential reformulations of the subproblems and design novel techniques to efficiently solve each subproblem, provably achieving the global optimum for one, and empirically so for the other.
We also prove that our algorithm is guaranteed to converge to a limit point,
which is necessarily a stationary point
of (\ref{eq:MC_Formulation}). We emphasize here that the convergence is established even
though one of the subproblems may not be solved to global optimality.
Together with the initialization schemes we have designed based on the convex relaxation
of (\ref{eq:MC_Formulation}), our
method is able to solve challenging real matrix completion problems with
corruptions robustly and accurately.
As we demonstrate in the experiments, PARSuMi is able to provide excellent  reconstruction of unobserved feature trajectories in the classic Oxford Dinosaur sequence for SfM, despite structured (as opposed to random) observation pattern and data corruptions.
It is also able to  solve photometric stereo to high precision despite 
severe violations of the Lambertian model (which underlies the rank-3 factorization) due to shadow, highlight and facial expression difference. Compared to state-of-the-art methods such as GRASTA \citep{he2011grasta}, Wiberg~$\ell_1$ \citep{WibergL1} and BALM \citep{DelBue2012balm}, our results are substantially better both qualitatively and quantitatively.


Note that in (\ref{eq:MC_Formulation}) we do not seek convex relaxation of any form, but rather constrain the rank and the corrupted entries' cardinality directly in their original forms. While it is generally not possible to have an algorithm guaranteed to compute the global optimal solution, we demonstrate that with appropriate initializations,
the faithful representation of the original problem often offers significant advantage over the convex relaxation approach in denoising and corruption recovery, and is thus more successful in solving real problems.

The rest of the paper is organized as follows. In Section~\ref{sec:PrevWorks}, we provide a comprehensive review of the existing theories and algorithms for practical matrix completion,
summarizing the strengths and weaknesses of nuclear norm minimization
and matrix factorization. In Section~\ref{sec:numerical_MF}, we conduct numerical evaluations of predominant matrix factorization methods, revealing those algorithms that are less-likely to be trapped at local minima. Specifically, these features include parameterization on a subspace and second-order Newton-like iterations. Building upon these findings, we develop the PARSuMi scheme in Section~\ref{sec:PARSuMi} to simultaneously handle sparse corruptions, dense noise and missing data. The proof of convergence and a convex initialization scheme are also provided in this section.  In Section~\ref{sec:experiment}, the proposed method is evaluated on both synthetic and real data and is shown to outperform the current state-of-the-art algorithms for robust matrix completion.



\section{A survey of results}\label{sec:PrevWorks}
\subsection{Matrix completion and corruption recovery via nuclear norm minimization}\label{sec:MC_theory}
\begin{table*}[ht]
  \centering
  \begin{tabular}{|c|p{2cm}|p{2cm}|p{2cm}|p{2cm}|p{2cm}|p{2cm}|}
     \hline
                                & MC \citep{Candes_ExactMC} & RPCA \citep{Candes2011_JACM} & NoisyMC \citep{CandesNoise} & StableRPCA \citep{RPCA2010stable} & RMC \citep{li2013compressed} & RMC \citep{chen2011erasures}\\\hline
     Missing data               & Yes & Yes & Yes & No & Yes & Yes\\
     Corruptions                & No & Yes & No & Yes & Yes & Yes\\
     Noise                      & No & No & Yes & Yes & No & No\\
     Deterministic $\Omega$     &No & No & No & No & No & Yes\\
     Deterministic
     $\tilde{\Omega}$           & No & No & No & No & No & Yes\\
     \hline
   \end{tabular}
       \caption{Summary of the theoretical development for matrix completion and corruption recovery.}\label{tab:MC_theory}
\end{table*}
Recently, the most prominent approach for solving a matrix completion problem is via the following nuclear norm minimization:
\begin{equation}\label{eq:MC_nuc}
\underset{W}{\text{min}}\, \left\{
 \|W\|_* \, \middle|\,
 \mathcal{P}_{\Omega}(W-\widehat{W})=0 \right\},
\end{equation}
in which $\mathrm{rank}(X)$ is replaced by the nuclear norm $\|X\|_*=\sum_i\sigma_i(X)$,
where the latter is the tightest convex relaxation of $\mathrm{rank}$ over the
unit (spectral norm) ball. \citet{Candes_ExactMC} showed that when sampling is uniformly random and sufficiently dense, and the underlying low-rank subspace is \emph{incoherent} with respect to the standard bases, then the remaining entries of the matrix can be exactly recovered. The guarantee was later improved in \citet{candes2010optimal,recht2009simpler}, and extended for noisy data in \citet{CandesNoise,negahban2012restricted} relaxed the equality constraint to
$$ \|\mathcal{P}_{\Omega}(W-\widehat{W})\|\leq\delta. $$
Using similar assumptions and arguments, \citet{Candes2011_JACM} and \citet{Parrilo_RPCA} concurrently proposed solution to the related problem of robust principal component analysis (RPCA) where the low-rank matrix can be recovered from sparse corruptions (with no missing data\footnote{ \citet{Candes2011_JACM} actually considered missing data too, but their guarantee (Theorem~1.2) for \eqref{eq:RMC_nuc} is only preliminary according to their own remarks. A stronger result is released later by the same group in \citet{li2013compressed}.}). This is formulated as
\begin{equation}\label{eq:RPCA_nuc}
 \underset{W,E}{\text{min}}\,  \left\{
 \|W\|_*+\lambda\|E\|_1 \, \middle|\,
 W+E=\widehat{W} \right\}.
\end{equation}
Noisy extension and improvement of the guarantee for RPCA were provided by
\citet{RPCA2010stable} and \citet{ganesh2010dense} respectively.
\citet{chen2011erasures} and \citet{li2013compressed} combined \eqref{eq:MC_nuc} and \eqref{eq:RPCA_nuc} and provided guarantee for the following
\begin{equation}\label{eq:RMC_nuc}
 \underset{W,E}{\text{min}} \, \left\{
  \|W\|_*+\lambda\|E\|_1\,\middle|\,
 \mathcal{P}_{\Omega}(W+E-\widehat{W})=0
 \right\}.
\end{equation}
In particular, the results in \citet{chen2011erasures} lifted the uniform random support assumptions in previous works by laying out the exact recovery condition for a class of deterministic sampling ($\Omega$) and corruptions ($\tilde{\Omega}$) patterns.

We summarize the theoretical and algorithmic progress in  practical matrix completion achieved by each method in Table~\ref{tab:MC_theory}. It appears that researchers are moving towards analyzing all possible combinations of the problems; from past indication, it seems entirely plausible albeit tedious to show that the noisy extension
\begin{equation}\label{eq:RMCN_nuc}
 \underset{W,E}{\text{min}} \left\{
 \|W\|_*+\lambda\|E\|_1\,\middle|\,
 \|\mathcal{P}_{\Omega}(W+E-\widehat{W})\|\leq \delta
\right\}
\end{equation}
will return a solution stable around the desired $W$ and $E$ under appropriate assumptions. Wouldn't that solve the practical matrix completion problem altogether?

The answer is unfortunately no.
While this line of research have provided profound understanding of practical matrix completion itself, the actual performance of the convex surrogate on real problems (e.g., movie recommendation) is usually not competitive against nonconvex approaches such as matrix factorization. Although convex relaxation is amazingly equivalent to the original problem under certain conditions, those well versed in practical problems will know that
those theoretical conditions are usually not satisfied by real data.
Due to noise and model errors, real data are seldom truly low-rank (see the comments on Jester joke dataset in \citet{keshavan2009comparison}), nor are they as incoherent as randomly generated data.
More importantly, observations are often structured (e.g., diagonal band shape in SfM) and hence do not satisfy the random sampling assumption needed for the tight convex relaxation approach.
 As a consequence of all these factors, the recovered $W$ and $E$ by convex optimization are often neither low-rank nor sparse in practical matrix completion. This can be further explained by the so-called ``Robin Hood'' attribute of $\ell_1$ norm (analogously, nuclear norm is the $\ell_1$ norm in the spectral domain), that is, it tends to steal from the rich and give it to the poor, decreasing the inequity of ``wealth'' distribution. Illustrations of the attribute will be given in Section~\ref{sec:experiment}.

Nevertheless, the convex relaxation approach has the advantage that one can design
\emph{efficient} algorithms to find or approximately reach the \emph{global} optimal solution of the given convex formulation. In this paper, we take advantage of the convex relaxation approach and use it to provide a powerful initialization
for our algorithm to
converge to the correct solution.

\subsection{Matrix factorization and applications}

Another widely-used method to estimate missing data in a low-rank matrix is matrix factorization (MF). It is at first considered as a special case of the weighted low-rank approximation problem with $\{0,1\}$ weight by \citeauthor{gabriel1979lower} in \citeyear{gabriel1979lower} and much later by \citet{srebro2003weighted}. The buzz of Netflix Prize further popularizes the missing data problem as a standalone topic of research. Matrix factorization turns out to be a robust and efficient realization of the idea that people's preferences of movies are influenced by a small number of latent factors and has been used as a key component in almost all top-performing recommendation systems \citep{koren2009MF} including BellKor's Pragmatic Chaos, the winner of the Netflix Prize \citep{koren2009bellkor}.

In computer vision, matrix factorization with missing data is recognized as an important problem too. Tomasi-Kanade affine factorization \citep{Kanade_Factorization}, Sturm-Triggs projective factorization \citep{sturm1996factorization}, and many techniques in Non-Rigid SfM and motion tracking \citep{Paladini_NRSfM} can all be formulated as a matrix factorization problem. Missing data and corruptions emerge naturally due to occlusions and tracking errors. For a more exhaustive survey of computer vision problems that can be modelled by matrix factorization, we refer readers to \citet{DelBue2012balm}.


Regardless of its applications, the key idea is that when $W=UV^T$, one ensures that
the required rank constraint is satisfied by restricting the factors $U$ and $V$ to be
in $\R^{m\times r}$ and $\R^{n\times r}$ respectively. Since the $(U,V)$ parameterization has a much smaller degree of freedom than the dimension of $W$, completing the missing data becomes a better posed problem. This gives rise to the following optimization problem:
\begin{equation}\label{eq:MC_L2}
\begin{aligned}
& \underset{U,V}{\text{min}}
& & \frac{1}{2}\left \Vert \mathcal{P}_{\Omega}(UV^T - \widehat{W}) \right \Vert^2
\end{aligned}
\end{equation}
or its equivalence reformulation
\begin{equation}\label{eq:MC_L22}
\underset{U}{\text{min}}\,
\left\{ \frac{1}{2}\left \Vert \mathcal{P}_{\Omega}(UV(U)^T - \widehat{W}) \right \Vert^2
\middle| U^T U = I_r \right\}
\end{equation}
where the factor $V$ is now a function of $U$.


Unfortunately, \eqref{eq:MC_L2} is not a convex optimization problem. The quality of the solutions one may get by minimizing this objective function depends on specific algorithms and their initializations. Roughly speaking, the various algorithms for \eqref{eq:MC_L2} may be grouped into three categories: \textbf{alternating minimization}, \textbf{first order} gradient methods and \textbf{second order} Newton-like methods.


Simple approaches like alternating least squares (ALS) or equivalently PowerFactorization \citep{hartley2003powerfactorization} fall into the first category. They alternatingly fix one factor and minimize the objective over the other using least squares method. A more sophisticated algorithm is BALM \citep{DelBue2012balm}, which uses the Augmented Lagrange Multiplier method to gradually impose additional problem-specific manifold constraints. The inner loop however is still alternating minimization. This category of methods has the reputation of reducing the objective value quickly in the first few iterations, but they usually take a large number of iterations to converge to a high quality solution \citep{Damped_Newton_2005}.

First order gradient methods are efficient, easy to implement and they are able to scale up to million-by-million matrices if stochastic gradient descent is adopted. Therefore it is very popular for large-scale recommendation systems. Typical approaches include Simon Funk's incremental SVD \citep{funk2006netflix}, nonlinear conjugate gradient \citep{srebro2003weighted} and more sophisticatedly, gradient descent on the Grassmannian/Stiefel manifold, such as GROUSE \citep{balzano2010grouse} and OptManifold \citep{yin2013orthogonality}. These methods, however, as we will demonstrate later, easily get stuck in local minima\footnote{Our experiment on synthetic data shows that the strong Wolfe line search adopted by \citet{srebro2003weighted} and \citet{yin2013orthogonality} somewhat ameliorates the issue, though it does not seem to help much on real data.}.

The best performing class of methods are the second order Newton-like algorithms, in that they demonstrate superior performance in both accuracy and the speed of convergence (though each iteration requires more computation); hence they are suitable for small to medium scale problems requiring high accuracy solutions
(e.g., SfM and photometric stereo in computer vision). Representatives of these algorithms include the damped Newton method \citep{Damped_Newton_2005}, Wiberg($\ell_2$) \citep{WibergL2}, LM\_S and LM\_M of \citet{Subspace_ChenPei_2008} and LM\_GN,  which is a variant of LM\_M using Gauss-Newton (GN) to approximate the Hessian function.

As these methods are of special importance in developing our PARSuMi algorithm, we conduct extensive numerical evaluations of these algorithms in Section~\ref{sec:numerical_MF} to understand their pros and cons as well as the key factors that lead to some of them finding global optimal solutions more often than others.

It is worthwhile to note some delightful recent efforts to scale the first two classes of MF methods to internet scale, e.g., parallel coordinate descent extension for ALS \citep{yu2012scalable} and stochastic gradient methods in ``Hogwild'' \citep{Recht2011Hogwild}. It will be an interesting area of research to see if the ideas in these papers can be used to make the second order methods more scalable.


In addition, there are a few other works in each category that take into account the corruption problem by changing the quadratic penalty term of \eqref{eq:MC_L2} into $\ell_1$-norm or Huber function
\begin{equation}\label{eq:MC_L1}
\begin{aligned}
& \underset{U,V}{\text{min}}
& & \left \Vert \mathcal{P}_{\Omega}(UV^T - \widehat{W}) \right \Vert_1\, ,
\end{aligned}
\end{equation}
\begin{equation}\label{eq:MC_huber}
\begin{aligned}
& \underset{U,V}{\text{min}}
& &
\sum_{(ij)\in\Omega} \mbox{Huber}\big((UV^T - \widehat{W})_{ij} \big).
\end{aligned}
\end{equation}
Notable algorithms to solve these formulations include alternating linear programming (ALP) and alternating quadratic programming (AQP) in \citet{ke2005L1}, GRASTA \citep{he2011grasta} that extends GROUSE, as well as Wiberg~$\ell_1$ \citep{WibergL1} that uses a second order Wiberg-like iteration.
While it is well known that the $\ell_1$-norm or Huber penalty term can better handle
outliers, and the models (\ref{eq:MC_L1}) and (\ref{eq:MC_huber})
are seen to be effective in some problems, there is not much reason for a ``convex'' relaxation of the $\ell_0$ pseudo-norm\footnote{The cardinality of non-zero entries, which strictly speaking is not a norm.}, since the rank constraint imposed by matrix factorization is already highly non-convex. Empirically, we find that $\ell_1$-norm penalty offers poor denoising ability to dense noise and also suffers from ``Robin Hood'' attribute. Comparison with this class of methods will be given later in Section~\ref{sec:experiment}, which shows that our method can better handle noise and corruptions.

The practical advantage of $\ell_0$ over $\ell_1$ penalty is well illustrated in \citet{DRMF}, where \citeauthor{DRMF} proposed an $\ell_0$-based robust matrix factorization method which deals with corruptions and a given rank constraint. Our work is similar to \citet{DRMF} in that we both eschew the convex surrogate $\ell_1$-norm in favor of using the $\ell_0$-norm directly. However, our approach treats both corruptions and missing data. More importantly, our treatment of the problem is different and it results in a convergence guarantee that covers the algorithm of \citet{DRMF} as a special case; this will be further explained in Section~\ref{sec:PARSuMi}.

\begin{figure*}[ht]
  \centering
  \includegraphics[width=0.63\linewidth]{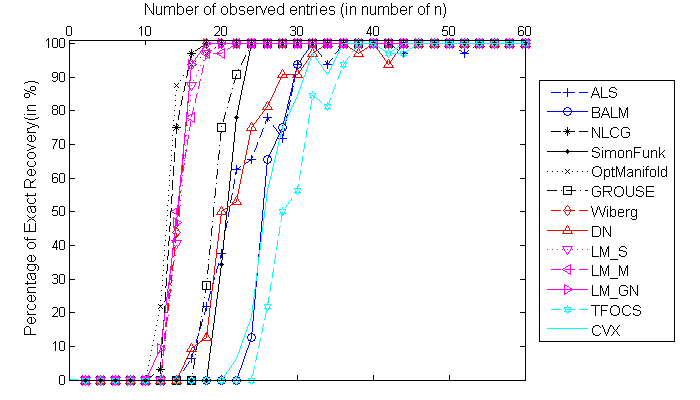}
  \includegraphics[width=0.35\linewidth]{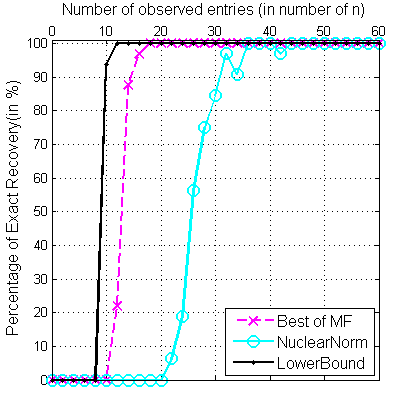}\\
  \caption{Exact recovery with \textbf{increasing number of random observations}. Algorithms (random initialization) are evaluated on 100 randomly generated rank-$4$ matrices of dimension $100\times100$. The number of observed entries increases from $0$ to $50n$. To account for small numerical error, the result is considered ``exact recovery'' if the RMSE of the recovered entries is smaller than $10^{-3}$. On the left, CVX \citep{cvx} and TFOCS \citep{tfocs}  (in cyan) solves the nuclear norm based matrix completion \eqref{eq:MC_nuc}, everything else aims to solve matrix factorization \eqref{eq:MC_L2}. On the right, the best solution of MF across all algorithms is compared to the CVX solver for nuclear norm minimization (solved with the highest numerical accuracy) and a lower bound (below the bound, the number of samples is smaller than $r$ for at least a row or a column).}\label{fig:mf_exp1}
\end{figure*}

\subsection{Emerging theory for matrix factorization}

As we mentioned earlier, a fundamental drawback of matrix factorization methods
for low rank matrix completion is the lack of proper theoretical foundation. However, thanks to the better understanding of low-rank structures nowadays, some theoretical analysis of this problem slowly emerges.
This class of methods are essentially designed for solving
noisy matrix completion problem with an explicit rank
constraint, i.e.,
\begin{equation}\label{eq:MC_rank_constraint}
 \underset{W}{\text{min}}
\left\{
 \frac{1}{2}\left \Vert \mathcal{P}_{\Omega}(W - \widehat{W}) \right \Vert^2
\,\middle|\, \mathrm{rank}(W)\leq r
\right\}.
\end{equation}
From a combinatorial-algebraic perspective, \citet{kiraly2012algebraic} provided a sufficient and necessary condition on the existence of an unique rank-$r$ solution to \eqref{eq:MC_rank_constraint}. It turns out that if the low-rank matrix is \emph{generic}, then
\emph{unique completability} depends only on the support of the observations $\Omega$.
This suggests that the incoherence and random sampling assumptions typically required by
various nuclear norm minimization methods may limit the portion of problems solvable by the latter to only a small subset of those solvable by matrix factorization methods.

Around the same time, \citet{wang2012stability} studied the stability of matrix factorization under arbitrary noise. They obtained a stability bound for the optimal solution of \eqref{eq:MC_rank_constraint} around the ground truth, which turns out to be
better than the corresponding bound for nuclear norm minimization in \citet{CandesNoise} by a scale of $\sqrt{\min{(m,n)}}$ (in Big-O sense). The study however bypassed the practical problem of how to obtain the global optimal solution for this non-convex problem.

This gap is partially closed by the recent work of  \citet{jain2012alt_min_global}, in which the global minimum of \eqref{eq:MC_rank_constraint} can be obtained up to an accuracy $\epsilon$ with $O(\log{1/\epsilon})$ iterations using a slight variation of the ALS scheme. The guarantee requires the observation to be noiseless, sampled uniformly at random and the underlying subspace of $W$ needs to be incoherent---basically all assumptions in the convex approach---yet still requires slightly more observations than that for nuclear norm minimization. It does not however touch on when the algorithm is able to find the global optimal solution when the data is noisy.
Despite not achieving stronger theoretical results nor under weaker assumptions than the
convex relaxation approach,
this is the first guarantee of its kind for matrix factorization. Given its more effective empirical performance, we believe that there is great room for improvement on the theoretical front. A secondary contribution of this paper is to find the potentially ``right'' algorithm or rather constituent elements of algorithm for theoreticians to look deeper into.


\section{Numerical evaluation of matrix factorization methods}\label{sec:numerical_MF}

To better understand the performance of different methods, we compare the following attributes quantitatively for all three categories of approaches that solve \eqref{eq:MC_L2} or
\eqref{eq:MC_L22}\footnote{As a reference, we also included nuclear norm minimization that solve \eqref{eq:MC_nuc} where applicable.}:
\begin{description}
  \item[\textbf{Sample complexity}]
Number of samples required for exact recovery of random uniformly sampled observations in random low-rank matrices, an index typically used to quantify the performance of nuclear norm based matrix completion.
  \item[\textbf{Hits on global optimal[synthetic]}] The proportion of random initializations that lead to the global optimal solution on random low rank matrices with (a) increasing Gaussian noise, (b) exponentially decaying singular values.
  \item[\textbf{Hits on global optimal[SfM]}] The proportion of random initializations that lead to the global optimal solution on the Oxford Dinosaur sequence \citep{Damped_Newton_2005} used in the SfM community.
\end{description}

\begin{figure}[tb]
  \centering
  \includegraphics[width=\linewidth,height=0.67\linewidth]{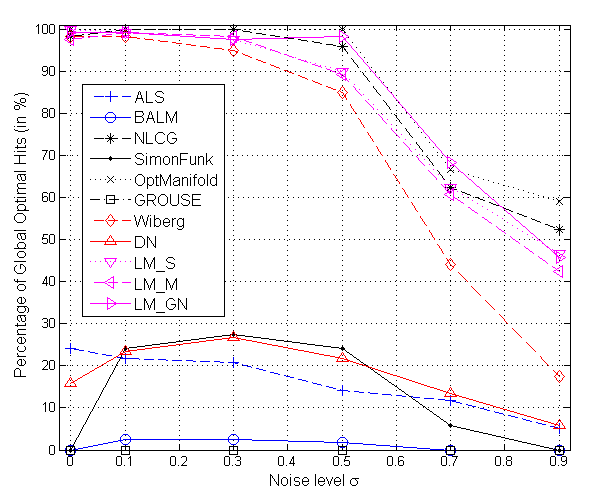} \\
  \caption{Percentage of hits on global optimal with \textbf{increasing level of noise}. Five rank-$4$ matrices are generated by multiplying two standard Gaussian matrices of dimension $40\times 4$ and $4\times 60$. 30\% of entries are uniformly picked as observations with additive Gaussian noise $N(0,\sigma)$. 24 different random initialization are tested for each matrix. The ``global optimal'' is assumed to be the solution with lowest objective value across all testing algorithm and all initializations.}\label{fig:mf_exp2a}
\end{figure}
\begin{figure}[tb]
  \centering
  \includegraphics[width=\linewidth, height=0.67\linewidth]{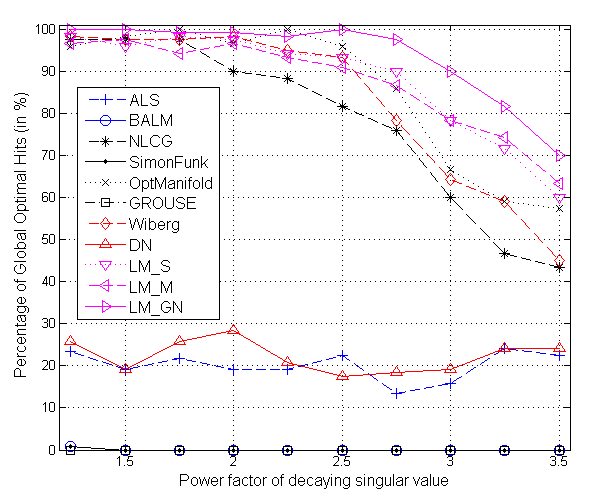}\\
  \caption{Percentage of hits on global optimal for \textbf{ill-conditioned low-rank matrices}. Data are generated in the same way as in Fig.~\ref{fig:mf_exp2a} with $\sigma=0.05$, except that we further take SVD and rescale the $i^{th}$ singular value according to $1/\alpha^i$. The Frobenious norm is normalized to be the same as the original low-rank matrix. The exponent $\alpha$ is given on the horizontal axis.}\label{fig:mf_exp2b}
\end{figure}
\begin{figure}[tb]
  \centering
  \includegraphics[width=\linewidth]{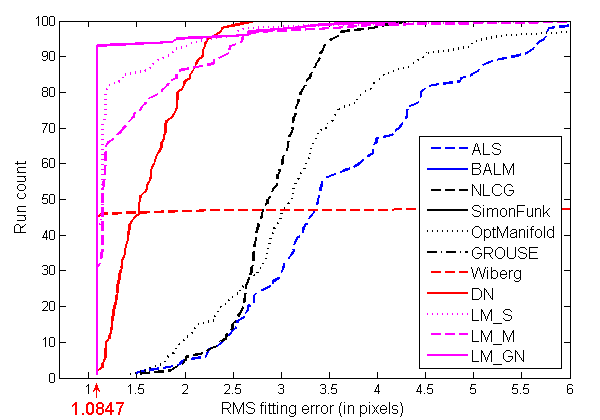}\\
  \caption{Cumulative histogram on the pixel RMSE for 100 randomly initialized runs
 conducted for each algorithm on Dinosaur sequence. The curve summarizes how many runs of each algorithm corresponds to the global optimal solution (with pixel RMSE 1.0847) on the horizontal axis. Note that the input pixel coordinates are normalized to between $[0,1]$ for the experiments, but to be comparable with \citet{Damped_Newton_2005}, the objective value is scaled back to the original size.}\label{fig:mf_exp3}
\end{figure}

%

The sample complexity experiment in Fig.~\ref{fig:mf_exp1} shows that the best performing matrix factorization algorithm attains exact recovery with the number of observed entries at roughly 18\%, while CVX for nuclear norm minimization needs roughly 36\% (even worse for numerical solvers such as TFOCS). This seems to imply that the sample requirement for MF is fundamentally smaller than that of nuclear norm minimization. As MF assumes known rank of the underlying matrix while nuclear norm methods do not, the results we observe are quite reasonable. In addition, among different MF algorithms, some perform much better than others. The best few of them achieve something close to the lower bound\footnote{The lower bound is given by the percentage of randomly generated data that have at least one column or row having less than $r$ samples. Clearly, having at least $r$ samples for every column and row is a necessary condition for exact recovery.}. This corroborates our intuition that MF is probably a better choice for problems with known rank.

From Fig.~\ref{fig:mf_exp2a}~and~\ref{fig:mf_exp2b}, we observe that the following classes of algorithms, including LM\_X series \citep{Subspace_ChenPei_2008}, Wiberg \citep{WibergL2}, Non-linear Conjugate Gradient method (NLCG) \citep{srebro2003weighted} and the curvilinear search on Stiefel manifold\\ (OptManifold \citep{yin2013orthogonality}) perform significantly better than others in reaching the global optimal solution despite their non-convexity. The percentage of global optimal hits from random initialization is promising even when the observations are highly noisy or when the condition number of the underlying matrix is very large\footnote{When $\alpha=3.5$ in Fig.~\ref{fig:mf_exp2b}, $r^{th}$ singular value is almost as small as the spectral norm of the input noise.}.

The common attribute of the four algorithms is that they are all based on the
model \eqref{eq:MC_L22} which parameterizes the factor $V$ as a function of $U$ and then optimizes over $U$ alone.
This parameterization essentially reduces the problem to finding the best subspace that fits the data. What is different between them is the way they update the solution in each iteration. OptManifold and NLCG adopt a Strong Wolfe line search that allows the algorithm to take large step sizes,
while the second order methods approximate each local neighborhood with a convex quadratic function and jump directly to the minimum of the approximation.
This difference 
appears to matter tremendously on the SfM experiment (see Fig.~\ref{fig:mf_exp3}). We observe that only the second order methods achieve global optimal solution frequently, whereas the Strong Wolfe line search adopted by both OptManifold and NLCG does not seem to help much on the real data experiment like it did in simulation with randomly generated data. Indeed, neither approach reaches the global optimal solution even once in the hundred runs, though they are rather close in quite a few runs. Despite these close runs, we remark that in applications like SfM, it is important to actually reach the global optimal solution. Due to the large amount of missing data in the matrix, even slight errors in the sampled entries can cause the recovered missing entries to go totally haywire with a seemingly good local minimum (see Fig.~\ref{fig:local_vs_global_min}). We thus refrain from giving any credit to local minima even if the $\mathrm{RMSE}_{\rm visible}$ error (defined in \eqref{eq:RMS_visible}) is very close to that of the global minimum.
\begin{equation} \label{eq:RMS_visible}
\mathrm{RMSE}_{\rm visible} := \frac{\Vert {\cal P}_{\Omega}(W_{\mathrm{recovered}} - \widehat{W}) \Vert}{\sqrt{|\Omega|}}.
\end{equation}
\begin{figure}[ht]
 \centering
 \subfigure[\scriptsize{Local minimum}]{
 \includegraphics[width=0.46\linewidth]{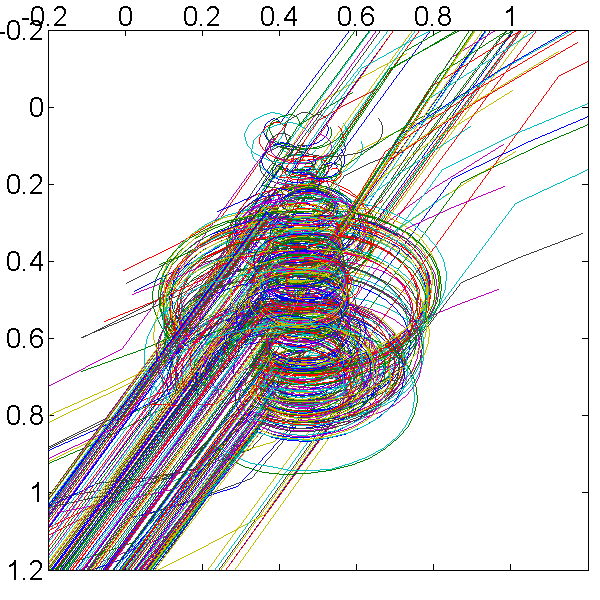}
 \label{fig:Traj_DN_local}
 }
 \subfigure[\scriptsize{Global minimum}]{
 \includegraphics[width=0.46\linewidth]{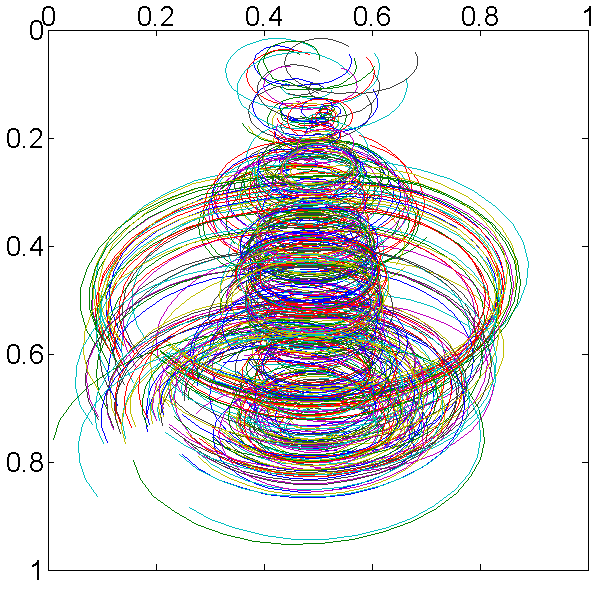}
 \label{fig:Traj_DN_global}
 }
 \caption{\small{Comparison of the feature trajectories
corresponding to a local minimum and global minimum of \eqref{eq:MC_L2}, given partial uncorrupted observations.
Note that $\mbox{RMSE}_{\rm visible}=1.1221$pixels in \subref{fig:Traj_DN_local} and  $\mbox{RMSE}_{\rm visible}=1.0847$pixels in \subref{fig:Traj_DN_global}. The latter is precisely the reported global minimum in \citet{WibergL2,Damped_Newton_2005} and \citet{Subspace_ChenPei_2008}. Despite the tiny difference in
$\mbox{RMSE}_{\rm visible}$, the filled-in values for missing data in \subref{fig:Traj_DN_local} are far off.}}
 \label{fig:local_vs_global_min}
\end{figure}

Another observation is that LM\_GN seems to work substantially better than other second-order methods with subspace or manifold parameterization, reaching global minimum 93 times out of the 100 runs. Compared to LM\_S and LM\_M, the only difference is the use of Gauss-Newton approximation of the Hessian. According to the analysis in \citet{chen2011hessian}, the Gauss-Newton Hessian provides the only non-negative convex quadratic approximation that preserves the so-called ``zero-on-$(n-1)$-D'' structure of a class of nonlinear least squares problems, into which \eqref{eq:MC_L2} can be formulated. Compared to the Wiberg algorithm that also uses Gauss-Newton approximation, the advantage of LM\_GN is arguably the better global convergence due to the augmentation of the LM damping factor.
Indeed, as we verify in the experiment, Wiberg algorithm fails to converge at all in most of its failure cases. The detailed comparisons of the second order methods and their running time on the Dinosaur sequence are summarized in Table~\ref{tab:MC_comparison}. Part of the results replicate that in \citet{Subspace_ChenPei_2008}; however, Wiberg algorithm and LM\_GN have not been explicitly compared previously. It is clear from the Table that LM\_GN is not only better at reaching the optimal solution, but also computationally cheaper than other methods which require explicit computation of the Hessian\footnote{Wiberg algoirthm takes longer time mainly because it sometimes does not converge and exhausts the maximum number of iterations.}.
\begin{table*}[ht]
\centering
\begin{tabular}{|l|l|l|l|l|l|l}
  \hline
   & DN & Wiberg & LM\textunderscore S & LM\textunderscore M & LM\textunderscore GN \\
   \hline
  No. of hits at global min. & 2 & 46 & 42 & 32 & 93 \\
  \hline
  No. of hits on stopping condition & 75 & 47 & 99 & 93 & 98\\
  \hline
  Average run time(sec) & 324 & 837 & 147 & 126 & 40 \\
  \hline
  No. of variables& (m+n)r & (m-r)r & mr & (m-r)r & (m-r)r \\
  \hline
  Hessian & Yes & Gauss-Newton & Yes & Yes & Gauss-Newton \\
  \hline
  LM/Trust Region & Yes & No & Yes & Yes & Yes \\
  \hline
  Largest Linear system to solve & $[(m+n)r]^2$ & $|\Omega|\times mr$ & $mr\times mr$ & $[(m-r)r]^2 $ & $[(m-r)r]^2$\\
  \hline
\end{tabular}
\caption{\small{Comparison of various second order matrix factorization algorithms}}
\label{tab:MC_comparison}
\end{table*}

To summarize the key findings of our experimental evaluation, we observe that: (a) the fixed-rank MF formulation requires less samples than nuclear norm minimization to achieve exact recovery; (b) the compact parameterization on the subspace, strong line search or second order update help MF algorithms in avoiding local minima in high noise, poorly conditioned matrix setting; (c) LM\_GN with Gauss-Newton update is able to reach the global minimum
with a very high success rate on a challenging real SfM data sequence.


\section{(Partially Majorized) Proximal Alternating Robust Subspace Minimization for \eqref{eq:MC_Formulation}}\label{sec:PARSuMi}
Our proposed PARSuMi method
for problem \eqref{eq:MC_Formulation}
works in two stages.
It first obtains a good initialization
from an efficient convex relaxation of \eqref{eq:MC_Formulation}, which will be
described in Section \ref{sec:Convex_relaxation}. This is followed by the minimization of the low rank matrix $W$ and the sparse matrix $E$ alternatingly until convergence. The efficiency of our PARSuMi method depends on the fact that the two inner minimizations of $W$ and $E$ admit efficient solutions, which will be derived in Sections \ref{sec:MC} and \ref{sec:outlier_removal} respectively. Specifically, in step~$k$, we compute $W^{k+1}$ from either
\begin{equation}\label{eq:MC_W}
\begin{aligned}
& \underset{W}{\text{min}}
\; \frac{1}{2}\|H\circ(W-\widehat{W}+E^k)\|^2
+  \frac{\beta_1}{2}\norm{H\circ(W-W^k)}^2\\
& \text{subject to}
\quad \mathrm{rank}(W)\leq r,
\end{aligned}
\end{equation}
or its quadratic majorization\footnote{We will explain this further shortly.}, and $E^{k+1}$ from
\begin{equation}\label{eq:MC_E}
\begin{aligned}
& \underset{E}{\text{min}}
\; \frac{1}{2}\norm{H\circ(W^{k+1}-\widehat{W}+E)}^2 +
\frac{\beta_2}{2}\norm{E-E^k}^2\\
& \text{subject to}
\quad \|E\|_0\leq N_0, \; \norm{E} \leq K_E,\; E\in \R^{m\times n}_\Omega,
\end{aligned}
\end{equation}
where $H$ is defined as in \eqref{eq:H}.
Note that the above iteration is different from applying a direct alternating minimization of \eqref{eq:MC_Formulation}. We have added the proximal regularization terms
$\norm{H\circ(W-W^k)}^2$ and $\|E-E^k\|^2$ to make the
objective functions in the subproblems coercive and hence ensuring that
$W^{k+1}$ and $E^{k+1}$ are well defined. As is shown in \citet{attouch2010proximal}, the
proximal terms are critical to ensure the critical point convergence of the sequence. 
In addition, we have added a quadratic majorization safeguard step when computing $W^{k+1}$. This is to safeguard the convergence even if our computed $W^{k+1}$ fails to be a global minimizer of \eqref{eq:MC_W}. 
Further details of the algorithm and the proof of its convergence are provided in the subsequent sections.


\subsection{Computation of $W^{k+1}$ in \eqref{eq:MC_W}}\label{sec:MC}

Our solution for \eqref{eq:MC_W} consists of two steps. We first transform the rank-constrained minimization \eqref{eq:MC_W} into an equivalent subspace fitting problem, then solve the new formulation using LM\_GN.

Motivated by the findings in Section \ref{sec:numerical_MF} where
the most successful algorithms for solving \eqref{eq:MC_rank_constraint}
are based on the formulation \eqref{eq:MC_L22}, we will  now derive a similar equivalent reformulation  of  \eqref{eq:MC_W}.
Our reformulation of \eqref{eq:MC_W} is motivated by
the $N$-parametrization of \eqref{eq:MC_rank_constraint} due to \citet{Subspace_ChenPei_2008}, who considered the task of matrix completion as finding the best subspace to fit the partially observed data. In particular, Chen proposes to solve \eqref{eq:MC_rank_constraint} using
\begin{equation} \label{eq:Subspace_MC_objective_function}
\underset{N}{\text{min}} \left\{
  \frac{1}{2}\sum\limits_i \hat{w}_i^T(I-\mathbb{P}_i)\hat{w}_i
\middle|  N^TN=I \right\}
\end{equation}
where $N$ is a $m\times r$ matrix whose column space is the underlying subspace to be reconstructed, $N_i$ is $N$ but with those rows corresponding to the missing entries in column $i$ removed. $\mathbb{P}_i=N_i N_i^+$
is the projection onto $\mathrm{span}(N_i)$ with $N_i^+$ being the Moore-Penrose
pseudo inverse of $N_i$, and the objective function minimizes the sum of squares distance between $\hat{w}_i$ to $\mathrm{span}(N_i)$, where $\hat{w}_i$ is the vector of observed entries in the $i^{th}$ column of $\widehat{W}$.

\subsubsection{N-parameterization of the subproblem \eqref{eq:MC_W}}

First define the matrix $\overline{H}\in \R^{m\times n}$ as follows:
\begin{eqnarray}\label{eq:H_bar}
 \overline{H}_{ij} = \left\{ \begin{array}{ll}
 \sqrt{1+\beta_1} & \mbox{if $(i,j)\in \Omega$}
  \\[5pt]
 \sqrt{ \epsilon+\epsilon \beta_1} &\mbox{if $(i,j)\not\in \Omega$.}
\end{array}\right.
\end{eqnarray}
Let $B^k\in \R^{m\times n}$ be the matrix defined by
\begin{eqnarray}\label{eq:Bk}
 B^k_{ij} = \left\{ \begin{array}{ll}
  \frac{1}{\sqrt{1+\beta_1}}(\widehat{W}_{ij} - E^k_{ij} + \beta_1 W^k_{ij}) & \mbox{if $(i,j)\in \Omega$}
  \\[5pt]
 \frac{\epsilon\beta_1}{\sqrt{\epsilon+\epsilon\beta_1}}\, W^k_{ij} &\mbox{if $(i,j)\not\in \Omega$.}
\end{array}\right.
\end{eqnarray}
Define the diagonal matrices $\mathbb{D}_{i} \in \R^{m\times m}$ to be
\begin{eqnarray}
   \mathbb{D}_{i} = {\rm diag}(\overline{H}_i), \quad i=1,\dots,n
\end{eqnarray}
where $\overline{H}_i$ is the $i$th column of $\overline{H}$.
It turns out that the $N$-parameterization for the regularized problem \eqref{eq:MC_W} has
a similar form as \eqref{eq:Subspace_MC_objective_function}, as shown below.

\begin{proposition}[Equivalence of subspace parameterization]\label{prop:equivalence}
Let $\mathbb{Q}_i(N) = \mathbb{D}_i N(N^T \mathbb{D}_i^2N)^{-1}N^T\mathbb{D}_i$,
which is the $m\times m$ projection matrix onto the column space of $\mathbb{D}_i N$. The problem \eqref{eq:MC_W} is
equivalent to the following problem:
\begin{equation} \label{eq:Subspace_MC_new}
\begin{aligned}
& \underset{N}{\text{min}}
\quad f(N) := \frac{1}{2}\sum_{i=1}^n \norm{B^k_i -\mathbb{Q}_i(N) B^k_i}^2\\
& \text{subject to}
\quad N^TN=I, \; N\in \R^{m\times r}
\end{aligned}
\end{equation}
where  $B^k_i$ is the $i$th columns of  $B^k$.
If $N_*$ is an optimal solution of \eqref{eq:Subspace_MC_new}, then $W^{k+1}$, whose
columns are defined
by
\begin{eqnarray}  \label{eq:W}
 W^{k+1}_i = \mathbb{D}_i^{-1}\mathbb{Q}_i(N_*) B^k_i,
\end{eqnarray}
is an optimal solution of \eqref{eq:MC_W}.
\end{proposition}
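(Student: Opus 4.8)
The plan is to reduce \eqref{eq:MC_W} to a weighted least-squares problem column by column, absorb the proximal term by completing the square, and then recognize the resulting problem as subspace fitting once the rank constraint is parameterized by an orthonormal basis. Because everything downstream hinges on one clean algebraic collapse, I would organize the argument around that identity.

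First I would exploit the fact that both the Frobenius norm and the Hadamard product with $H$ separate across columns, so the objective of \eqref{eq:MC_W} splits as $\sum_i$ of the contribution from the $i$th column $w_i$ of $W$. For each fixed entry $(i,j)$ the two quadratics $\tfrac12 H_{ij}^2(w_{ij}-\widehat{W}_{ij}+E^k_{ij})^2$ and $\tfrac{\beta_1}{2}H_{ij}^2(w_{ij}-W^k_{ij})^2$ are quadratics in the single scalar $w_{ij}$; completing the square gives a leading coefficient $\tfrac12 H_{ij}^2(1+\beta_1)$ times $(w_{ij}-v_{ij})^2$, plus a term independent of $w_{ij}$, where $v_{ij}$ is the vertex. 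The key bookkeeping step is to verify that $H_{ij}^2(1+\beta_1)=\overline{H}_{ij}^2$ in both the $\Omega$ and $\overline{\Omega}$ cases (using $H_{ij}^2=1$ resp. $\epsilon$), and that $\overline{H}_{ij}\,v_{ij}$ equals exactly $B^k_{ij}$ as defined in \eqref{eq:Bk}---again checking the two cases separately and using that $\widehat{W}$ and $E^k$ vanish off $\Omega$. Discarding the $w_{ij}$-independent constant, which does not change the minimizer, the per-entry term becomes $\tfrac12(\overline{H}_{ij}w_{ij}-B^k_{ij})^2$, so the whole objective is $\tfrac12\sum_i\norm{\mathbb{D}_i w_i - B^k_i}^2$.

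Next I would handle the rank constraint. Writing the column space of any rank-$\le r$ matrix inside the span of an orthonormal $N\in\R^{m\times r}$ with $N^TN=I$, every feasible $W$ has $w_i=Nc_i$ for coefficient vectors $c_i\in\R^r$, and conversely any such $W$ has rank $\le r$; thus optimizing over $\{W:\mathrm{rank}(W)\le r\}$ is the same as jointly optimizing over $N$ on the Stiefel manifold and the free coefficients $c_i$. For fixed $N$, minimizing $\tfrac12\norm{\mathbb{D}_i N c_i - B^k_i}^2$ over $c_i$ is an unconstrained least-squares problem; since $\mathbb{D}_i$ is a positive-definite diagonal matrix, $\mathbb{D}_i N$ has full column rank $r$, so $N^T\mathbb{D}_i^2 N$ is invertible and the optimal coefficient is $c_i=(N^T\mathbb{D}_i^2N)^{-1}N^T\mathbb{D}_iB^k_i$. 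Substituting back, the residual is $(I-\mathbb{Q}_i(N))B^k_i$ with $\mathbb{Q}_i(N)$ exactly the projector in the statement, so the reduced objective is $\tfrac12\sum_i\norm{B^k_i-\mathbb{Q}_i(N)B^k_i}^2=f(N)$, establishing equivalence to \eqref{eq:Subspace_MC_new}.

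Finally, for the reconstruction of the minimizer, the optimal column is $w_i=Nc_i=N(N^T\mathbb{D}_i^2N)^{-1}N^T\mathbb{D}_iB^k_i$, which equals $\mathbb{D}_i^{-1}\mathbb{Q}_i(N_*)B^k_i$ after inserting $\mathbb{D}_i^{-1}\mathbb{D}_i=I$, recovering formula \eqref{eq:W}. I expect the only genuine obstacle to be the completing-the-square bookkeeping---verifying the two identities $H_{ij}^2(1+\beta_1)=\overline{H}_{ij}^2$ and $\overline{H}_{ij}\,v_{ij}=B^k_{ij}$ on $\Omega$ and $\overline{\Omega}$ separately; everything afterward is the standard projection/pseudo-inverse argument for least-squares subspace fitting, together with the routine checks that the orthonormal parameterization exactly covers the rank constraint and that $N^T\mathbb{D}_i^2N$ is invertible.
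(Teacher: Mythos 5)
Your proposal is correct and follows essentially the same route as the paper: absorb the proximal term into a single reweighted least-squares objective $\tfrac12\sum_i\|\mathbb{D}_i w_i - B^k_i\|^2$ (the paper states this step as "some algebraic manipulations" where you spell out the completing-the-square and the two case checks for $\overline{H}$ and $B^k$), then parameterize the rank constraint as $W=NC$ with $N^TN=I$, eliminate $C$ column-wise via the normal equations using invertibility of $N^T\mathbb{D}_i^2N$, and read off the projector form of the residual and the reconstruction \eqref{eq:W}. No gaps.
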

\begin{proof}
We can show by some algebraic manipulations that
the objective function in \eqref{eq:MC_W} is equal to
\begin{eqnarray*}
  \frac{1}{2}\norm{\overline{H}\circ W-B^k}^2
 + \mbox{constant}
\end{eqnarray*}
Now note that we have
\begin{eqnarray}
&&\{ W \in \R^{m\times n}\mid \mbox{rank}(W) \leq r \} \nonumber \\
&&=
\{ NC \mid N\in \R^{m\times r}, C \in \R^{r\times n}, N^TN= I\}.
\end{eqnarray}
Thus the problem \eqref{eq:MC_W} is equivalent to
\begin{eqnarray}
 \min_N \{ f(N) \mid  N^TN = I,  N\in \R^{m\times r}\}
\end{eqnarray}
where
\begin{eqnarray*}
 f(N) :=
 \min_{C}  \frac{1}{2}\norm{\overline{H}\circ(NC)-B^k}^2.
\end{eqnarray*}
To derive \eqref{eq:Subspace_MC_new} from the above, we need to obtain $f(N)$ explicitly as a function of $N$. For a given $N$, the unconstrained minimization problem over $C$ in $f(N)$
has a strictly convex objective function in $C$, and hence the unique global minimizer
satisfies the following optimality condition:
\begin{eqnarray}
 N^T ((\overline{H}\circ\overline{H})\circ(NC)) =
N^T (\overline{H}\circ B^k).
\end{eqnarray}
By considering the $i$th column $C_i$ of $C$, we get
\begin{eqnarray}
 N^T \mathbb{D}_i^2 N C_i = N^T \mathbb{D}_i B^k_i, \quad i=1,\dots,n.
\end{eqnarray}
Since $N$ has full column rank and $D^i$ is positive definite, the coefficient
matrix in the above equation is nonsingular, and hence
\begin{eqnarray*}
 C_i = (N^T \mathbb{D}_i^2 N)^{-1} N^T \mathbb{D}_i B^k_i.
\end{eqnarray*}
Now with the optimal $C_i$ above for the given $N$, we can show after some algebra
manipulations that
$f(N)$ is given as in \eqref{eq:Subspace_MC_new}.
 \qed
\end{proof}

We can see that when $\beta_1\downarrow 0$ in \eqref{eq:Subspace_MC_new},
then  the problem reduces to
\eqref{eq:Subspace_MC_objective_function},
with the latter's $\hat{w}_i$ appropriately modified to take into account of
$E^k$. Also, from the above proof, we see that the $N$-parameterization reduces the feasible region of $W$ by restricting $W$ to only those potential optimal solutions among the set of $W$
satisfying the expression in \eqref{eq:W}.
This seems to imply that it is not only equivalent but also advantageous to optimize over $N$ instead of $W$. While we have no theoretical justification of this conjecture,
it is consistent with our experiments in Section~\ref{sec:numerical_MF} which show
the superior performance of those algorithms using subspace parameterization in
finding global minima and vindicates the design motivations of the series of LM\_X algorithms in \citet{Subspace_ChenPei_2008}.

\subsubsection{LM\_GN updates}

Now that we have shown how to handle the regularization term and validated the equivalence of the transformation, the steps to solve \eqref{eq:MC_W} essentially generalize those of LM\_GN (available in Section~3.2 and Appendix~A of \citet{chen2011hessian}) to account for the general mask $H$. The derivations of the key formulae and their meanings are given in this section.


In general, Levenberg-Marquadt solves the non-linear problem with the following sum-of-squares objective function
\begin{equation}\label{eq:nonlinear_LS}
  \mathcal{L}(x)= \frac{1}{2}\sum_{i=1:n}\norm{y_i-f_i(x)}^2,
\end{equation}
by iteratively updating $x$ as follows:
$$x\leftarrow x+(J^TJ+\lambda I)^{-1}J^T\mathbf{r},$$
where $J=[J_1;\dots;J_n]$ is the Jacobian matrix and $J_i$ is the Jacobian matrix
of $f_i$; $\mathbf{r}$ is the concatenated vector of residual $r_i := y_i-f_i(x)$ for all $i$, and $\lambda$ is the damping factor that interpolates between Gauss-Newton update and gradient descent. We may also interpret the iteration as a Damped Newton method with a first order approximation of the Hessian matrix using $J^TJ$. 

Note that the objective function of \eqref{eq:Subspace_MC_new} can be expressed in the form of \eqref{eq:nonlinear_LS} by taking $x:=\mathrm{vec}(N)$, data $y_i:=B^k_i$, and function
$$f_i(x:=\mathrm{vec}(N))=\QQ_i (N) B^k_i = \QQ_iy_i$$

\begin{proposition} \label{prop:Jacobian}
Let $\mathcal{T} \in \R^{mr\times mr}$ be the permutation matrix such
that ${\rm vec}(X^T) = \mathcal{T} {\rm vec}(X)$ for any $X\in\R^{m\times r}$.
The Jacobian of $f_i(x) = \mathbb{Q}_i(N) y_i$ is given as follows:
\begin{eqnarray}
  J_i (x) =(\AA_i^T y_i)^T \otimes ((I-\mathbb{Q}_i)\DD_i)
+\; [ (\DD_i r_i)^T\otimes \AA_i] \mathcal{T}. \quad\;
\label{eq:Ji}
\end{eqnarray}
Also $J^TJ = \sum_{i=1}^n J_i^TJ_i$, $J^Tr = \sum_{i=1}^n J_i^T r_i$, where
\begin{eqnarray}
 J_i^T J_i &=& (\AA_i^T y_iy_i^T \AA_i) \otimes (\DD_i(I-\QQ_i)\DD_i)
  \nonumber \\[5pt]
&&+\, \mathcal{T}^T [ (\DD_i r_i r_i^T \DD_i) \otimes (\AA_i^T \AA_i)]
\mathcal{T}  \label{eq:JTJ}
\\[5pt]
J_i^Tr_i &=& {\rm vec}(\DD_i r_i (\AA_i^T y_i)^T).
\label{eq:JTr}
\end{eqnarray}
In the above, $\otimes$ denotes the Kronecker product.
\end{proposition}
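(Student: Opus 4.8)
The plan is to treat $f_i(x)=\mathbb{Q}_i(N)y_i$ as a composition: $\mathbb{Q}_i$ is the orthogonal projector onto the range of $\mathbb{D}_i N$, so the whole difficulty is differentiating a projection matrix with respect to $N$ and then converting the matrix differential into a Jacobian with respect to $x=\mathrm{vec}(N)$. Throughout I would write $\mathbb{A}_i=\mathbb{D}_i N(N^T\mathbb{D}_i^2 N)^{-1}$, so that $\mathbb{A}_i^T$ is the Moore--Penrose pseudo-inverse of $\mathbb{D}_i N$, $\mathbb{Q}_i=\mathbb{A}_i N^T\mathbb{D}_i$, and $r_i=(I-\mathbb{Q}_i)y_i$. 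The two workhorse identities are the vectorization rules $\mathrm{vec}(PXC)=(C^T\otimes P)\mathrm{vec}(X)$ and $\mathrm{vec}(dN^T)=\mathcal{T}\,\mathrm{vec}(dN)$; the latter is exactly why the permutation matrix $\mathcal{T}$ appears in the second half of \eqref{eq:Ji}.

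First I would compute the matrix differential of the projector. Setting $A=\mathbb{D}_i N$ with $dA=\mathbb{D}_i\,dN$, the standard differential-of-orthogonal-projector identity gives $d\mathbb{Q}_i=(I-\mathbb{Q}_i)(dA)A^+ +(A^+)^T(dA)^T(I-\mathbb{Q}_i)$, which in the present notation reads $d\mathbb{Q}_i=(I-\mathbb{Q}_i)\mathbb{D}_i(dN)\mathbb{A}_i^T+\mathbb{A}_i(dN)^T\mathbb{D}_i(I-\mathbb{Q}_i)$. Multiplying by $y_i$ and using $\mathbb{D}_i(I-\mathbb{Q}_i)y_i=\mathbb{D}_i r_i$ yields $df_i=(I-\mathbb{Q}_i)\mathbb{D}_i(dN)(\mathbb{A}_i^T y_i)+\mathbb{A}_i(dN)^T(\mathbb{D}_i r_i)$. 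Vectorizing the first term with $P=(I-\mathbb{Q}_i)\mathbb{D}_i$ and the column vector $\mathbb{A}_i^T y_i$ produces $\big((\mathbb{A}_i^T y_i)^T\otimes(I-\mathbb{Q}_i)\mathbb{D}_i\big)dx$, and vectorizing the second term, after replacing $\mathrm{vec}(dN^T)$ by $\mathcal{T}\,dx$, produces $\big((\mathbb{D}_i r_i)^T\otimes\mathbb{A}_i\big)\mathcal{T}\,dx$. Reading off the coefficient of $dx$ gives \eqref{eq:Ji}.

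For the remaining formulas I would collect the projector identities $(I-\mathbb{Q}_i)^2=I-\mathbb{Q}_i$, $\mathbb{Q}_i\mathbb{A}_i=\mathbb{A}_i$ (equivalently $(I-\mathbb{Q}_i)\mathbb{A}_i=0$), $\mathbb{A}_i^T(I-\mathbb{Q}_i)=0$, and $\mathbb{A}_i^T\mathbb{A}_i=(N^T\mathbb{D}_i^2 N)^{-1}$, all of which follow directly from $\mathbb{Q}_i=\mathbb{A}_i N^T\mathbb{D}_i$ and $N^T\mathbb{D}_i^2 N(N^T\mathbb{D}_i^2 N)^{-1}=I$. Writing $J_i=P_1+P_2$ for the two Kronecker blocks in \eqref{eq:Ji} and expanding $J_i^TJ_i=P_1^TP_1+P_1^TP_2+P_2^TP_1+P_2^TP_2$, the mixed-product rule $(A\otimes B)(C\otimes D)=AC\otimes BD$ together with $(I-\mathbb{Q}_i)^2=I-\mathbb{Q}_i$ collapses $P_1^TP_1$ and $P_2^TP_2$ into the two terms of \eqref{eq:JTJ}, while both cross terms vanish because they carry the factor $(I-\mathbb{Q}_i)\mathbb{A}_i=0$. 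For $J_i^Tr_i$ I would use $\langle J_i^Tr_i,dx\rangle=\langle r_i,df_i\rangle$: the $\mathbb{A}_i(dN)^T(\mathbb{D}_i r_i)$ contribution dies because $\mathbb{A}_i^T r_i=\mathbb{A}_i^T(I-\mathbb{Q}_i)y_i=0$, and the surviving term rearranges by cyclicity of the trace into $\langle \mathbb{D}_i r_i(\mathbb{A}_i^T y_i)^T, dN\rangle$, i.e.\ \eqref{eq:JTr}. The aggregate identities $J^TJ=\sum_i J_i^TJ_i$ and $J^Tr=\sum_i J_i^Tr_i$ then follow immediately from the block-row stacking $J=[J_1;\dots;J_n]$.

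The main obstacle is the differential of the projection matrix and the attendant transpose bookkeeping: one must differentiate $\mathbb{Q}_i$ correctly (rather than naively through the inner inverse $(N^T\mathbb{D}_i^2 N)^{-1}$), and then keep $\mathrm{vec}(\cdot)$ versus $\mathrm{vec}(\cdot^T)$ straight so that $\mathcal{T}$ lands in the right block. The crucial simplification---and the reason the final expressions are as clean as \eqref{eq:JTJ} and \eqref{eq:JTr}---is that the orthogonality relations $(I-\mathbb{Q}_i)\mathbb{A}_i=0$ and $\mathbb{A}_i^T(I-\mathbb{Q}_i)=0$ annihilate every cross term and the residual-coupling term; verifying these identities is where I would spend the most care.
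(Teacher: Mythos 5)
Your proposal is correct and follows essentially the same route as the paper: the paper likewise computes the directional derivative $f_i'(N+\dN)=(I-\QQ_i)\DD_i\,\dN\,\AA_i^Ty_i+\AA_i\,\dN^T\DD_i r_i$, vectorizes with $\mathrm{vec}(AXB)=(B^T\otimes A)\mathrm{vec}(X)$ to get \eqref{eq:Ji}, kills the cross terms in $J_i^TJ_i$ via $\QQ_i\AA_i=\AA_i$ and uses $\AA_i^Tr_i=0$, $\QQ_ir_i=0$ for \eqref{eq:JTr}. The only cosmetic difference is that you derive the directional derivative from the standard differential-of-an-orthogonal-projector identity, whereas the paper states it directly.
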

\begin{proof}
Let $\AA_i = \DD_i N (N^T\DD_i^2 N)^{-1}$. Given sufficiently small $\dN$, we
can show that
 the directional derivative of $f_i$ at $N$ along $\dN$ is given by
\begin{eqnarray*}
  f_i^\prime (N+\dN) &= &
 (I-\mathbb{Q}_i)\DD_i\dN \AA_i^T y_i
+ \AA_i \dN^T \DD_i  r_i.
\end{eqnarray*}
By using the  property that
${\rm vec}(AXB)=(B^T\otimes A){\rm vec}(X)$,
we have
\begin{eqnarray*}
{\rm vec}(f_i^\prime (N+\dN)) &=&
 [(\AA_i^T y_i)^T \otimes ((I-\mathbb{Q}_i)\DD_i)] {\rm vec}(\dN)
\\
&&+ [ (\DD_i r_i)^T\otimes \AA_i] {\rm vec}(\dN^T)
\end{eqnarray*}
From here, the required result in \eqref{eq:Ji} follows.

To prove \eqref{eq:JTJ}, we make use of the following properties of Kronecker product:
$(A\otimes B)(C\otimes D) = (AC)\otimes (BD)$ and $(A\otimes B)^T = A^T\otimes B^T$.
By using these properties, we see that $J_i^T J_i$ has four terms, with two of the
terms containing involving $\DD_i(I-\QQ_i)\AA_i$ or its transpose.
But we can verify that $\QQ_i\AA_i=\AA_i$ and hence those two terms become
$0$. The remaining two terms are those appearing in \eqref{eq:JTJ} after
using the fact that $(I-\QQ_i)^2 = I-\QQ_i$.
Next we prove \eqref{eq:JTr}. We have
\begin{eqnarray*}
 J_i^T r_i = {\rm vec}(\DD_i (I-\QQ_i)r_i (\AA_i^Ty_i)^T) +
\mathcal{T}^T {\rm vec}(\AA_i^T r_i r_i^T \DD_i).
\end{eqnarray*}
By noting that $\AA_i^T r_i = 0$ and $ \QQ_i r_i =0$, we get
the required result in \eqref{eq:JTr}.
\qed
\end{proof}
The complete procedure of solving \eqref{eq:MC_W} is summarized in Algorithm~\ref{alg:LM_GN}. In all our experiments, the initial $\lambda$ is chosen as $1e-6$ and $\rho=10$.
\begin{algorithm}[tb]
   \caption{Levenberg-Marquadt method for \eqref{eq:MC_W}}
   \label{alg:LM_GN}
\begin{algorithmic}
   \STATE {\bfseries Input:} $\widehat{W}, E^k, W^k,\bar{H}$, objective function $\mathcal{L}(x)$ and initial $N^k$;  numerical parameter $\lambda$, $\rho>1$.
   \STATE{\bfseries Initialization:} Compute $y_i=B_i^k$ for $i=1,...,n$, and $x^0=\mathrm{vec}(N^k)$, $j=0.$
   \WHILE{not converged}
   \STATE{1. } Compute  $J^T\mathbf{r}$ and $J^TJ$ using \eqref{eq:JTr} and\eqref{eq:JTJ}.
   \STATE{2. } Compute $\Delta x=(J^TJ+\lambda I)^{-1}J^Tr$
              \WHILE{$\mathcal{L}(x+\Delta x)<\mathcal{L}(x)$}
                \STATE{(1)} $\lambda=\rho\lambda.$
                \STATE{(2)} $\Delta x=(J^TJ+\lambda I)^{-1}J^Tr.$
              \ENDWHILE
   \STATE{3.} $\lambda=\lambda/\rho.$
   \STATE{4. } Orthogonalize $N=\mathrm{orth}[\mathrm{reshape}(x^j+\Delta x)]$.
   \STATE{5. } Update $x^{j+1}=\mathrm{vec}(N)$.
   \STATE{6. } Iterate $j=j+1$
   \ENDWHILE
   \STATE {\bfseries Output:} $N^{k+1}=N$, $W^{k+1}$ using \eqref{eq:W}
with $N^{k+1}$ replacing $N_*$.
\end{algorithmic}
\end{algorithm}


\subsection{Quadratic majorization of \eqref{eq:MC_W}}\label{sec:majorized_MC}
Recall that while the LM$\_$GN  method may be highly successful in computing a global minimizer for \eqref{eq:MC_W} empirically, $W^{k+1}$ may fail to be a global minimizer occasionally. Thus before deriving the update rule for $E^{k+1}$, we consider minimizing a quadratic majorization of \eqref{eq:MC_W} as a safeguard step to ensure the convergence of the PARSuMi iterations.
Recall that \eqref{eq:MC_W} is equivalent to
$$W^{k+1}=\argmin_{W}\left\{\frac{1}{2}\|\overline{H}\circ (W-\hat{B}^k)\|^2\middle| \mathrm{rank}(W)\leq r\right\}$$
where $\hat{B}^k=\overline{H}^{-1}\circ B^k$ with $\overline{H}$  and $B^k$ defined as in \eqref{eq:H_bar} and  \eqref{eq:Bk} respectively.

For convenience, we denote the above objective function $F(W,\hat{B}^k)$.
Suppose that we have positive vectors $p\in\R^m$ and $q\in \R^n$
such that
\begin{eqnarray}
  \bar{H}_{ij}^2 \leq p_i q_j\quad \forall\; i=1,\dots,m, \; j=1,\dots,n.
\label{eq-1}
\end{eqnarray}
Note that the above inequality always holds if $p$, $q$ are chosen to be
\begin{eqnarray}
  p_i  &=& \max\{ \bar{H}_{ij} \mid j=1,\dots,n\}, \quad i=1,\dots,m
\nonumber \\
 q_j &=& \max\{ \bar{H}_{ij} \mid i=1,\dots,m\}, \quad j=1,\dots,n.
\label{eq-2}
\end{eqnarray}
Let $G^k=\nabla_W F(W^k,\hat{B}^k)$.
We majorize $F(W,\hat{B}^k)$ by bounding its Taylor expansion at $W^k$
\begin{align}
&F(W,\hat{B}^k) - F(W^k,\hat{B}^k)
=  \inprod{G^k}{W-W^k} \nonumber\\
&+ \frac{1}{2}\inprod{W-\hat{B}^k}{(\bar{H}\circ\bar{H})\circ (W-\hat{B}^k)}
\nonumber \\\
\leq &\;
 \inprod{G^k}{W-W^k} +  \frac{1}{2}\inprod{W-W^k}{P(W-W^k)Q}
\nonumber \\
=&
 \frac{1}{2}\norm{P^{1/2} W Q^{1/2} -U^k}^2
 - \frac{1}{2}\norm{ P^{-1/2}G^k Q^{-1/2}}^2  \label{eq:F_majorization}
\end{align}
where $P= \diag{p}$ and $Q = \diag{q}$,
$U^k=P^{1/2}W^k Q^{1/2} - P^{-1/2}G^k Q^{-1/2}$.

\begin{proposition}\label{prop:majorization_update}
The  minimizer of quadratic majorization function in
\eqref{eq:F_majorization} is given in closed-form by
\begin{equation}\label{eq:majorization_update}
  W_{\rm QM}^{k+1} \in P^{-1/2}\Pi_r(U^k)Q^{-1/2}.
\end{equation}
Here $\Pi_r(U^k)$ denotes the set of best rank-r approximation of $U^k$.
\end{proposition}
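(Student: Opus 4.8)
The plan is to observe that minimizing the quadratic majorization in \eqref{eq:F_majorization} over the rank-constrained set reduces, after a diagonal change of variables, to a standard best rank-$r$ approximation problem, which is solved by the Eckart--Young--Mirsky theorem. First I would drop the second term: since $-\frac{1}{2}\norm{P^{-1/2}G^k Q^{-1/2}}^2$ in the last line of \eqref{eq:F_majorization} does not depend on $W$, minimizing the majorization subject to $\mathrm{rank}(W)\le r$ is equivalent to solving
\begin{equation*}
\min_{W}\left\{\tfrac{1}{2}\norm{P^{1/2}WQ^{1/2}-U^k}^2 \;\middle|\; \mathrm{rank}(W)\le r\right\}.
\end{equation*}

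The key step is a rank-preserving change of variables. Because $p$ and $q$ are \emph{positive} vectors, $P=\diag{p}$ and $Q=\diag{q}$ are positive definite diagonal matrices, so $P^{1/2}$ and $Q^{1/2}$ are invertible. Setting $Z:=P^{1/2}WQ^{1/2}$ gives a bijection between $W$ and $Z$, and since left/right multiplication by invertible matrices preserves rank, $\mathrm{rank}(Z)=\mathrm{rank}(W)$. Hence the constraint becomes $\mathrm{rank}(Z)\le r$ and the objective becomes $\frac{1}{2}\norm{Z-U^k}^2$. Applying the Eckart--Young--Mirsky theorem, the set of minimizers of $\frac{1}{2}\norm{Z-U^k}^2$ over matrices of rank at most $r$ is exactly the set $\Pi_r(U^k)$ of best rank-$r$ approximations of $U^k$ in Frobenius norm (obtained from a truncated SVD of $U^k$). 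Transforming back via $W=P^{-1/2}ZQ^{-1/2}$ yields $W^{k+1}_{\rm QM}\in P^{-1/2}\Pi_r(U^k)Q^{-1/2}$ as claimed in \eqref{eq:majorization_update}; the set-membership ``$\in$'' accounts for the possible non-uniqueness of the truncated SVD when the $r$th and $(r+1)$th singular values of $U^k$ coincide.

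There is no substantial obstacle here: the bound \eqref{eq-1} and the completion of the square that turn the majorant into the weighted least-squares form are already carried out in \eqref{eq:F_majorization}, so the only content of the proposition is the minimization itself. The one point that deserves explicit mention is that the diagonal scalings are invertible and therefore rank-preserving, which is exactly what makes the reduction to Eckart--Young--Mirsky legitimate; I would state this explicitly rather than treat it as obvious, since it is the hinge of the argument.
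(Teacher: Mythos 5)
Your proof is correct and is essentially the argument the paper intends: drop the $W$-independent terms, use that $P^{1/2}$ and $Q^{1/2}$ are invertible diagonal scalings so the substitution $Z=P^{1/2}WQ^{1/2}$ preserves rank, and invoke Eckart--Young--Mirsky. (The paper states that the proof is given in the Appendix, but the Appendix in fact only contains the proofs of Proposition~\ref{prop:E_update} and Theorem~\ref{thm:critical_pt}; your write-up supplies exactly the missing ``simple proof,'' and your explicit remark that the invertibility of the diagonal scalings is the hinge of the reduction is well placed.)
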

The proof is simple and is given in the Appendix. Note that this is a nonconvex minimization, yet we have an efficient closed-form solution thanks to SVD.

As we shall see later in Algorithm \ref{alg:PAM}, the global minimizer $W^{k+1}_{\rm QM}$
in
\eqref{eq:majorization_update}
of the quadratic
majorization function of $F(W,\hat{B}^k)$
is used as a safeguard when the computed solution $W^{k+1}$ from
\eqref{eq:MC_W} is inferior (which necessarily implies that
$W^{k+1}$ is not a global optimal solution) to $W^{k+1}_{\rm QM}$.
By doing so, the convergence of PARSuMi can be ensured, as we shall
prove in Theorem \ref{thm:critical_pt}.

\subsection{Sparse corruption recovery step \eqref{eq:MC_E}}\label{sec:outlier_removal}

In the sparse corruption step, we need to solve the $\ell_0$-constrained least squares minimization \eqref{eq:MC_E}. This problem is combinatorial in nature, but
fortunately, for our problem, we show that a closed-form solution can be obtained.
Let $x := \mathcal{P}_\Omega(E)$.
Observe that \eqref{eq:MC_E} can be expressed in the following equivalent
form:
\begin{eqnarray}
 \min_x \Big\{ \norm{x- b}^2  \mid \norm{x}_0 \leq N_0, \; \norm{x}^2 -K_E^2\leq 0\Big\}
\label{eq:MC_E2}
\end{eqnarray}
where
$b = \mathcal{P}_\Omega(\widehat{W}-W^{k+1}+\beta_2 E^k)/(1+\beta_2)$.

\begin{proposition}\label{prop:E_update} Let $I$ be the set of indices of the $N_0$ largest (in magnitude)
component of $b$. Then the nonzero components of the optimal solution $x$ of \eqref{eq:MC_E2} is given by
\begin{eqnarray}
 x_I = \left\{ \begin{array}{ll}
  K_E b_I/\norm{b_I} & \mbox{if $\norm{b_I} > K_E$}
\\[5pt]
  b_I & \mbox{if $\norm{b_I}\leq K_E$.}
\end{array} \right.
\label{eq:MC_E2x}
\end{eqnarray}
\end{proposition}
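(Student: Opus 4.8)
The plan is to exploit the separable structure of \eqref{eq:MC_E2} by splitting the optimization into an outer choice of the support and an inner optimization of the nonzero values on that support. First I would fix a candidate support $S$ with $|S|\leq N_0$ and restrict attention to those feasible $x$ whose nonzero entries lie in $S$. Writing $x_S, b_S$ for the restrictions to $S$ and using $x_i = 0$ for $i\notin S$, the objective splits as
$$\norm{x - b}^2 = \norm{x_S - b_S}^2 + \sum_{i\notin S} b_i^2 = \norm{x_S - b_S}^2 + \norm{b}^2 - \norm{b_S}^2,$$
while the norm constraint reduces to $\norm{x_S}\leq K_E$. Thus for each fixed $S$ the inner problem is precisely the Euclidean projection of $b_S$ onto the ball of radius $K_E$.

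Second, I would solve this projection in closed form: the minimizer is $x_S = b_S$ when $\norm{b_S}\leq K_E$ (the unconstrained optimum is already feasible), and $x_S = K_E b_S/\norm{b_S}$ when $\norm{b_S} > K_E$ (projection onto the boundary sphere, by convexity together with the fact that the unconstrained minimum lies outside the ball). This already produces the two-case formula in \eqref{eq:MC_E2x}; what remains is to confirm that the optimal support is the index set $I$ of the $N_0$ largest-magnitude entries of $b$.

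Third, I would substitute the projection value back, so that the optimal objective for a fixed support becomes a function of the single scalar $t := \norm{b_S}$, namely
$$g(t) = \begin{cases} \norm{b}^2 - t^2, & t\leq K_E,\\ \norm{b}^2 - 2K_E t + K_E^2, & t > K_E.\end{cases}$$
The crucial point is that the overall objective depends on $S$ \emph{only} through $t=\norm{b_S}$, and that $g$ is strictly decreasing on $[0,\infty)$. Hence minimizing over $S$ is equivalent to maximizing $\norm{b_S}^2=\sum_{i\in S} b_i^2$ subject to $|S|\leq N_0$, whose maximizer is exactly the set $I$ of the $N_0$ largest-magnitude components of $b$. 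Combining this optimal support with the inner solution from the second step yields \eqref{eq:MC_E2x}.

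The main obstacle — indeed the only step needing real care — is verifying the monotonicity of $g$ across both regimes: I must check continuity at the transition $t=K_E$ (both branches give $\norm{b}^2 - K_E^2$) and confirm that the one-sided derivatives, $-2t$ below $K_E$ and $-2K_E$ above, are both negative, so that no non-monotone behavior is introduced at the kink. Everything else reduces to the standard projection-onto-a-ball formula and a greedy top-$N_0$ selection argument; ties among the magnitudes of $b$ merely render $I$ (and hence the optimizer) non-unique without affecting the optimal value.
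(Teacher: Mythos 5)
Your proposal is correct and follows essentially the same route as the paper's proof: fix the support, solve the inner problem as a Euclidean projection onto the ball of radius $K_E$ (the paper does this via the KKT conditions with a multiplier $\mu$, you via the standard projection formula), and then observe that the resulting optimal value depends on the support only through $\norm{b_S}$ and is decreasing in it, so the top-$N_0$ magnitudes are optimal. Your explicit check of continuity and one-sided derivatives at the kink $t=K_E$ makes precise the step the paper dismisses with ``it is clear that $v_I$ is minimized if $\norm{b_I}$ is maximized.''
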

The proof (deferred to the Appendix) involves checking the optimality conditions of \eqref{eq:MC_E2} assuming known support set and finding the optimal support set in a decoupled fashion.

The procedure to obtain the optimal solution of \eqref{eq:MC_E} is summarized in Algorithm~\ref{alg:MC_E}. We remark that this is a very special case of $\ell_0$-constrained optimization; the availability of the exact closed form solution depends on both terms in \eqref{eq:MC_E} being decomposable into individual $(i,j)$ term. In general, if we change the
operator $M \rightarrow H\circ M$ in \eqref{eq:MC_E} to a general linear transformation (e.g., a sensing matrix in compressive sensing), or change the norm  $\|\cdot\|$ of the proximal term to some other norm such as spectral norm or nuclear norm, then the problem becomes NP-hard.
\begin{algorithm}[tb]
   \caption{Closed-form solution of \eqref{eq:MC_E}}
   \label{alg:MC_E}
\begin{algorithmic}
   \STATE {\bfseries Input:}$\widehat{W}, W^{k+1},E^k, \Omega$.
   \STATE{1. } Compute $b$ using \eqref{eq:MC_E2}.
   \STATE{2. } Compute $x$ using \eqref{eq:MC_E2x}.
   \STATE {\bfseries Output:}  $E^{k+1} = \cP_\Omega^*(x)$.
\end{algorithmic}
\end{algorithm}

\subsection{Algorithm}
Our method is summarized in Algorithm~\ref{alg:PARSuMi}. 
\begin{algorithm}[tb]
   \caption{(Partially Majorized) Proximal Alternating Robust Subspace Minimization (PARSuMi)}
   \label{alg:PARSuMi}
\begin{algorithmic}
   \STATE {\bfseries Input:}Observed data $\widehat{W}$, sample mask ${\Omega}$, parameter $r, N_0$. Initialization $W^0$ and $E^0$ (typically by Algorithm~\ref{alg:APG} described in Section~\ref{sec:Convex_relaxation}), $k=0$.
   \REPEAT
   \STATE{1a. } Solve \eqref{eq:MC_W} using Algorithm~\ref{alg:LM_GN} with $W^k$,$E^k$,$N^k$, obtain updates $\tilde{W}^{k+1}$ and $\tilde{N}^{k+1}$
   \STATE{1b. } Evaluate \eqref{eq:majorization_update} with $W^k$,$E^k$ obtain updates $\hat{W}^{k+1}$.
   \STATE{2. }  Assign $W^{k+1}=\argmin_{W\in\{\tilde{W}^{k+1},\hat{W}^{k+1}\}} F(W,\hat{B}^k)$, and then assign the corresponding $N^{(k+1)}$.
   \STATE{3. } Solve \eqref{eq:MC_E} using Algorithm~\ref{alg:MC_E} with $W^{k+1}, E^k$; obtain updates $E^{k+1}$.
   \UNTIL{$\|W^{k+1}-W^{k}\| < \|W^{k}\|\cdot10^{-6}$ and $\|E^{k+1}-E^{k}\| < \|E^{k}\|\cdot10^{-6}$}  \STATE {\bfseries Output:}
Accumulation points $\overline{W}$ and $\overline{E}$
\end{algorithmic}
\end{algorithm}
 Note that we do not need to know the exact cardinality of the corrupted entries; $N_0$ can be taken as an upper bound of the allowable number of corruptions. As a rule of thumb, 10-15\% of $|\Omega|$ is a reasonable size. The surplus in $N_0$ will only label a few noisy samples as corruptions, which should not affect the recovery of either $W$ or $E$, so long as the remaining $|\Omega|-N_0$ samples are still sufficient. The other parameter $r$ is typically given by the physical model of the problem. For those problems where $r$ is not known, choosing $r$ is analogous to choosing the regularization parameter as in other machine learning tasks. A large $r$ will lead to overfitting and poorly estimated missing data while an overly small $r$ will cause underfitting of the observed data.

\subsection{Convergence to a critical point }\label{sec:convergence}
In this section, we show the convergence
of Algorithm \ref{alg:PARSuMi} to a critical point.

Note that due to the non-convex nature of the subproblem \eqref{eq:MC_W}, Algorithm~\ref{alg:LM_GN} is guaranteed to converge only to a local minimum. Therefore, the result in \citet{attouch2010proximal} that requires global optimal solutions in all subproblems cannot be directly applied in our case for the critical point convergence proof. Empirically, we cannot hope LM\_GN to always find the global optimal solution of \eqref{eq:MC_W} either, as our experiments on LM\_GN in Section~\ref{sec:numerical_MF} clearly demonstrated. As a result, we design the partial majorization (Step 1b) in Algorithm~\ref{alg:PARSuMi} to safeguard against the case when the computed solution from Step 1a is not a global optimal solution. The safeguard step is powerful in that we do not need to assume anything on the computed solution of the subproblem  before we can prove the overall critical point convergence.



We start our convergence proof by first defining an equivalent formulation of \eqref{eq:MC_Formulation} in terms of closed, bounded sets. The convergence proof is then based on the indicator functions for these closed and bounded sets, which have the key lower semicontinuous property.

Let $K_W = 2\norm{\widehat{W}} + K_E$.
Define the closed and bounded sets:
\begin{eqnarray*}
  \mathcal{W} &=& \{ W \in \R^{m\times n} \mid {\rm rank}(W)\leq r, \norm{H\circ W} \leq
 K_W \}
\\[5pt]
\mathcal{E} &=& \{ E\in \R^{m\times n}_\Omega \mid
\norm{E}_0 \leq N_0, \norm{E} \leq K_E\}.
\end{eqnarray*}
We will first show that \eqref{eq:MC_Formulation} is equivalent to the problem given
in the next proposition.
\begin{proposition}
Let $f(W,E):=\frac{1}{2}\norm{H\circ(W+E-\widehat{W})}^2$.
The problem \eqref{eq:MC_Formulation} is equivalent  to the following problem:
\begin{eqnarray}
 \min \big\{ f(W,E)
\mid
  W\in\mathcal{W}, E\in \mathcal{E} \big\}.
\label{eq:MC_Formulation_2}
\end{eqnarray}
\end{proposition}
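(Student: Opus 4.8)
The plan is to exploit the fact that \eqref{eq:MC_Formulation_2} differs from \eqref{eq:MC_Formulation} only by the additional constraint $\norm{H\circ W}\leq K_W$, and to show that this constraint is \emph{inactive} at optimality, so that the two problems share the same optimal value and the same set of minimizers. Since the feasible set of \eqref{eq:MC_Formulation_2} is contained in that of \eqref{eq:MC_Formulation}, one inequality between the optimal values is immediate; the entire content of the proposition is the reverse inequality, which follows once I show that every candidate for the minimum of \eqref{eq:MC_Formulation} automatically obeys $\norm{H\circ W}\leq K_W$.

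First I would exhibit a cheap feasible reference point. Taking $W=0$ and $E=0$ is feasible for \eqref{eq:MC_Formulation}, since $\mathrm{rank}(0)=0\leq r$, $\norm{E}_0=0\leq N_0$, and $\norm{E}=0\leq K_E$; and because $\widehat{W}$ is supported on $\Omega$ where $H\equiv 1$, its objective value is exactly $\frac{1}{2}\norm{H\circ\widehat{W}}^2=\frac{1}{2}\norm{\widehat{W}}^2$. Consequently the optimal value of \eqref{eq:MC_Formulation} is at most $\frac{1}{2}\norm{\widehat{W}}^2$, and therefore any minimizer $(W^*,E^*)$ satisfies $\norm{H\circ(W^*+E^*-\widehat{W})}\leq\norm{\widehat{W}}$.

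The key step is then a triangle-inequality estimate. Writing $H\circ W^*=H\circ(W^*+E^*-\widehat{W})-H\circ E^*+H\circ\widehat{W}$ and bounding each term gives $\norm{H\circ W^*}\leq\norm{H\circ(W^*+E^*-\widehat{W})}+\norm{H\circ E^*}+\norm{H\circ\widehat{W}}$. Here I would use that both $E^*\in\R^{m\times n}_\Omega$ and $\widehat{W}$ are supported on $\Omega$, so that $\norm{H\circ E^*}=\norm{E^*}\leq K_E$ and $\norm{H\circ\widehat{W}}=\norm{\widehat{W}}$; combined with the objective bound from the previous step, this yields $\norm{H\circ W^*}\leq 2\norm{\widehat{W}}+K_E=K_W$. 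Thus every minimizer of \eqref{eq:MC_Formulation} already lies in $\mathcal{W}$, hence is feasible, and therefore optimal, for \eqref{eq:MC_Formulation_2}; conversely, every minimizer of \eqref{eq:MC_Formulation_2} is feasible for \eqref{eq:MC_Formulation} and attains the same value, so the two problems are equivalent.

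I do not expect a genuine obstacle here: the argument is essentially a sublevel-set truncation, and the only point requiring care is the bookkeeping with the mask $H$ — specifically, remembering that $H$ equals $1$ precisely on $\Omega$, where both $E$ and $\widehat{W}$ are supported, so that the weighted norms of those two matrices collapse to ordinary Frobenius norms. A minor formal refinement, useful if one wishes to avoid presupposing that a minimizer exists, is to run the same estimate over the sublevel set $\{\,f\leq\frac{1}{2}\norm{\widehat{W}}^2\,\}$ and observe that it is contained in the feasible set of \eqref{eq:MC_Formulation_2}; the infima over the two feasible sets then coincide regardless of attainment.
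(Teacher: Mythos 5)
Your proof is correct and follows essentially the same route as the paper's: both compare the minimizer's objective value to $f(0,0)=\frac{1}{2}\norm{\widehat{W}}^2$ and then use a triangle-inequality estimate with the constant $K_W=2\norm{\widehat{W}}+K_E$ to show the bound $\norm{H\circ W}\leq K_W$ is automatically satisfied. The only cosmetic difference is that you argue directly via the forward triangle inequality while the paper argues the contrapositive via the reverse triangle inequality.
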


\begin{proof}
Observe that the only difference between \eqref{eq:MC_Formulation} and \eqref{eq:MC_Formulation_2} is the inclusion of the bound constraint on
$\norm{H\circ W}$ in \eqref{eq:MC_Formulation_2}. To show the equivalence, we only
need to show that any minimizer $(\overline{W}, \overline{E})$  of \eqref{eq:MC_Formulation} must
satisfy the bound constraint in $\mathcal{W}$. By definition, we know that
$$
 f(\overline{W}, \overline{E}) \leq f(0,0) = \frac{1}{2}\norm{\widehat{W}}^2.
$$
Now for any $(W,E)$ such that ${\rm rank}(W)\leq r$,
$E\in \mathcal{E}$ and $\norm{H\circ W} > K_W$, we must have
\begin{eqnarray*}
 & & \hspace{-0.7cm}
\norm{H\circ(W+E-\widehat{W})} \geq \norm{H\circ W} - \norm{H\circ(E-\widehat{W})}
\\[5pt]
& > & K_W - \norm{E}-\norm{\widehat{W}} \geq  \norm{\widehat{W}}.
\end{eqnarray*}
Hence $ f(W,E) > \frac{1}{2} \norm{\widehat{W}}^2 = f(0,0).$
This implies that we must have $\norm{H\circ \overline(W)} \leq K_W$.
\qed
\end{proof}
To establish the convergence of PARSuMi, it is more convenient for us
to consider the following generic problem which includes
\eqref{eq:MC_Formulation_2} as a special case.
Let $\mathcal{X}$, $\mathcal{Y}$, $\mathcal{Z}$  be  finite-dimensional
inner product spaces, and $f:\mathcal{X} \rightarrow \mathbb{R} \cup \{\infty\}$,
$g:\mathcal{Y} \rightarrow \mathbb{R} \cup \{\infty\}$ are lower semi-continuous functions.
We consider the problem:
\begin{align}\label{eq:L}
\min_{x,y} \{L(x,y) := f(x) + g(y) + q(x,y)\}
\end{align}
where
$
 q(x,y) = \frac{1}{2} \norm{Ax + By - c}^2
$
and $A:\mathcal{X}\rightarrow \mathcal{Z}$, $B: \mathcal{Y}\rightarrow \mathcal{Z}$ are
given linear maps.
For \eqref{eq:MC_Formulation_2}, we have
$\mathcal{X}=\mathcal{Z}=\R^{m\times n}$, $\mathcal{Y}=\R^{m\times n}_\Omega$,
$A(x) = H\circ x$, $B(y) = H\circ y$, $c=\widehat{W}$.
and $f$, $g$  are the following indicator functions,
\begin{eqnarray*}\label{eq:Sr}
f(x) = \left\{
\begin{array}{ll}
  0 & \quad \mbox{if $x\in\mathcal{W}$}\\
  \infty & \quad \mbox{otherwise}
\end{array} \right.
\quad
g(y)=   \left\{
\begin{array}{ll}
  0 & \quad \mbox{if $y \in\mathcal{E}$}\\
  \infty & \quad \mbox{otherwise}
 \end{array} \right.
\end{eqnarray*}
Note that in this case, $f$ are $g$ are lower semicontinuous since
indicator functions of closed sets are lower semicontinuous \citep{Rudin}.

To denote the corresponding majorization safeguard, we define, for a fixed $(\hx,\hy)$ and given  $M \succ A^*A$
\begin{align}
   Q(x; \hx,\hy) &:= q(\hx,\hy) + \inprod{\nabla_xq(\hx,\hy)}{x-\hx}
+ \frac{1}{2}
\norm{x-\hx}_M^2 
\label{eq-Q}
\\
\widehat{L}(x;\hx,\hy) &:=  Q(x; \hx,\hy) + f(x) + g(\hy) 
\label{eq-hL}
\end{align}
where $\norm{\cdot}_M$ is defined in the last part of Algorithm~\ref{alg:PAM}. 

Then, we have that
\begin{align}
  q(x,\hy) &= Q(x;\hx,\hy) -\frac{1}{2}\norm{x-\hx}_{M-A^*A}^2
\label{eq-qQ}
\\[5pt]
L(x,\hy) &= \widehat{L}(x;\hx,\hy) -\frac{1}{2}\norm{x-\hx}_{M-A^*A}^2\quad
\label{eq-LhL}
\end{align}

Consider the partially majorized proximal alternating minimization (PMPAM) outlined in Algorithm~\ref{alg:PAM}, which we have modified from  \citet{attouch2010proximal}. The algorithm alternates between minimizing $x$ and $y$,
but with the important addition of the quadratic Moreau-Yoshida regularization term (which is
also known as the proximal term) in each step. The
importance of  Moreau-Yoshida regularization for convex matrix optimization problems
has been demonstrated and studied in \citet{Nuclear_PPA,LogDet_PPA,Knorm_PPA}.
For our non-convex, non-smooth setting here, the importance of the proximal  term will become clear
when we prove the convergence of Algorithm \ref{alg:PAM}. For our problem \eqref{eq:MC_Formulation_2},
the positive linear maps $S$ and $T$ in Algorithm \ref{alg:PAM} correspond to $\beta_1 (H \circ H) \circ$ and $\beta_2 I$ respectively, where $\beta_1,\beta_2$ are
given positive parameters.
Our algorithm differs from that in \citet{attouch2010proximal}
by having the safeguard step (in Step 1b and Step 2) to ensure that
critical point convergence can be achieved even if the
computed solution in Step 1a is not globally optimal.
Observe that one can bypass Step 1a in Algorithm \ref{alg:PAM} completely and
always choose to use Step 1b. But the minimization in Step 1b
based on quadratic majorization may not reduce the
merit function $L(x,y_k)+\frac{1}{2}\norm{x-x_k}_S^2$
as quickly as the minimization in Step 1a. Thus it is necessary to have Step 1a
to ensure that Algorithm 4 converges at  a
reasonable speed.
We note that a
similar safeguard step can be introduced for the subproblem in Step 3
if the global optimality of $y_{k+1}$  is not guaranteed.


\begin{algorithm}[htb]
   \caption{Partially majorized proximal alternating minimization (PMPAM)}
   \label{alg:PAM}
\begin{algorithmic}
   \STATE {\bfseries Input:}$(x_0,y_0)\in \cX\times\cY$;
positive linear operators $S$ and $T$. Choose $M$ such that
$M\succ A^*A + S$ in \eqref{eq-Q}.
   \REPEAT
   \STATE{1a. } Compute
$
\tx^{k+1} \in \mbox{argmin} \{ L(x,y_k) + \frac{1}{2}\norm{x-x_k}_S \}.
$
 \STATE{1b. }
Compute
$
\hx_{k+1} \in \mbox{argmin} \left\{ \begin{array}{l}
 \widehat{L}(x;x_k,y_k)
\end{array}
\right\}
$
\STATE{2. } Consider condition
$
\mbox{(I)} \quad L(\tx_{k+1},y_k)+\frac{1}{2}\norm{\tx_{k+1}-x_k}_S\leq
 L(\hx_{k+1},y_k) +\frac{1}{2}\norm{\hx_{k+1}-x_k}_S.
$
Set
\begin{eqnarray*}
x_{k+1} = \left\{ \begin{array}{ll}
 \tx_{k+1} & \mbox{if condition (I) holds}
\\[5pt]
\hx_{k+1} & \mbox{otherwise.}
\end{array}
\right.
\end{eqnarray*}
   \STATE{3. } $y^{k+1} = \argmin\{L(x^{k+1},y) + \frac{\beta_2}{2}\|y-y^k\|_T^2\}$
   \UNTIL{convergence}  \STATE {\bfseries Output:} Accumulation points $\overline{x}$ and $\overline{y}$
\end{algorithmic}
\end{algorithm}
In the above, $S$ and $T$ are given positive definite linear maps, and
$\norm{x-x^k}_S^2 = \inprod{x-x^k}{S(x-x^k)}$,
$\norm{y-y^k}_T^2 = \inprod{y-y^k}{T(y-y^k)}$.
Note that Step 1b is to safeguard against the possibility that the computed
$\tilde{x}_{k+1}$ is not a global optimal solution of the subproblem. We assume
that it is possible to compute the global optimal solution $\widehat{x}_{k+1}$
analytically.

Note that for our problem \eqref{eq:MC_Formulation_2},
the global minimizer of the nonconvex subproblem in Step 1b can be computed analytically as discussed in Section~\ref{sec:majorized_MC}. Next we show that any limit point of
$\{ (x_{k},y_{k})\}$ is a stationary point
of $L$ even if $\tx_{k+1}$ computed in Step 1a is not a global minimizer
of the subproblem.
\begin{theorem} \label{thm:critical_pt}
Let $\{(x_k,y_k)\}$ be the sequence generated by Algorithm~\ref{alg:PAM},
and $(\tx_{k+1},\hx_{k+1})$ are the intermediate iterates at iteration $k$.
\\[5pt]
(a) For all $k\geq 0$, we have that
\small
\begin{align}
&  L(\hx_{k+1},y_k) +  \frac{1}{2}\norm{\hx_{k+1}-x_k}_S^2
+\frac{1}{2}\norm{\hx_{k+1}-x_k}_{M-A^*A-S}^2\nonumber\\
&=\; \widehat{L}(\hx_{k+1};x_k,y_k)
 \;\leq\;  L(x_k,y_k),
\label{eq-thm1-1}
\\
& L(x_{k+1},y_k) + \frac{1}{2}\norm{x_{k+1}-x_k}_S^2  \leq  L(\hx_{k+1},y_k)+ \frac{1}{2}\norm{\hx_{k+1}-x_k}_S^2.
\label{eq-thm1-2}
\end{align}
\normalsize
(b) For all $k\geq 0$, we have that
\small
\begin{equation}
L(x_{k+1},y_{k+1}) + \frac{1}{2}\norm{x_{k+1}-x_k}_S^2 + \frac{1}{2}\norm{y_{k+1}-y_k}_T^2
  \leq   L(x_k,y_k).
\label{eq-thm1-3}
\end{equation}
\normalsize
Hence $\sum_{k=0}^\infty \norm{x_{k+1}-x_k}_S^2 +  \norm{y_{k+1}-y_k}_T^2 < \infty$
and $\displaystyle\lim_{k\rightarrow\infty} \norm{x_{k+1}-x_k} = 0 = \lim_{k\rightarrow\infty} \norm{y_{k+1}-y_k}$.
\\[5pt]
(c) Let $\{(x_{k'},y_{k'})\}$ be any convergent subsequence of $\{(x_k,y_k)\}$
with limit $(\bar{x},\bar{y})$. Then
$
 \lim_{k\rightarrow\infty} L(x_{k},y_{k}) =\lim_{k\rightarrow\infty}
 L(x_{k+1},y_{k}) =  \lim_{k\rightarrow\infty}
 \widehat{L}(\hx_{k+1};x_k,y_{k}) =  L(\bar{x},\bar{y}) .
$
Furthermore $\lim_{k\rightarrow\infty} \norm{\hx_{k+1}-x_k}=0$.
\\[5pt]
(d) Let $\{(x_{k'},y_{k'})\}$ be any convergent subsequence of $\{(x_k,y_k)\}$
with limit $(\bar{x},\bar{y})$. Then $(\bar{x},\bar{y})$ is a stationary point
of $L$.
\end{theorem}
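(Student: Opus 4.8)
The goal is to show $(0,0)\in\partial L(\bar x,\bar y)$, where $\partial$ denotes the limiting (Mordukhovich) subdifferential. Since $q$ is a smooth quadratic and $f+g$ is separable across $\cX\times\cY$, the sum rule (exact because $q$ is $C^1$) together with the separability rule gives $\partial L(\bar x,\bar y)=\big(\partial f(\bar x)+\nabla_x q(\bar x,\bar y)\big)\times\big(\partial g(\bar y)+\nabla_y q(\bar x,\bar y)\big)$, so it suffices to establish the two inclusions $-\nabla_x q(\bar x,\bar y)\in\partial f(\bar x)$ and $-\nabla_y q(\bar x,\bar y)\in\partial g(\bar y)$. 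Because $f$ and $g$ are indicator functions, $\partial f$ and $\partial g$ are exactly the limiting normal cones of the closed sets $\mathcal W$ and $\mathcal E$.

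The first step is to record the exact first-order conditions coming from the two subproblems that are solved to global optimality. The crucial observation is that I do \emph{not} use the possibly only locally optimal iterate $\tx_{k+1}$ from Step 1a; instead I use $\hx_{k+1}$ from Step 1b, which by Proposition~\ref{prop:majorization_update} is a genuine global minimizer of $\widehat L(\cdot;x_k,y_k)$. Since $Q(\cdot;x_k,y_k)$ is smooth with $\nabla_x Q(\hx_{k+1};x_k,y_k)=\nabla_x q(x_k,y_k)+M(\hx_{k+1}-x_k)$, Fermat's rule yields $-\nabla_x q(x_k,y_k)-M(\hx_{k+1}-x_k)\in\widehat\partial f(\hx_{k+1})\subseteq\partial f(\hx_{k+1})$, where $\widehat\partial$ is the regular (Fr\'echet) subdifferential. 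Likewise, $y_{k+1}$ is the global minimizer of $L(x_{k+1},\cdot)+\tfrac12\norm{\cdot-y_k}_T^2$, solved in closed form by Proposition~\ref{prop:E_update}, which gives $-\nabla_y q(x_{k+1},y_{k+1})-T(y_{k+1}-y_k)\in\partial g(y_{k+1})$.

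Next I pass to the limit along the subsequence $\{(x_{k'},y_{k'})\}\to(\bar x,\bar y)$. From part~(b), $\norm{x_{k+1}-x_k}\to0$ and $\norm{y_{k+1}-y_k}\to0$, and from part~(c), $\norm{\hx_{k+1}-x_k}\to0$; hence $\hx_{k'+1}\to\bar x$, $x_{k'+1}\to\bar x$ and $y_{k'+1}\to\bar y$ as well. Continuity of $\nabla q$ then forces the left-hand sides of the two inclusions to converge to $-\nabla_x q(\bar x,\bar y)$ and $-\nabla_y q(\bar x,\bar y)$ respectively, since the extra terms $M(\hx_{k'+1}-x_{k'})$ and $T(y_{k'+1}-y_{k'})$ vanish. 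To conclude I invoke the graph-closedness (outer semicontinuity) of the limiting subdifferential: if $u_n\to u$, $f(u_n)\to f(u)$, $\xi_n\in\partial f(u_n)$ and $\xi_n\to\xi$, then $\xi\in\partial f(u)$. Here the value-convergence hypothesis is automatic, because $f$ and $g$ are indicators: all iterates are feasible so the values are $0$, and closedness of $\mathcal W$ and $\mathcal E$ guarantees $\bar x\in\mathcal W$, $\bar y\in\mathcal E$, so the limiting values are also $0$. This delivers $-\nabla_x q(\bar x,\bar y)\in\partial f(\bar x)$ and $-\nabla_y q(\bar x,\bar y)\in\partial g(\bar y)$, i.e. $(0,0)\in\partial L(\bar x,\bar y)$.

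The main obstacle is conceptual rather than computational: one must certify stationarity even though the accepted iterate $x_{k+1}$ may be the inexactly computed $\tx_{k+1}$ from Step 1a, for which no usable optimality condition is available. The resolution is precisely the safeguard design --- every iteration also produces the exactly minimized $\hx_{k+1}$, whose first-order condition is clean, and part~(c) ensures $\hx_{k+1}$ shares the same subsequential limit $\bar x$ as the accepted iterates. The only other point needing care is that the subdifferential closedness property requires convergence of the function values; this is why parts~(b)--(c) establish both that the successive differences vanish and that $L(x_k,y_k)\to L(\bar x,\bar y)$, although for the indicator functions here the value convergence reduces to merely maintaining feasibility in the limit.
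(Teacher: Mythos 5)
Your argument for part (d) is correct and takes essentially the same route as the paper: both proofs extract the usable first-order condition from the safeguard iterate $\hx_{k+1}$ (the exact global minimizer of $\widehat{L}(\cdot;x_k,y_k)$) rather than from the possibly non-optimal $\tx_{k+1}$, pair it with the exact optimality condition for $y_{k+1}$, send the residual terms to zero using parts (b)--(c), and close with outer semicontinuity of the limiting subdifferential. Your bookkeeping differs only cosmetically: you split $\partial L(\bar x,\bar y)$ into $\bigl(\partial f(\bar x)+\nabla_x q(\bar x,\bar y)\bigr)\times\bigl(\partial g(\bar y)+\nabla_y q(\bar x,\bar y)\bigr)$ and drive each factor to zero separately, whereas the paper anchors both partial subgradients at $(\hx_{k+1},y_{k+1})$ and absorbs the mismatch $x_{k+1}-\hx_{k+1}$ into explicit error terms $\Delta x_{k+1},\Delta y_{k+1}$. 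Your explicit use of the separable-plus-smooth sum rule is, if anything, a point of care the paper leaves implicit, and your remark that value convergence is automatic for indicator functions is a legitimate shortcut over the paper's general lower-semicontinuity argument.

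The genuine gap is coverage: the theorem asserts four claims and you prove only the last, citing (a), (b), (c) as known. They are not free, and (d) consumes all of them. For (a) you need the identity \eqref{eq-LhL} linking $L$ and $\widehat{L}$ together with the test point $\xi=x_k$ in the minimality of $\hx_{k+1}$, and \eqref{eq-thm1-2} follows from the selection rule (condition (I)) defining $x_{k+1}$. Part (b) chains these with the minimality of $y_{k+1}$ tested at $\eta=y_k$ and telescopes the resulting descent inequality. Part (c) is the least routine of the three: full (not merely subsequential) convergence of $L(x_k,y_k)$ requires combining the $\limsup$ bound from $\widehat{L}(\hx_{k+1};x_k,y_k)\leq\widehat{L}(\xi;x_k,y_k)$ with lower semicontinuity of $f$ and $g$ and the monotonicity of $\{L(x_k,y_k)\}$; and the claim $\norm{\hx_{k+1}-x_k}\to 0$, which you rely on to conclude $\hx_{k'+1}\to\bar x$, comes from squeezing $\frac{1}{2}\norm{\hx_{k+1}-x_k}_{M-A^*A-S}^2$ between $\widehat{L}(\hx_{k+1};x_k,y_k)$ and $L(x_{k+1},y_k)$, which share the same limit, and uses the standing assumption $M\succ A^*A+S$. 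A complete proof of the theorem must supply these steps.
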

The full proof is given in the Appendix. Here we explain the four parts of Theorem~\ref{thm:critical_pt}. Part(a) establishes the non-increasing monotonicity of the proximal regularized update. Leveraging on part(a), part(b) ensures the existence of the limits. Using Part(a), (b) and (c), (d) then shows the critical point convergence of Algorithm~\ref{alg:PAM}.

\subsection{Convex relaxation of \eqref{eq:MC_Formulation} as initialization} \label{sec:Convex_relaxation}

Due to the non-convexity of the rank and $\ell_0$ cardinality constraints, it is expected that the outcome of Algorithm~\ref{alg:PARSuMi} depends on initializations.
 A natural choice for the initialization of PARSuMi is the convex relaxation of both the rank
and $\ell_0$ function:
\begin{equation}\label{eq:RIRM_convex}
\min\Big\{
 f(W,E) + \lambda \norm{W}_* + \gamma \norm{E}_1 \mid
W\in \R^{m\times n}, E \in \R^{m\times n}_\Omega
\Big\}
\end{equation}
where $f(W,E) = \frac{1}{2}\norm{H\circ(W + E - \widehat{W}) }^2$, $\norm{\cdot }_*$ is the nuclear norm, and
 $\lambda$ and $\gamma$ are  regularization parameters.

Problem \eqref{eq:RIRM_convex} can be solved efficiently by the quadratic majorization-APG (accelerated proximal gradient) framework proposed by \citet{APG_NN}. At the $k$th iteration with iterate $(\bar{W}^k,\bar{E}^k)$, the majorization step replaces \eqref{eq:RIRM_convex} with a quadratic majorization of $f(W,E)$, so that $W$ and $E$ can be optimized independently, as we shall see shortly. Let $G^k = (H\circ H)\circ(\bar{W}^k+\bar{E}^k+\widehat{W})$.
By some simple algebra, we have
\begin{eqnarray*}
&& f(W,E) - f(\bar{W}^k,\bar{E}^k) = \frac{1}{2} \norm{H\circ(W-\bar{W}^k+E-\bar{E}^k)}^2
 \\
& &\qquad \qquad  +\; \inprod{W-\bar{W}^k+E-\bar{E}^k}{G^k}
\\
&& \leq \; \norm{W-\bar{W}^k}^2+\norm{E-\bar{E}^k}^2 +
 \inprod{W-\bar{W}^k+E-\bar{E}^k}{G^k}
\\[5pt]
&&= \norm{W-\widetilde{W}^k}^2 + \norm{E-\widetilde{E}^k}^2 +{\rm constant}
\end{eqnarray*}
where $\widetilde{W}^k = \bar{W}^k-G^k/2$ and
$\widetilde{E}^k = \bar{E}^k-G^k/2$.
At each step of the APG method, one minimizes \eqref{eq:RIRM_convex} with $f(W,E)$
replaced by the above quadratic majorization. As the resulting problem is separable in
$W$ and $E$, we can minimize them separately, thus
yielding the following two optimization problems:
\begin{align}
W^{k+1} = \mbox{argmin} \; &\frac{1}{2} \norm{ W - \widetilde{W}^k}^2 +
\frac{\lambda}{2} \norm{W}_* \label{eq:RIRM_convex_majorized_W}\\
E^{k+1} = \mbox{argmin} \; & \frac{1}{2}\norm{ E - \widetilde{E}^k}^2 +
\frac{\gamma}{2} \norm{ E}_1 \label{eq:RIRM_convex_majorized_E}
\end{align}
The main reason for performing the above majorization
 is because the solutions to \eqref{eq:RIRM_convex_majorized_W}
and \eqref{eq:RIRM_convex_majorized_E} can
readily be found with closed-form solutions. For \eqref{eq:RIRM_convex_majorized_W}, the minimizer is given by the Singular Value Thresholding (SVT) operator. For \eqref{eq:RIRM_convex_majorized_E}, the minimizer is given by the well-known soft thresholding operator \citep{donoho1995softthresh}.
The APG algorithm, which is adapted from
\citet{APG_L1} and analogous to that in \citet{APG_NN}, is summarized below.

\begin{algorithm}[htb]
   \caption{An APG algorithm for \eqref{eq:RIRM_convex}}
   \label{alg:APG}
\begin{algorithmic}
   \STATE {\bfseries Input:} Initialize $W^0=\bar{W}^0=0$, $E^0=\bar{E}^0=0$,
$t_0=1$, $k=0$
   \REPEAT
   \STATE{1. } Compute $G^k = (H\circ H)\circ(\bar{W}^k+\bar{E}^k+\widehat{W})$,
$\widetilde{W}^k$, $\widetilde{E}^k$.
   \STATE{2. } Update $W^{k+1}$ by applying the SVT on $\widetilde{W}^k$
in  \eqref{eq:RIRM_convex_majorized_W}.
   \STATE{3. } Update $E^{k+1}$ by applying the soft-thresholding operator on $\widetilde{E}^k$ in \eqref{eq:RIRM_convex_majorized_E}.
\STATE{4.} Update step size $t_{k+1} = \frac{1}{2}(1+\sqrt{1+4t_k^2})$.
 \STATE{5.} $(\bar{W}^{k+1},\bar{E}^{k+1}) = (W^{k+1},E^{k+1}) + \frac{t_k-1}{t_{k+1}}
  (W^{k+1}-W^k,E^{k+1}-E^k) $
   \UNTIL{Convergence}
   \STATE {\bfseries Output:} Accumulation points $\overline{W}$ and $\overline{E}$
\end{algorithmic}
\end{algorithm}
As has already been proved in \citet{APG_L1},
the APG algorithm, including the one above, has a very nice worst case iteration complexity result in that for any given $\epsilon>0$, the APG algorithm needs at most
$O(1/\sqrt{\epsilon})$ iterations to compute an $\epsilon$-optimal (in terms of
function value) solution.

The tuning of the regularization parameters $\lambda$ and $\gamma$
in \eqref{eq:RIRM_convex} is fairly straightforward. For $\lambda$, we use the singular values of the converged $\overline{W}$ as a reference. Starting from a
relatively large value of $\lambda$, we reduce it by a constant factor in each pass to obtain a
$\overline{W}$ such that its singular values beyond the $r$th are much smaller than
the first $r$ singular values. For $\gamma$, we use the suggested value of $1/\sqrt{\max(m,n)}$ from RPCA \citep{Candes2011_JACM}. In our experiments, we find that we only need a ballpark figure, without having to do a lot of tuning. Taking $\lambda=0.2$ and $\gamma=1/\sqrt{\max(m,n)}$ serve the purpose well.

\subsection{Other heuristics} \label{sec:heuristics}
In practice, we design two heuristics to further boost the quality of the convex initialization. These are tricks that allow PARSuMi to detect corrupted entries better and are always recommended.

We refer to the first heuristic as ``Huber Regression''. The idea is that the quadratic loss term in our matrix completion step \eqref{eq:MC_W} is likely to result in a dense spread of estimation error across all measurements. There is no guarantee that those true corrupted measurements will hold larger errors comparing to the uncorrupted measurements. On the other hand, we note that the quality of the subspace $N^k$ obtained from LM\_GN is usually good despite noisy/corrupted measurements. This is especially true when the first LM\_GN step is initialized with Algorithm~\ref{alg:APG}. Intuitively, we should be better off with an intermediate step, using $N^{k+1}$ to detect the errors instead of  $W^{k+1}$, that is, keeping $N^{k+1}$ as a fixed input and finding coefficient $C$ and $E$ simultaneously with
\begin{equation}\label{eq:use_N_E}
\begin{aligned}
& \min_{E,C}
\; \frac{1}{2}\norm{H\circ (N^{k+1}C-\widehat{W}+E)}^2\\
& \text{subject to}
\quad \|E\|_0\leq N_0.
\end{aligned}
\end{equation}
To make it computationally tractable, we relax \eqref{eq:use_N_E} to
\begin{equation}\label{eq:use_N_E_convex}
\begin{aligned}
& \min_{E,C}
\; \frac{1}{2}\norm{H\circ (N^{k+1}C-\widehat{W}+E)}^2+\eta_0\|E\|_1\\
\end{aligned}
\end{equation}
where $\eta_0 >0$ is a penalty parameter. Note that each column of the above problem can be decomposed into the following Huber loss regression problem  ($E$ is absorbed into the Huber penalty)
\begin{equation}\label{eq:use_N_E_huber}
\begin{aligned}
& \min_{C_j} \sum_{i=1}^m
 \mbox{Huber}_{\eta_0/H_{ij}}(H_{ij} ((N^{k+1} C_j)_i-\widehat{W}_{ij})). \\
\end{aligned}
\end{equation}
Since $N^{k+1}$ is known, \eqref{eq:use_N_E_convex} can be solved very efficiently using
the APG algorithm, whose derivation is similar to that of Algorithm~ \ref{alg:APG}, with soft-thresholding operations on $C$ and $E$. To further reduce the Robin Hood effect (that haunts all $\ell_1$-like penalties) and enhance sparsity, we may optionally apply the iterative re-weighted Huber minimization (a slight variation of the method in \citet{candes2008enhancing}), that is, solving \eqref{eq:use_N_E_huber} for $l_{max}$ iterations using an entrywise weighting factor inversely proportional to the previous iteration's fitting residual. In the end, the optimal columns $C_j$'s  are concatenated into the optimal solution matrix $C^*$ of \eqref{eq:use_N_E_convex}, and we set
$$W^{k+1}=N^{k+1}C^*.$$
With this intermediate step between the $W$ step and the $E$ step, it is much easier for the $E$ step to detect the support of the actual corrupted entries.

The above procedure can be used in conjunction with another heuristic that avoids adding false positives into the corruption set in the $E$ step when the subspace $N$ has not yet been accurately recovered. This is achieved by imposing a threshold $\eta$ on the minimum absolute value of $E^k$'s non-zero entries, and shrink this threshold by a factor (say 0.8) in each iteration. The ``Huber regression'' heuristic is used only when $\eta>\eta_0$, and hence only in a very small number of iteration before the support of $E$ has been reliably recovered. Afterwards the pure PARSuMi iterations (without the Huber step) will take over, correct the Robin Hood effect of Huber loss and then converge to a high quality solution.

Note that our critical point convergence guarantee in Section~\ref{sec:convergence} is not hampered at all by the two heuristics, since after a small number of iterations, $\eta\leq\eta_0$ and we come back to the pure PARSuMi.

\section{Experiments and discussions}\label{sec:experiment}

In this section, we present the methodology and results of various experiments designed to evaluate the effectiveness of our proposed method. The experiments revolve around synthetic data and two real-life datasets: the Oxford Dinosaur sequence, which is representative of data matrices in SfM works, and the Extended YaleB face dataset \citep{lee2005extendedyaleb}, which we use to demonstrate how PARSuMi works on photometric stereo problems.

In the synthetic data experiments, our method is compared with the state-of-the-art algorithms for the objective function in \eqref{eq:MC_L1} namely Wiberg $\ell_1$ \citep{WibergL1} and GRASTA \citep{he2011grasta}. ALP and AQP \citep{ke2005L1} are left out since they are shown to be inferior to Wiberg $\ell_1$ in \citet{WibergL1}. For the sake of comparison, we perform the experiment on recovery effectiveness using the same small matrices as in Section 5.1 of \citet{WibergL1}. Other synthetic experiments are conducted with more reasonably-sized matrices. Whenever appropriate, we also include a comparison to a variant of RPCA that handles missing data \citep{Wu_photometric} which solves \eqref{eq:RMC_nuc} using the augmented Lagrange multiplier (ALM) algorithm (we will call it ALM-RPCA from here onwards). This serves as a representative of the nuclear norm based methods.



The real data from the SfM and photometric stereo problems contain many challenges typical in practical scenarios. They contain large contiguous areas of missing data, and potentially highly corrupted observations which may not be sparse too. For instance, in the YaleB face dataset, grazing illumination tends to produce large area of missing data (well over 50\%) and often large number of outliers too (due to specular highlights). The PARSuMi method outperformed a variety of other methods in the experiments, even uncovering hitherto unknown corruptions inherent in the Dinosaur data from SfM. The results also corroborate those obtained in the synthetic data experiments, in that our method can handle a substantially larger fraction of missing data and corruptions, thus providing empirical evidence for the efficacy of PARSuMi under practical scenarios.

For a summary of the parameters used in the experiments, please refer to the Appendix. 





\subsection{Convex Relaxation as an Initialization Scheme}
We first investigate the results of our convex initialization scheme by testing on a randomly generated $100\times 100$ rank-4  matrix. A random selection of 70\% and 10\% of the entries are considered missing and corrupted respectively. Corruptions are generated by adding large uniform noise between $[-1,1]$. In addition, Gaussian noise $\mathcal{N}(0,\sigma)$ for $\sigma=0.01$ is added to all observed entries. From Fig.~\ref{fig:E_robinhood}, we see that the convex relaxation outlined in Section~\ref{sec:Convex_relaxation} was able to recover the error support, but there is considerable difference in magnitude between the recovered error and the ground truth, owing to the ``Robin Hood'' attribute of $\ell_1$-norm as a convex proxy of $\ell_0$. Nuclear norm as a proxy of rank also suffers from the same woe. Similar observations can be made on the results of the Dinosaur experiments, which we will show later.



\begin{figure}[bt]
  \centering
  \includegraphics[width=0.49\linewidth]{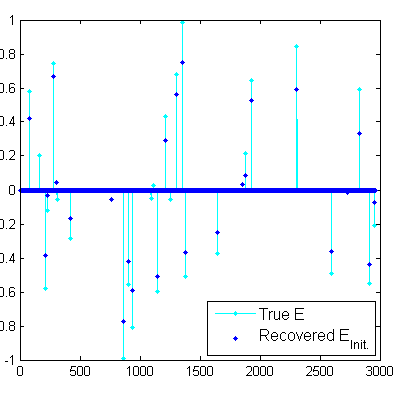}
  \includegraphics[width=0.49\linewidth]{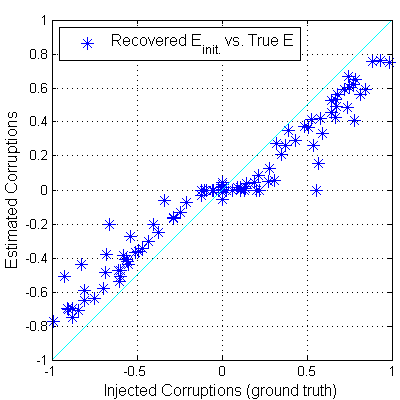}\\
  \caption{The Robin Hood effect of Algorithm~\ref{alg:APG} on detected sparse corruptions $E_{\mathrm{Init}}$. \textbf{Left}: illustration of a random selection of detected E vs. true E. Note that the support is mostly detected, but the magnitude falls short. \textbf{Right}: scatter plot of the detected E against true E (perfect recovery falls on the $y=x$ line, false positives on the $y$-axis and false negatives on the $x$-axis).}\label{fig:E_robinhood}
\end{figure}

Despite the problems with the solution of the convex initialization, we find that it is a crucial step for PARSuMi to work well in practice. As can be seen from Fig.~\ref{fig:E_robinhood}, the detected error support can be quite accurate. This makes the $E$-step of PARSuMi more likely to identify the true locations of corrupted entries.


\subsection{Impacts of poor initialization}
When the convex initialization scheme fails to obtain the correct support of the error, the ``Huber Regression'' heuristic may help PARSuMi to identify the support of the corrupted entries.  We illustrate the impact by intentionally mis-tuning the parameters of Algorithm~\ref{alg:APG} such that the initial $E$ bears little resemblance to the true injected corruptions. Specifically, we test the cases when the initialization fails to detect many of the corrupted entries (false negatives) and when many entries are wrongly detected as corruptions (false positives). From Fig.~\ref{fig:poor_initialization}, we see that PARSuMi is able to recover the corrupted entries to a level comparable to the magnitude of the injected Gaussian noise in both experiments\footnote{Note that a number of false positives persist in the second experiment. This is understandable because false positives often contaminate an entire column or row, making it impossible to recover that column/row in later iterations even if the subspace is correctly detected. To avoid such an undesirable situation, we prefer ``false negatives'' over ``false positives'' when tuning Algorithm~\ref{alg:APG}. In practice, it suffices to keep the initial $E$ relatively sparse.}.
\begin{figure}
  \centering
  \subfigure[\small{False negatives}]{
  \includegraphics[width=0.46\linewidth]{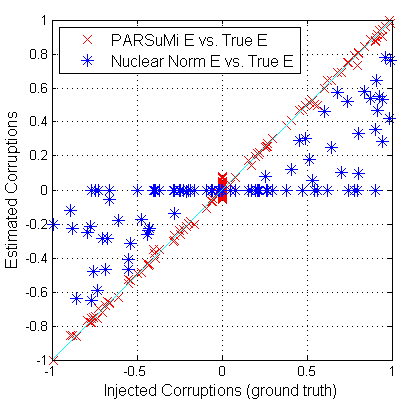}
  }
  \subfigure[\small{False positives}]{
  \includegraphics[width=0.46\linewidth]{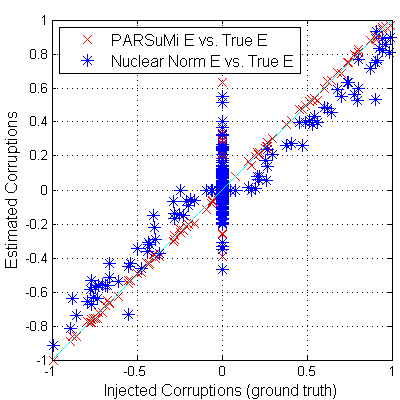}
  }\\
  \caption{Recovery of corruptions from poor initialization.}\label{fig:poor_initialization}
\end{figure}

In most of our experiments, we find that PARSuMi is often able to detect the corruptions perfectly from a simple initializations with all zeros, even without the ``Huber Regression'' heuristic. This is especially true when the data are randomly generated with benign sampling pattern and well-conditioned singular values. However, in challenging applications such as SfM, a good convex initialization and the ``Huber Regression'' heuristic are always recommended.

\subsection{Recovery effectiveness from sparse corruptions} \label{sec:Recovery_Effectiveness}

For easy benchmarking, we use the same synthetic data in Section~5.1 of \citet{WibergL1} to investigate the quantitative effectiveness of our proposed method. A total of 100 random low-rank matrices with missing data and corruptions
are generated and tested using PARSuMi, Wiberg~$\ell_1$ and GRASTA.

In accordance with \citet{WibergL1}, the ground truth low rank matrix $W \in \mathbb{R}^{m \times n},m=7,n=12,r=3,$ is generated as $W = UV^T$, where $U\in \mathbb{R}^{m \times r}, V\in \mathbb{R}^{n \times r}$ are generated using uniform distribution, in the range [-1,1]. 20\% of the data are designated as missing, and 10\% are added with corruptions, both at random locations. The magnitude of the corruptions follows a uniform distribution $[-5,5]$.
Root mean square error (RMSE) is used to evaluate the recovery precision:
\begin{equation} \label{eq:RMS}
\mathrm{RMSE} := \frac{\Vert W_{\mathrm{recovered}} - W \Vert_F}{\sqrt{mn}}.
\end{equation}
Out of the 100 independent experiments, the number of runs that returned RMSE values of
less than 5 are 100 for PARSuMi, 78 and 58 for  Wiberg~$\ell_1$ (with two different initializations) and similarly 94 and 93 for GRASTA. These are summarized in Fig.~\ref{fig:Synthetic_hist}.
\begin{figure}[tb]
 \centering
 \includegraphics[width=8cm,height=4.5cm]{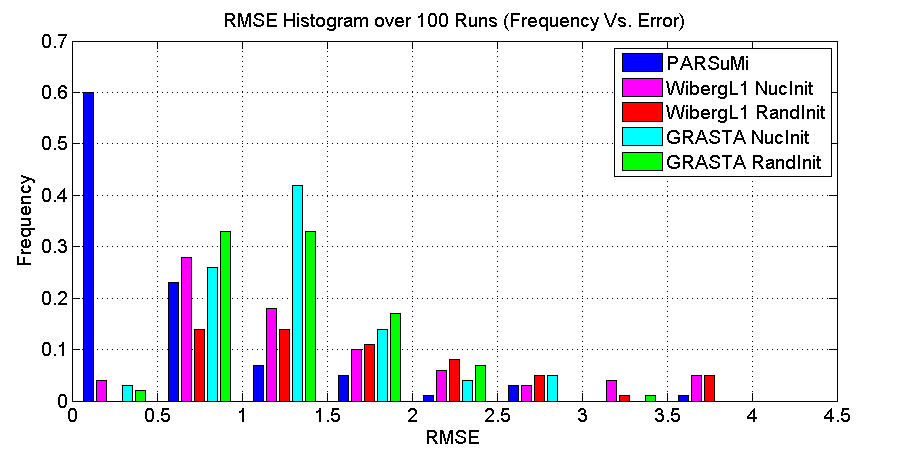}
 \caption{\small{A histogram representing the frequency of different magnitudes of RMSE in the estimates generated by each method. }}
 \label{fig:Synthetic_hist}
\end{figure}

\subsection{Recovery under varying level of corruptions, missing data and noise}
To gain a holistic understanding of our proposed method, we perform a series of systematically parameterized experiments on $40\times60$ rank-4 matrices
(with the elements of the factors $U,V$ drawn independently from
the uniform distribution on $[-1,1]$), with conditions ranging from 0-80\% missing data, 0-20\% corruptions of range [-2,2], and Gaussian noise with $\sigma$ in the range [0,0.1]. By fixing the Gaussian noise at a specific level, the results are rendered in terms of phase diagrams showing the recovery precision as a function of the missing data and outliers. The precision is quantified as the difference between the recovered RMSE and the oracle bound RMSE \footnote{See the Appendix for details.}. As can be seen from Fig.~\ref{fig:phase_diagram}, our algorithm obtains near optimal performance at an impressively large range of missing data and outlier at $\sigma=0.01$\footnote{The phase diagrams for other levels of noise look very much like Fig.~\ref{fig:phase_diagram}; we therefore did not include them in the paper.}.

For comparison, we also displayed the results for closely related methods, e.g., ALM-RPCA~\citep{Wu_photometric}, GRASTA~\citep{he2011grasta}, DRMF~\citep{DRMF}, LM\_GN~\citep{chen2011hessian} as well as Algorithm~\ref{alg:APG} (our initialization). Wiberg~$\ell_1$ is omitted because it is too slow. Among all the methods we compared, PARSuMi is able to successfully reconstruct the largest range of matrices with almost optimal numerical accuracy. Also, the results for DRMF and LM\_GN are well-expected since they are not designed to handle both missing data and outliers.


\begin{figure}[tb]
 \centering
\subfigure[\small{PARSuMi}]{
 \includegraphics[width=0.465\linewidth]{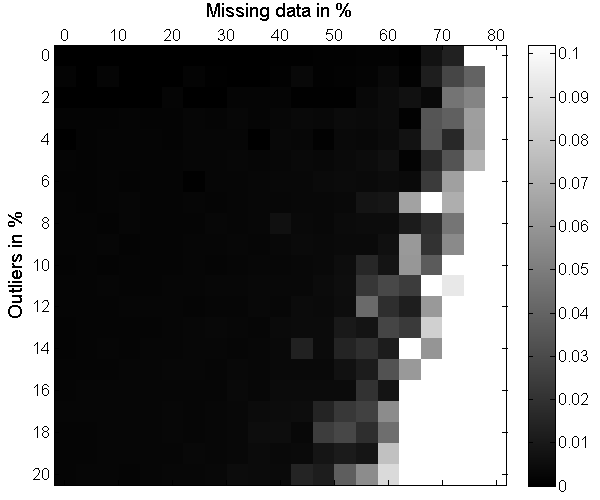}
 \label{fig:Phase_diag_PARSuMi}
}
\subfigure[\small{Our initialization}]{
\includegraphics[width=0.465\linewidth]{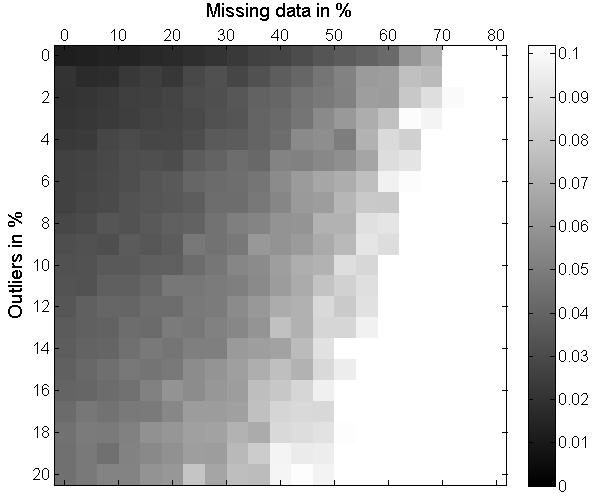}
\label{fig:Phase_diag_APG}
}
\subfigure[\small{ALM-RPCA}]{
\includegraphics[width=0.465\linewidth]{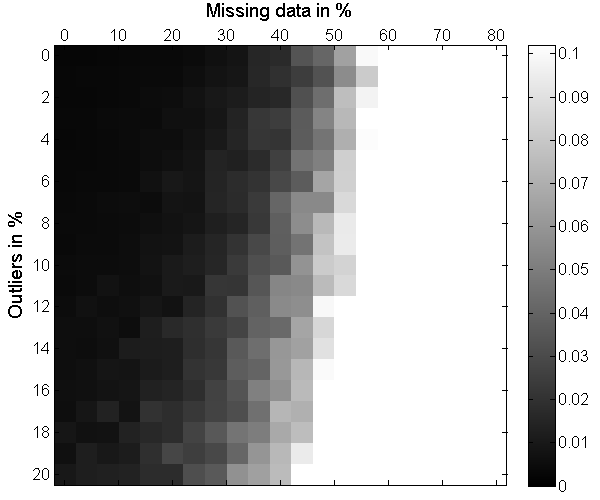}
\label{fig:Phase_diag_RPCA}
}
\subfigure[\small{GRASTA}]{
\includegraphics[width=0.465\linewidth]{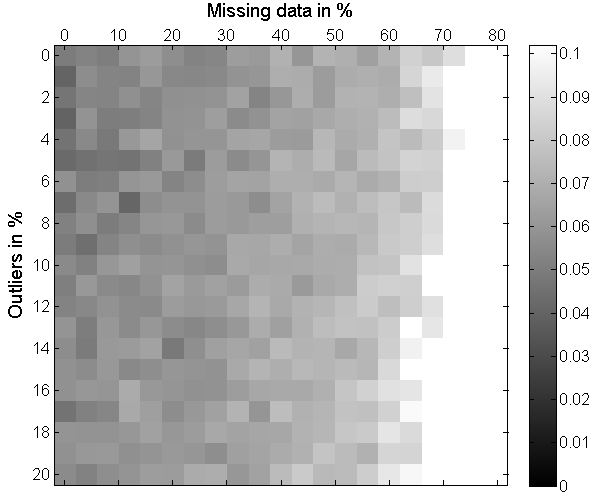}
\label{fig:Phase_diag_GRASTA}
}
\subfigure[\small{DRMF}]{
\includegraphics[width=0.465\linewidth]{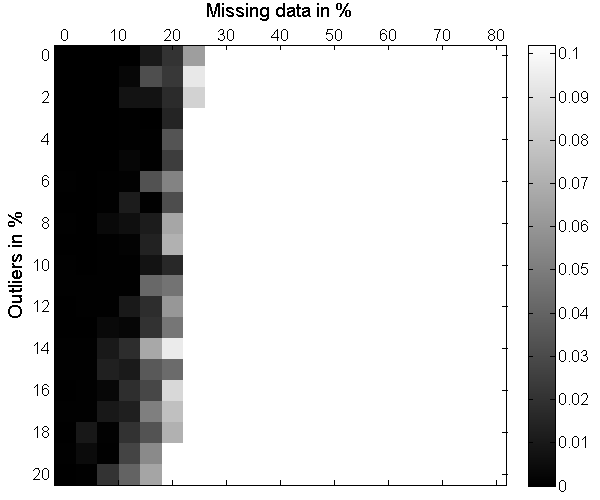}
\label{fig:Phase_diag_DRMF}
}
\subfigure[\small{LM\_GN}]{
\includegraphics[width=0.465\linewidth]{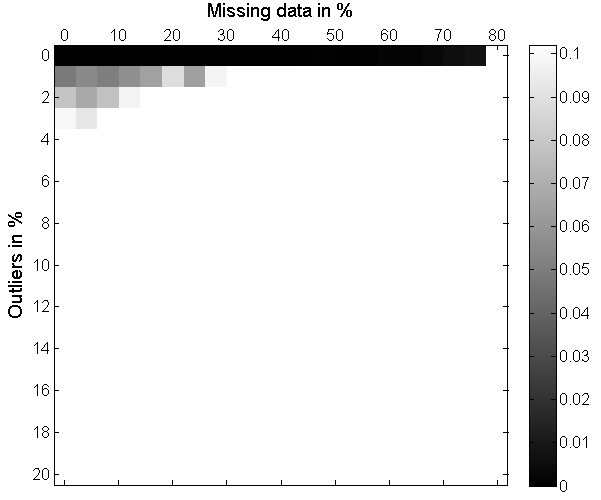}
\label{fig:Phase_diag_LMGN}
}
 \caption[Optional caption for list of figures]{\small{Phase diagrams (darker is better) of RMSE with varying proportion of missing data and corruptions with Gaussian noise $\sigma=0.01$.}}
 \label{fig:phase_diagram}
\end{figure}



\subsection{SfM with missing and corrupted data on Dinosaur} \label{sec:expt_outlier}

In this section, we apply PARSuMi to the problem of SfM using the Dinosaur sequence and investigate how well the corrupted entries can be detected and recovered in real data. We have normalized image pixel dimensions (width and height) to be in the range [0,1]; all plots, unless otherwise noted, are shown in the normalized coordinates.

To simulate 
data corruptions arising from wrong feature matches,
we randomly add sparse error of the range [-2,2]\footnote{In SfM data corruptions are typically matching failures. Depending on where true matches are, error induced by a matching failure can be arbitrarily large. If we constrain true match to be inside image frame $[0,1]$(which is often not the case), then the maximum error magnitude is $1$. We found it appropriate to at least double the size to account for general matching failures in SfM, hence $[-2,2]$.} to 1\% of the sampled entries. This is a more realistic (and much larger\footnote{[-50,50] in pixel is only about [-0.1,0.1] in our normalized data, which could hardly be regarded as ``gross'' corruptions.}) definition of outliers for SfM compared to the [-50,50] pixel range used to evaluate Wiberg $\ell_1$ in \citet{WibergL1}.
\begin{table}[t]
\centering
\scriptsize{
\begin{tabular}{|p{2.4cm}|c|c|c|}
  \hline
   & PARSuMi & Wiberg $\ell_1$ & GRASTA\\
   \hline
  No. of success & 9/10 & 0/10& 0/10\\
  \hline
  Run time (mins):
  min/avg/max  & 2.2/2.9/5.2 &  76/105/143 & 0.2/0.5/0.6 \\
  \hline
  Min RMSE (original pixel unit)& 1.454  & 2.715& 22.9\\
  \hline
  Min RMSE excluding corrupted entries & 0.3694 & 1.6347 &21.73\\
  \hline
\end{tabular}
}
\caption{Summary of the Dinosaur experiments. Note that because there is no ground truth for the missing data, the RMSE is computed only for those observed entries as in \citet{Damped_Newton_2005}. }
\label{tab:SummaryDinosaur}
\end{table}

We conducted the experiment 10 times each for PARSuMi, Wiberg $\ell_1$ (with SVD initialization) and GRASTA (random initialization as recommended in the original paper) and count the number of times they succeed. As there are no ground truth to compare against, we cannot use the RMSE to evaluate the quality of the filled-in entries. Instead, we plot the feature trajectory of the recovered data matrix for a qualitative judgement. As is noted in \citet{Damped_Newton_2005}, a correct recovery should consist of all elliptical trajectories. Therefore, if the recovered trajectories look like that in Fig.~\ref{fig:Traj_DN_global}, we count the recovery as a success.
\begin{figure}[ht]
 \centering
  \subfigure[\scriptsize{Input}]{
 \includegraphics[width=0.465\linewidth]{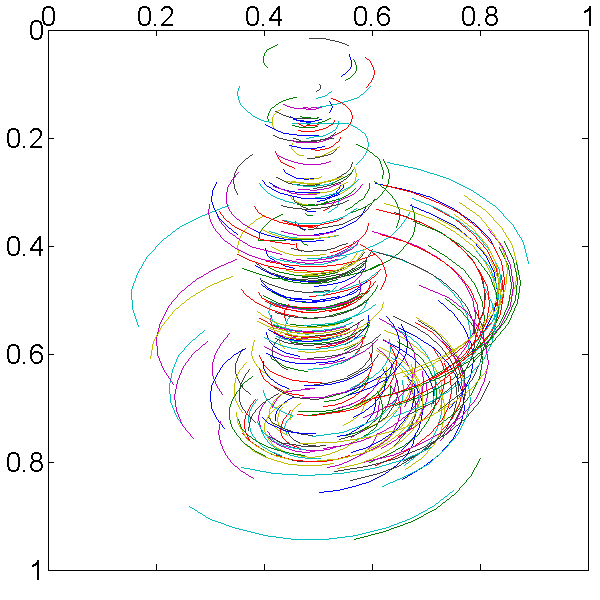}
 \label{fig:Traj_Input}
 }
  \subfigure[\scriptsize{ALM-RPCA}]{
 \includegraphics[width=0.465\linewidth]{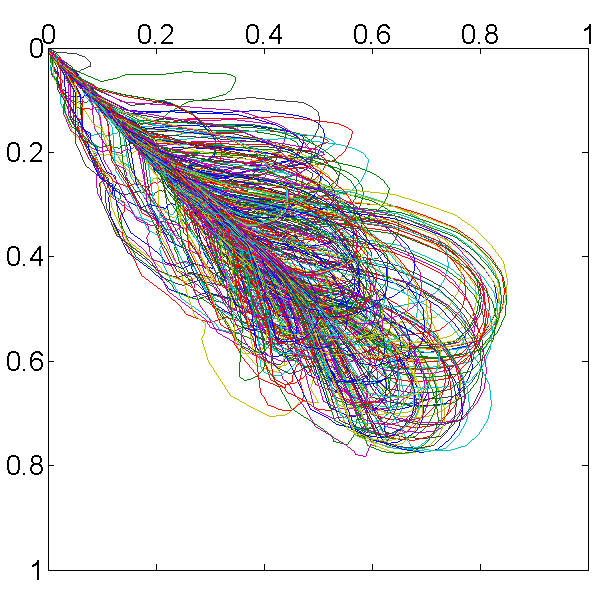}
 \label{fig:Traj_ALM}
 }

 \subfigure[\scriptsize{Wiberg $\ell_1$}]{
 \includegraphics[width=0.465\linewidth]{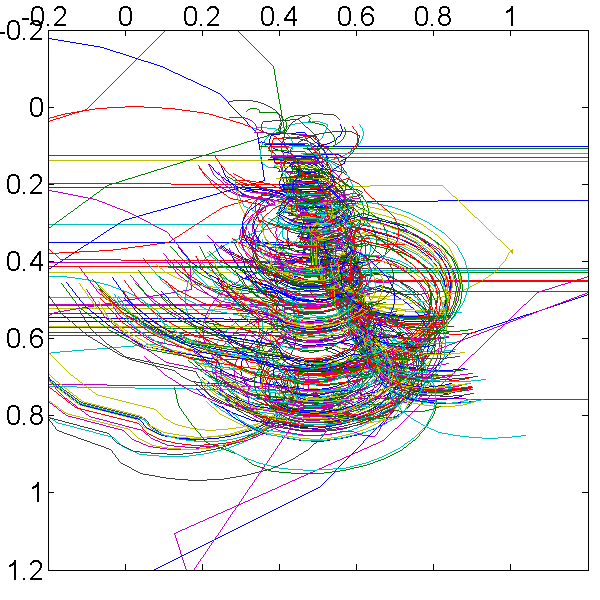}
 \label{fig:Traj_Wibergl1}
 }
  \subfigure[\scriptsize{GRASTA}]{
 \includegraphics[width=0.465\linewidth]{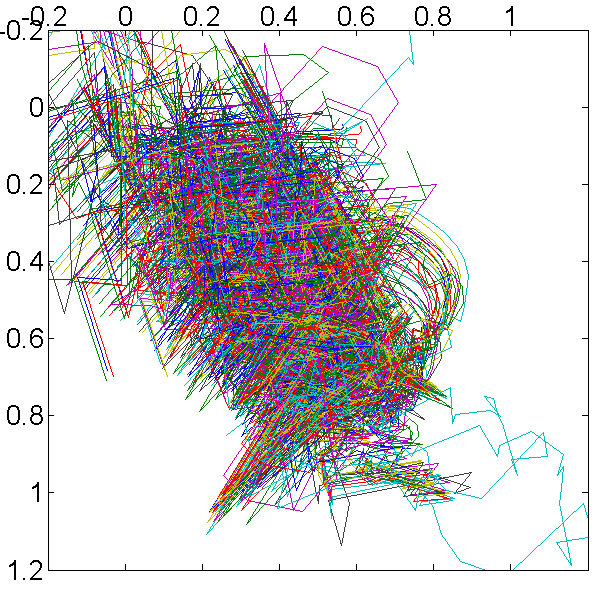}
 \label{fig:Traj_GRASTA}
 }
  \subfigure[\scriptsize{Our initialization}]{
 \includegraphics[width=0.465\linewidth]{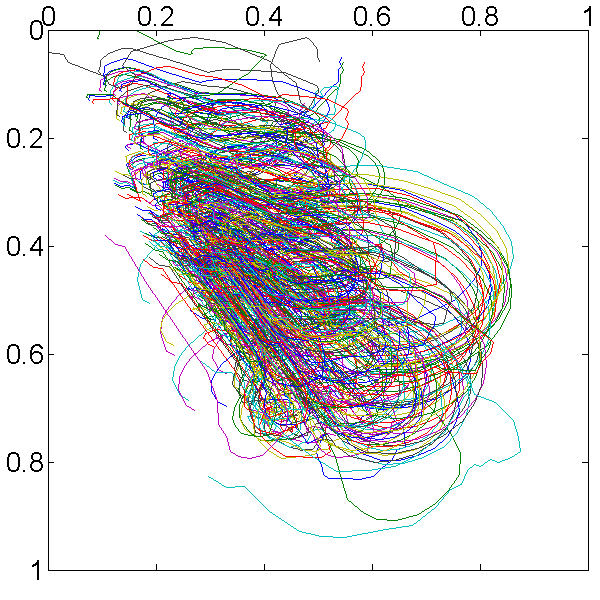}
 \label{fig:Traj_Convex}
 }
 \subfigure[\scriptsize{PARSuMi}]{
 \includegraphics[width=0.465\linewidth]{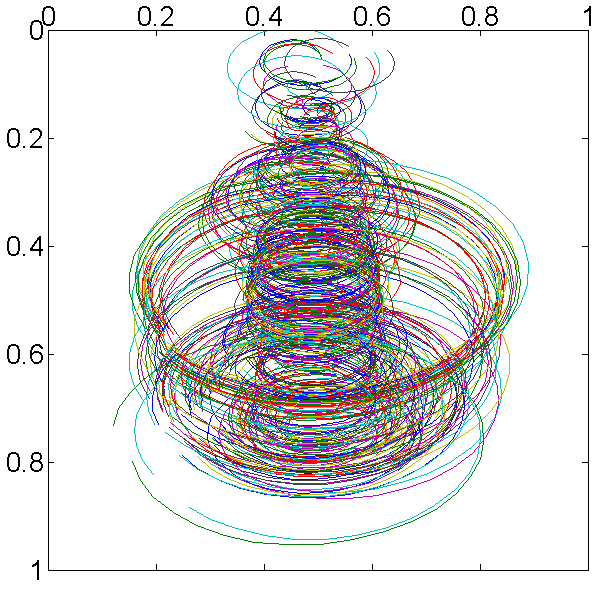}
 \label{fig:Traj_PARSuMi}
 }
 \caption{\small{Comparison of recovered feature trajectories with different methods. It is clear that under dense noise and gross outliers, neither convex relaxation nor $\ell_1$ error measure yields satisfactory results. Solving the original non-convex problem with \subref{fig:Traj_Convex} as an initialization produces a good solution. }}
 \label{fig:trajectory comparison}
\end{figure}

The results are summarized in Table~\ref{tab:SummaryDinosaur}. Notably, PARSuMi managed to correctly detect the corrupted entries and fill in the missing data in 9 runs while Wiberg~$\ell_1$ and GRASTA failed on all 10 attempts. Typical feature trajectories recovered by each method are shown in Fig.~\ref{fig:trajectory comparison}. Note that only PARSuMi is able to recover the elliptical trajectories satisfactorily.

For comparison, we also include the input (partially observed trajectories) and the results of our convex initialization in Fig.~\ref{fig:Traj_Input}~and~\ref{fig:Traj_Convex} respectively.

An interesting and somewhat surprising finding is that the result of PARSuMi is even better than the global optimal solution for data containing supposedly no corruptions (and thus can be obtained with $\ell_2$ method) (see Fig.~\ref{fig:Traj_DN_global}, which is obtained under no corruptions in the observed data)! In particular, the trajectories are now closed.

The reason becomes clear when we look at Fig.~\ref{fig:Convex_Recovery_Residue}, which shows two large spikes in the vectorized difference between the artificially injected corruptions and the recovered corruptions by PARSuMi. This suggests that there are hitherto unknown corruptions inherent in the Dinosaur data. We trace the two large ones into the raw images, and find that they are indeed data corruptions corresponding to mismatched feature points from the original dataset; our method managed to recover the correct feature matches (left column of Fig.~\ref{fig:Outliers_in_the_Dinosaur_data}).


\begin{figure}[tb]
 \centering
 \subfigure[\scriptsize{Initialization via Algorithm~\ref{alg:APG} and the final recovered errors by PARSuMi (Algorithm~\ref{alg:PARSuMi})}]{
 \includegraphics[width=\linewidth]{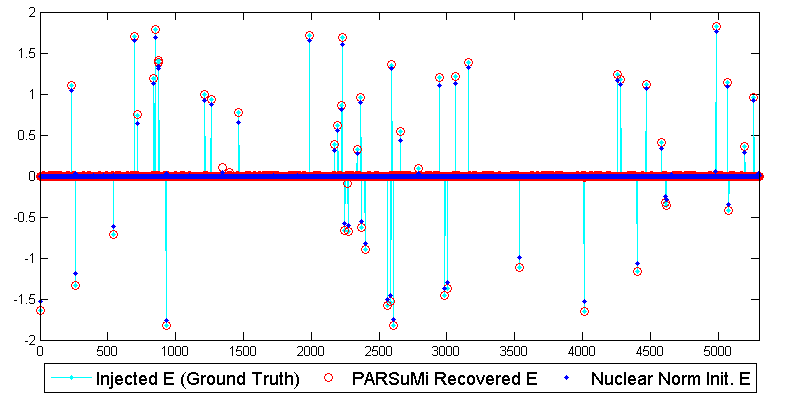}
 \label{fig:Convex_Recovery_Groundtruth}
 }
 \subfigure[\scriptsize{Difference of the recovered and ground truth error (in original pixel unit)}]{
 \includegraphics[width=\linewidth]{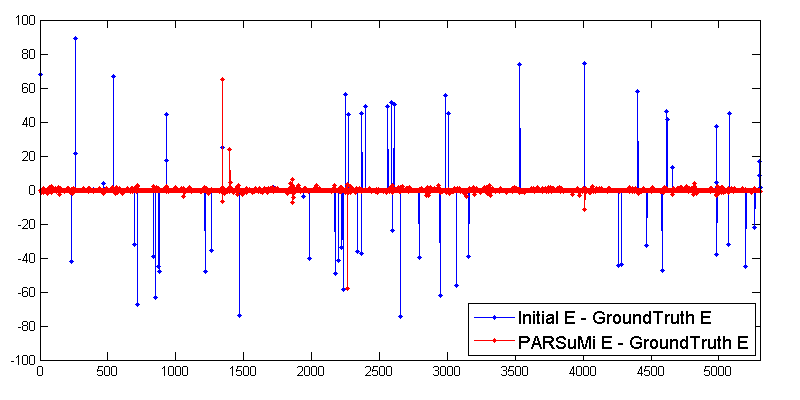}
 \label{fig:Convex_Recovery_Residue}
 }
 \caption[Optional caption for list of figures]{\small{Sparse corruption recovery in the Dinosaur experiments: The support of all injected outliers are detected  by Algorithm~\ref{alg:APG} (see \subref{fig:Convex_Recovery_Groundtruth}), but the magnitudes fall short by roughly 20\% (see \subref{fig:Convex_Recovery_Residue}). Algorithm~\ref{alg:PARSuMi} is able to recover all injected sparse errors, together with the inherent tracking errors in the dataset (see the red spikes in \subref{fig:Convex_Recovery_Residue}).}}
 \label{fig:Convex_error_recovery}
\end{figure}
\begin{figure}[tb]
\centering
\subfigure[Zoom-in view: recovered matching error in frame 13]{
\includegraphics[width=8cm]{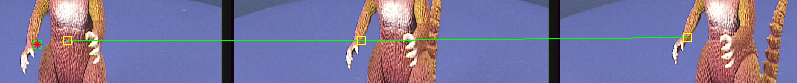}
\label{fig:Outlier_in_frame_13Zoom}
}
 \subfigure[Zoom-in view: recovered matching error in frame 15]{
 \includegraphics[width=8cm]{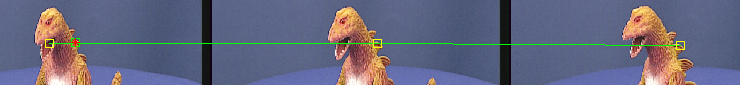}
 \label{fig:Outlier_in_frame_15Zoom}
 }
\caption{\small{Original tracking errors in the Dinosaur data identified (yellow box) and corrected by PARSuMi (green box with red star) in frame 13 feature 86
 \subref{fig:Outlier_in_frame_13Zoom} and frame 15 feature 144
 \subref{fig:Outlier_in_frame_15Zoom}.}}
\label{fig:Outliers_in_the_Dinosaur_data}
\end{figure}

The result shows that PARSuMi recovered not only the artificially added errors, but also the intrinsic errors in the data set. In \citet{Damped_Newton_2005}, it was observed that there is a mysterious increase of the objective function value upon closing the trajectories by imposing orthogonality constraint on the factorized camera matrix. Our discovery of these intrinsic tracking errors explained this matter evidently. It is also the reason why the $\ell_2$-based algorithms (see Fig.~\ref{fig:Traj_DN_global}) find a global minimum solution that is of poorer quality (trajectories fail to close loop).

To complete the story, we generated the 3D point cloud of Dinosaur with the completed data matrix. The results viewed from different directions are shown in Fig.~\ref{fig:Point cloud}.
\begin{figure}[ht]
 \centering
 \includegraphics[width=0.47\linewidth]{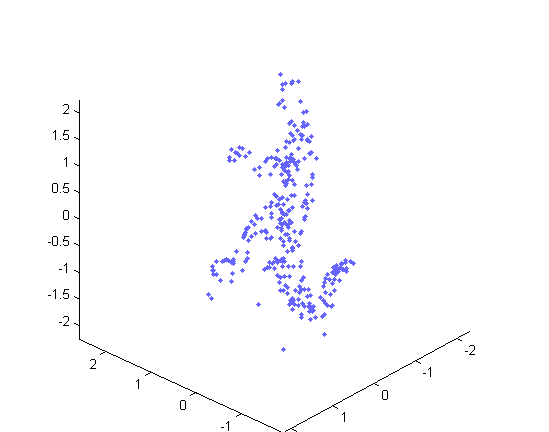}
 \includegraphics[width=0.47\linewidth]{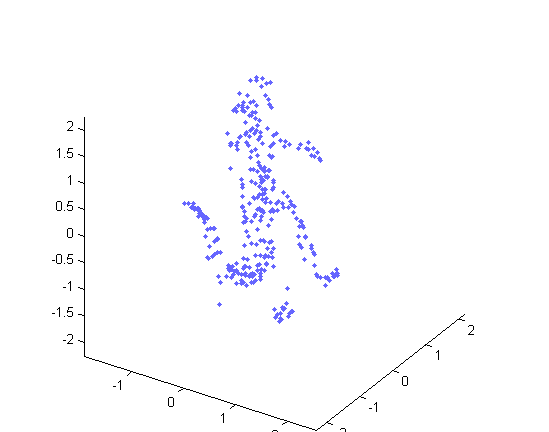}
 \caption[Optional caption for list of figures]{3D point cloud of the reconstructed Dinosaur.}
 \label{fig:Point cloud}
\end{figure}

\subsection{Photometric Stereo}
Another intuitive application for PARSuMi is photometric stereo, a problem of reconstructing the 3D shape of an object from images taken under different lighting conditions. In the most ideal case of Lambertian surface model (diffused reflection), the data matrix obtained by concatenating vectorized images together is of rank 3.

Real surfaces are of course never truly Lambertian. There are usually some localized specular regions appearing as highlights in the image.  Moreover, since there is no way to obtain a negative pixel value, all negative inner products will be observed as zero. This is the so-called attached shadow. Images of non-convex object often also contain cast shadow, due to the blocking of light path. If these issues are teased out, then the seemingly naive Lambertian model is able to approximate many surfaces very well.

\citet{Wu_photometric} subscribed to this low-rank factorization model and proposed to model all dark regions as missing data, all highlights as sparse corruptions and then use a variant of RPCA (identical to \eqref{eq:RMC_nuc}) to recover the full low-rank matrix. The solution however is only tested on noise-free synthetic data and toy-scale real examples.
\citet{DelBue2012balm} applied their BALM on photometric stereo too, attempting on both synthetic and real data. Their contribution is to impose the normal constraint of each normal vector during the optimization.

We compare PARSuMi with the aforementioned two methods on several photometric stereo datasets. Quantitatively, we use the Caesar and Elephant data in \citet{Wu_photometric} and compare the reconstructed surface normal against the ground truth. The data is illustrated in Fig.~\ref{fig:CaesarElephant} and the comparison is detailed in Table~\ref{tab:photometrc}. As we can see, PARSuMi has the smallest reconstruction error among the three methods in all experiments.
\begin{figure}
  \centering
  \includegraphics[width=\linewidth]{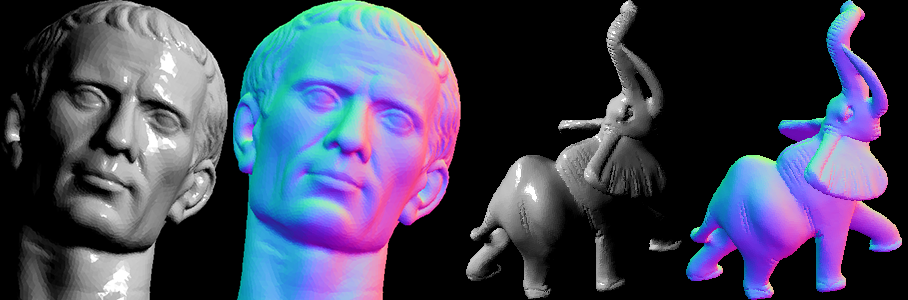}\\
  \caption{Illustration of the synthetic data and their surface normal. Note that there are specular regions and shadows.}\label{fig:CaesarElephant}
\end{figure}

\begin{table*}
  \centering
  \begin{tabular}{|l|l|p{3cm}|p{3cm}|p{3cm}|}
     \hline
     Dataset & PARSuMi & ALM-RPCA \citep{Wu_photometric} & BALM \citep{DelBue2012balm} & Oracle (a lower bound) \\
      \hline
     Elephant & \textbf{7.13e-2} (16.7 min) & 7.87e-2 (1.1 min) & 3.55 (1.1 min) & model error\\
     Caesar & \textbf{1.83e-1} (28.6 min) & 2.71e-1 (7.2 min) & 3.11 (5.2 min) & model error\\
     Elephant + $\mathcal{N}(0,0.05)$ & \textbf{2.35} (28.3 min) & 2.62 (1.5 min) & 4.37 (1.1 min)& 1.70 + model error\\
     Caesar + $\mathcal{N}(0,0.05)$ & \textbf{2.34 }(99.2 min) & 2.53 (8.3 min)& 4.06  (6.6 min) &1.73 + model error\\
     \hline
   \end{tabular}
  \caption{Angular error (in degree) and runtime (in minutes) comparison for the synthetic data photometric stereo experiments. The lowest estimation error is labeled in boldface. The oracle column gives the information-theoretic limit, it depends on an unknown model error as we are using a Lambertian model to deal with the data rendered by the Cook-Torrence model.}\label{tab:photometrc}
\end{table*}

We also conducted a qualitative comparison of the methods on a real-life data using Subject 3 in the Extended YaleB dataset since it was initially used to evaluate BALM in \citet{DelBue2012balm}\footnote{The authors claimed that it is Subject 10 \citep[Figure~9]{DelBue2012balm}, but careful examination of all faces shows that it is in fact Subject 3.}. As we do not have any ground truth, we can only compare the reconstruction qualitatively.

From Fig.~\ref{fig:YaleB_comparison}, we can clearly see that PARSuMi is able to recover the missing pixels in the image much better than the other two methods. In particular, Fig.~\ref{subfig:missing_data_comparison} and \ref{subfig:missing_data_comparison_zoom-in} shows that PARSuMi's reconstruction (in the illuminated half of the face) has fewest artifacts. This can be seen from the unnatural grooves that the red arrows point to in Fig.~\ref{subfig:missing_data_comparison_zoom-in}. Moreover, we know from the original image that the light comes from the right-hand-side of the subject; thus all the pixels on the left side of his face (e.g. the red ellipse area in Fig.~\ref{subfig:missing_data_comparison_zoom-in}) should have negative filled-in values and therefore should be dark in the image. Neither BALM nor ALM-RPCA's reconstructed images comply to this physical law.

To see this more clearly, we invert the pixel values of the reconstructed image in Fig.~\ref{subfig:missing_data_comparison_negative}.
This is equivalent to inverting the direction of lighting. From the tag of the image, we know that the original lighting is  $-20^{\circ}$ from the subject's right posterior and $40^{\circ}$ from the top, so the inverted light should illuminate the left half of his face from $20^{\circ}$ left frontal and $40^{\circ}$ from below. As is shown in the comparison, only PARSuMi's result revealed what should be correctly seen with a light shining from this direction.


In addition, we reconstruct the 3D depth map with the classic method by \citet{horn1990sfs}. In Fig.~\ref{subfig:shape_comparison}, the shape from PARSuMi reveals much richer depth information than those from the other two algorithms, whose reconstructions appear flattened. This is a known low-frequency bias problem for photometric stereo and it is often caused by errors in the surface normal estimation \citet{nehab2005efficiently}. The fact that BALM and ALM-RPCA produces a flatter reconstruction is a strong indication that their estimations of the surface normal are noisier than that of PARSuMi.


\begin{figure}[htb]
  \centering
  \subfigure[\scriptsize{Comparison of the recovered image}]{
  \includegraphics[width=\linewidth]{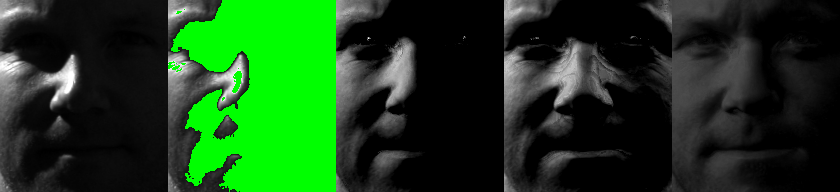}
  \label{subfig:missing_data_comparison}
  }
  \subfigure[\scriptsize{Comparison of the recovered image (details)}]{
  \includegraphics[width=\linewidth]{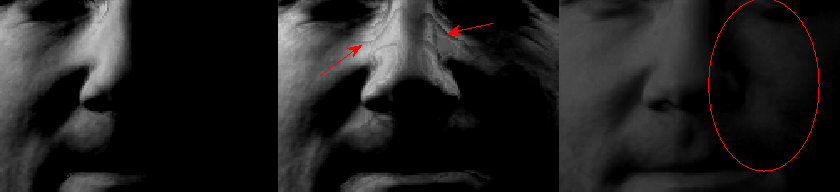}
  \label{subfig:missing_data_comparison_zoom-in}
  }
    \subfigure[\scriptsize{Taking the negative of \subref{subfig:missing_data_comparison_zoom-in} to see the filled-in missing pixels. This is as if the lighting direction is inverted.}]{
  \includegraphics[width=\linewidth]{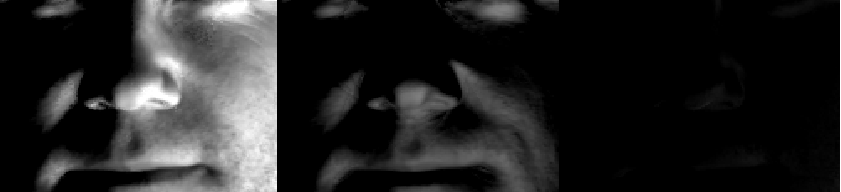}
  \label{subfig:missing_data_comparison_negative}
  }
  \subfigure[\scriptsize{Comparison of the reconstructed 3D surfaces (albedo rendered).}]{
  \includegraphics[width=0.32\linewidth]{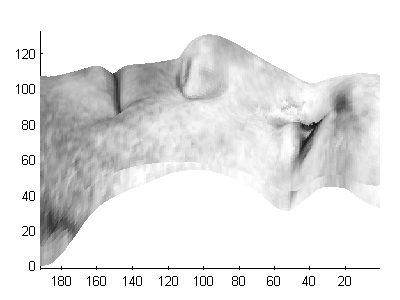}
  \includegraphics[width=0.32\linewidth]{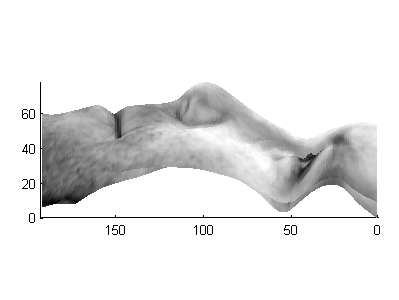}
  \includegraphics[width=0.32\linewidth]{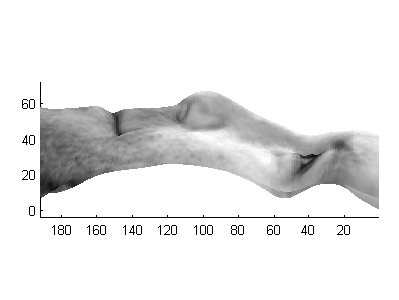}
  \label{subfig:shape_comparison}
  }
  \caption{Qualitative comparison of algorithms on Subject 3. From left to right, the results are respectively for PARSuMi, BALM and ALM-RPCA. In \subref{subfig:missing_data_comparison}, they are preceded by the original image and the image depicting the missing data in green.}\label{fig:YaleB_comparison}
\end{figure}

From our experiments, we find that PARSuMi is able to successfully reconstruct the 3D face for all 38 subjects with little artifacts. As illustrated in Fig.~\ref{fig:YaleB_reconstruction}, our 3D reconstructions of the features seem to reveal the characteristic features of subjects across different ethnic groups.
Moreover, due to the robust $\ell_0$ penalty, PARSuMi is able to effectively recover the input images from many different types of irregularities, e.g. specular regions, different facial expressions, or even image corruptions caused by hardware malfunctions (see Fig.~\ref{fig:illustration_of_PARSuMi_recovery}~and~\ref{fig:comparison_yale_recovery}). This makes it possible for PARSuMi to be integrated reliably into engineering systems that function with minimal human interactions  \footnote{For the best of our knowledge, all previous works that use this dataset for photometric 3D reconstruction manually removed a number of images of poor qualities, e.g.\citet{DelBue2012balm}}.

\begin{figure}[htb]
  \centering
  \subfigure[\scriptsize{Subject 02}]{
  \includegraphics[width=0.2\linewidth]{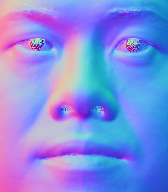}
  \includegraphics[width=0.25\linewidth]{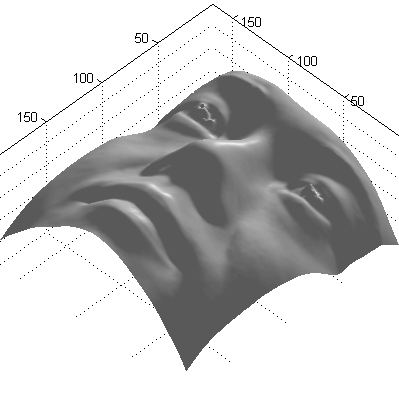}
  \label{subfig:sub02}
  }
  \subfigure[\scriptsize{Subject 5}]{
  \includegraphics[width=0.2\linewidth]{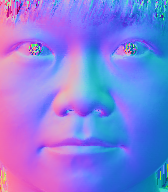}
  \includegraphics[width=0.25\linewidth]{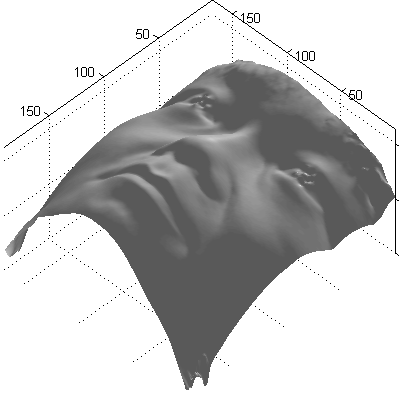}
  \label{subfig:sub05}
  }
  \subfigure[\scriptsize{Subject 10}]{
  \includegraphics[width=0.2\linewidth]{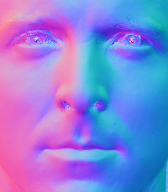}
  \includegraphics[width=0.25\linewidth]{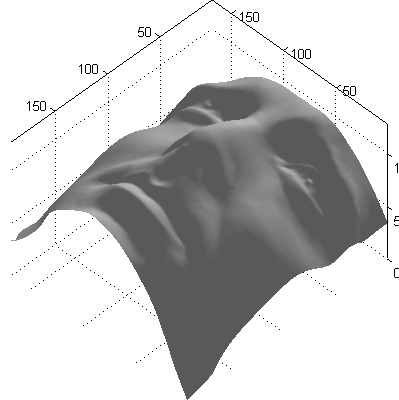}
  \label{subfig:sub10}
  }
  \subfigure[\scriptsize{Subject 15}]{
  \includegraphics[width=0.2\linewidth]{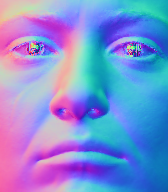}
  \includegraphics[width=0.25\linewidth]{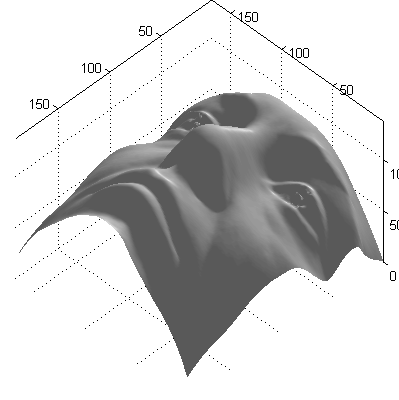}
  \label{subfig:sub15}
  }
  \subfigure[\scriptsize{Subject 12}]{
  \includegraphics[width=0.2\linewidth]{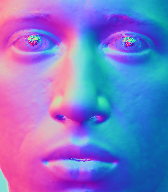}
  \includegraphics[width=0.25\linewidth]{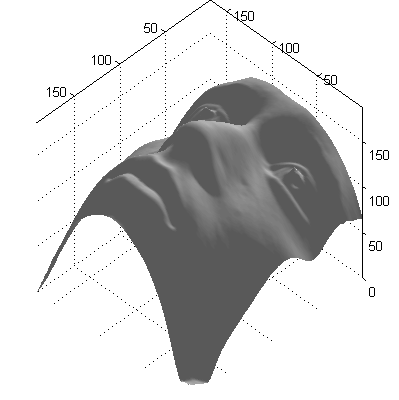}
  \label{subfig:sub12}
  }
  \subfigure[\scriptsize{Subject 22}]{
  \includegraphics[width=0.2\linewidth]{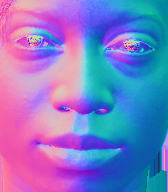}
  \includegraphics[width=0.25\linewidth]{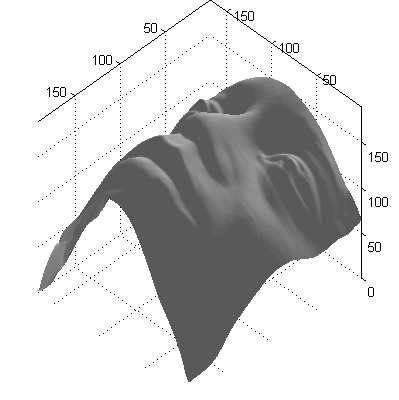}
  \label{subfig:sub22}
  }
  \caption{The reconstructed surface normal and 3D shapes for Asian (first row), Caucasian (second row) and African (third row), male (first column) and female (second column), in Extended YaleB face database.(Zoom-in to look at details)}
  \label{fig:YaleB_reconstruction}
\end{figure}

\begin{figure}[htb]
  \centering
  \subfigure[\scriptsize{Cast shadow and attached shadow are recovered. Region of cast shadow is now visible, and attached shadow is also filled with meaningful negative values.}]{
  \includegraphics[width=0.98\linewidth]{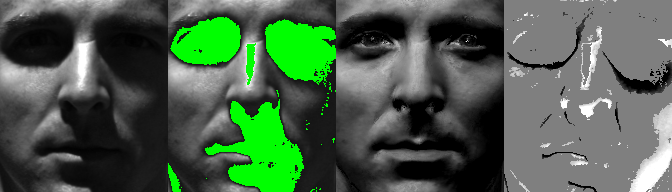}
  \label{subfig:cast_shadow}
  }
  \subfigure[\scriptsize{Facial expressions are set to normal.}]{
  \includegraphics[width=0.98\linewidth]{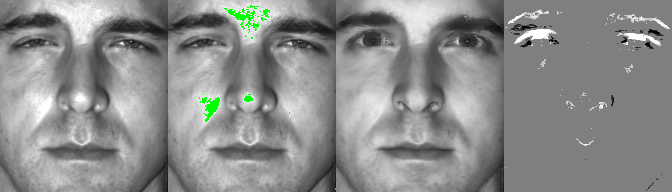}
  \label{subfig:facial expression}
  }\\
  \subfigure[\scriptsize{Rare corruptions in image acquisition are recovered.}]{
  \includegraphics[width=0.98\linewidth]{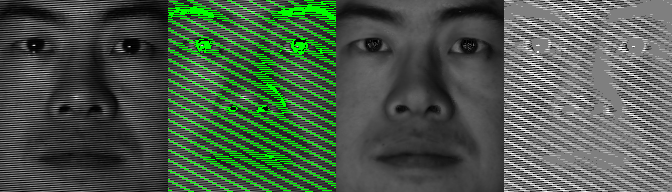}
  \label{subfig:corruption}
  }
  \subfigure[\scriptsize{Light comes 20 degrees from behind and 65 degrees from above).}]{
  \includegraphics[width=0.98\linewidth]{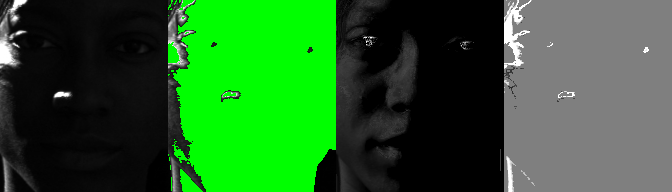}
  \label{subfig:lowsampling}
  }\\
  \caption{Illustrations of how PARSuMi recovers missing data and corruptions. From left to right: original image, input image with missing data labeled in green, reconstructed image and detected sparse corruptions.}
  \label{fig:illustration_of_PARSuMi_recovery}
\end{figure}

\subsection{Speed}
The computational complexity of PARSuMi is cheap for some problems but not for others. Since PARSuMi uses LM\_GN for its matrix completion step, the numerical cost is dominated by either solving the linear system $(J^TJ+\lambda I)\delta=J\mathbf{r}$ which requires
the Cholesky factorization of a potentially dense $mr\times mr$ matrix, or the computation of $J$ which requires solving a small linear system of normal equation involving
the $m\times r$ matrix $N$ for $n$ times. As the overall complexity of $O(\max(m^3r^3, mnr^2))$ scales merely linearly with number of columns $n$ but cubic with $m$ and $r$, PARSuMi is computationally attractive when solving problems with small $m$ and $r$, and potentially large $n$, e.g., photometric stereo and SfM (since the number of images is usually much smaller than the number of pixels and feature points). However, for a typical million by million data matrix as in social networks and collaborative filtering, PARSuMi will take an unrealistic amount of time to run.

Experimentally, we compare the runtime between our algorithm and Wiberg $\ell_1$ method in our Dinosaur experiment in Section~\ref{sec:expt_outlier}. Our Matlab implementation is run on a 64-bit Windows machine with a 1.6 GHz Core i7 processor and 4 GB of memory. We see from Table~\ref{tab:SummaryDinosaur} that there is a big gap between the speed performance. The near 2-hour runtime for Wiberg~$\ell_1$ is discouragingly slow, whereas ours is vastly more efficient. On the other hand, as an online algorithm, GRASTA is inherently fast. Examples in \citet{he2011grasta} show that it works in real time for live video surveillance. However, our experiment suggests that it is probably not appropriate for applications such as SfM, which requires a higher numerical accuracy.

The runtime comparison for the photometric stereo problems is shown in Table~\ref{tab:photometrc}. We remark that PARSuMi is roughly ten times slower than other methods. The pattern is consistent for the YaleB face data too, where PARSuMi takes 23.4 minutes to converge while BALM and RPCA takes only 4.8 and 1.7 minutes respectively.

We note that PARSuMi is currently not optimized for computation. Speeding up the algorithm for application on large scale dataset would  require further effort (such as parallelization) and could be a new topic of research. For instance, the computation of Jacobians $J_i$ and the evaluation of the objective function can be easily done in parallel and the Gauss-Newton update (a positive definite linear system of equations) can be solved using the conjugate gradient method; hence, we do not even need to store the matrix in memory. Furthermore, since PARSuMi seeks to find the best subspace, perhaps using only a small portion of the data columns is sufficient. If the subspace is correct, the rest of the columns can be recovered in linear time with our iterative reweighted Huber regression technique (see Section~\ref{sec:heuristics}). A good direction for future research is perhaps on how to choose the best subset of data to feed into PARSuMi.

\section{Conclusion}
In this paper, we have presented a practical algorithm (PARSuMi) for low-rank matrix completion in the presence of dense noise and sparse corruptions. Despite the non-convex and non-smooth optimization formulation, we are able to derive a set of update rules under the proximal
alternating scheme such that the convergence to a critical point can be guaranteed.  The method was tested on both synthetic and real life data with challenging sampling and corruption patterns. The various experiments we have conducted show that our method is able to detect and remove gross corruptions, suppress noise and hence provide a faithful reconstruction of the missing entries. By virtue of the explicit constraints on both the matrix rank and cardinality,
and the novel reformulation, design and implementation of appropriate algorithms
for the non-convex and non-smooth model,
our method works significantly better than the state-of-the-art algorithms in nuclear norm minimization, $\ell_2$ matrix factorization and $\ell_1$ robust matrix factorization in real life problems such as SfM and photometric stereo.


Moreover, we have provided a comprehensive review of the existing results pertaining to the ``practical matrix completion'' problem  that we considered in this paper. The review covered the theory of matrix completion and corruption recovery, and the theory and algorithms for matrix factorization. In particular, we conducted extensive numerical experiments which reveals (a) the advantages of matrix factorization over nuclear norm minimization when the underlying rank is known, and (b) the two key factors that affect the chance of $\ell_2$-based factorization methods reaching global optimal solutions, namely ``subspace parameterization'' and ``Gauss-Newton'' update. These findings provided critical insights into this difficult problem, upon the basis which we developed PARSuMi as well as its convex initialization.


The strong empirical performance of our algorithm calls for further analysis. For instance, obtaining the theoretical conditions for the convex initialization to yield good support of the corruptions should be plausible (following the line of research discussed in Section~\ref{sec:MC_theory}), and this in turn guarantees a good starting point for the algorithm proper. Characterizing how well the following non-convex algorithm works given such initialization and how many samples are required to guarantee high-confidence recovery of the matrix remain open questions for future study.

Other interesting topics include finding a cheaper but equally effective alternative
to the LM\_GN solver for solving \eqref{eq:Subspace_MC_new},
parallel/distributed computation, incorporating additional structural constraints, selecting optimal subset of data for subspace learning and so on. Step by step, we hope this will eventually lead to a practically working robust matrix completion algorithm that can be confidently embedded in real-life applications.
\vspace{-3mm}
\appendix
\section{Appendix}
\addappheadtotoc
\vspace{-3mm}

\subsection{Proofs}
\begin{proof}[Proof of Proposition~\ref{prop:E_update}] Given a subset $I$ of $\{1,\dots,|\Omega|\}$ with cardinality at most $N_0$
such that $b_I\not = 0$. Let $J = \{1,\dots,|\Omega|\} \backslash I$.
Consider the problem \eqref{eq:MC_E2} for $x\in \R^{|\Omega|}$ supported on $I$, we
get the following:
\begin{eqnarray*}
 v_I := \min_{x_I} \Big\{  \norm{x_I- b_I}^2  + \norm{b_J}^2 \mid   \norm{x_I}^2 -K_E^2\leq 0\Big\},
\end{eqnarray*}
which is a convex minimization problem whose optimality conditions are given by
\begin{eqnarray*}
  x_I - b_I + \mu \, x_I = 0, \; \mu (\norm{x_I}^2-K_E^2) =0, \; \mu \geq 0
\end{eqnarray*}
where $\mu$ is the Lagrange multiplier for the inequality constraint.
First consider the case where $\mu > 0$. Then we get $x_I=K_E b_I/\norm{b_I}$, and
$1+\mu = \norm{b_I}/K_E$ (hence $\norm{b_I} > K_E$). This implies that
$
 v_I =   \norm{b}^2 +K_E^2 - 2 \norm{b_I}K_E.
$
On the other hand, if $\mu=0$, then we have $x_I = b_I$ and
$v_I = \norm{b_J}^2 = \norm{b}^2-\norm{b_I}^2$.
Hence
\begin{eqnarray*}
 v_I = \left\{ \begin{array}{ll}
   \norm{b}^2 +K_E^2 - 2 \norm{b_I}K_E & \mbox{if $\norm{b_I} > K_E$}
\\[5pt]
 \norm{b}^2-\norm{b_I}^2 & \mbox{if $\norm{b_I}\leq K_E$.}
\end{array} \right.
\end{eqnarray*}
In both cases,
it is clear that $v_I$ is minimized if $\norm{b_I}$ is maximized.
Obviously $\norm{b_I}$ is maximized if $I$ is chosen to be
the set of indices corresponding to the $N_0$ largest components of $b$.
\qed
\end{proof}

\begin{proof}[Proof of Theorem~\ref{thm:critical_pt}]
(a) The equality in \eqref{eq-thm1-1} follows directly
from \eqref{eq-LhL}.
By the minimal property of $\hx_{k+1}$, we have that
\begin{eqnarray}
 \widehat{L}(\hx_{k+1};x_k,y_k) \;\leq\;  \widehat{L}(\xi;x_k,y_k) \quad \forall\;
\xi\in \cX.
\label{eq-thm1-a2}
\end{eqnarray}
Thus when $\xi = x_k$, we get
$\widehat{L}(\hx_{k+1};x_k,y_k)\leq \widehat{L}(x_k;x_k,y_k) = L(x_k,y_k)$,
and the required inequality in \eqref{eq-thm1-1} follows.
On the other hand, the inequality \eqref{eq-thm1-2} follows readily from the
definition of $x_{k+1}$.
\\[5pt]
(b) If $x_{k+1} = \tx_{k+1}$, then
from the definition of $x_{k+1}$ and  \eqref{eq-thm1-1}, we have that,
\begin{equation}\label{eq-thm1-b1}
\begin{aligned}
   &L(x_{k+1},y_k) +  \frac{1}{2}\norm{x_{k+1}-x_k}_S^2\\
\leq& L(\hx_{k+1},y_k) +\frac{1}{2}\norm{\hx_{k+1}-x_k}_S
  \leq  L(x_k,y_k).
\end{aligned}
\end{equation}
On the other hand, if $x_{k+1} =\hx_{k+1}$, we have from
\eqref{eq-thm1-1}  that
\begin{eqnarray}
   L(x_{k+1},y_k) +  \frac{1}{2}\norm{x_{k+1}-x_k}_S^2
 & \leq & L(x_k,y_k).
\label{eq-thm1-b2}
\end{eqnarray}
By the minimal property of $y_{k+1}$, we have that $ \forall\;\eta\in\cY$
\begin{eqnarray*}
   L(x_{k+1},y_{k+1}) +  \frac{1}{2}\norm{y_{k+1}-y_k}_T^2
 & \leq & L(x_{k+1},\eta) +  \frac{1}{2}\norm{\eta-y_k}_T^2.
\end{eqnarray*}
In particular, when $\eta=y_k$, we get
\begin{eqnarray}
   L(x_{k+1},y_{k+1}) +  \frac{1}{2}\norm{y_{k+1}-y_k}_T^2 \;\leq\;
 L(x_{k+1},y_k).
\label{eq-thm1-c}
\end{eqnarray}
By combining \eqref{eq-thm1-b1}-\eqref{eq-thm1-b2} and \eqref{eq-thm1-c}, we get
the inequality \eqref{eq-thm1-3}.
\\[5pt]
(c) Note that by using the result in part (b), we also have
$\lim_{k'\rightarrow\infty} x_{k'+1} = \bar{x}$ and $\lim_{k'\rightarrow\infty} y_{k'+1} = \bar{y}$.
From \eqref{eq-thm1-1}, \eqref{eq-thm1-2} and \eqref{eq-thm1-a2}, we have $\forall\; k\geq 0,\; \xi\in\cX$
\begin{eqnarray}
 L(x_{k+1},y_k) + \frac{1}{2}\norm{x_{k+1}-x_k}_S^2 \leq \widehat{L}(\xi;x_k,y_k).
\end{eqnarray}
Thus $\forall\; \xi\in\cX$
\begin{eqnarray}
 \limsup_{k'\rightarrow\infty} f(x_{k'+1}) + q(\bar{x},\bar{y}) \leq
f(\xi) + Q(\xi;\bar{x},\bar{y}).
\end{eqnarray}
By taking $\xi = \bar{x}$, we get
\begin{equation}
 \limsup_{k'\rightarrow\infty} f(x_{k'+1})  \leq
f(\bar{x}) + Q(\bar{x};\bar{x},\bar{y}) - q(\bar{x},\bar{y})
= f(\bar{x}) .
\end{equation}
On the other hand, since $f$ is lower semicontinuous, we have
that $\liminf_{k'\rightarrow\infty} f(x_{k'+1})  \geq f(\bar{x})$.
Thus $\lim_{k'\rightarrow\infty}$ $f(x_{k'+1})  = f(\bar{x})$.
Similarly, we can show that $\lim_{k'\rightarrow\infty}$ $g(y_{k'+1})  = g(\bar{y})$.
As a result, we have
\begin{eqnarray}
 \lim_{k'\rightarrow\infty} L(x_{k'+1},y_{k'+1}) = L(\bar{x},\bar{y}).
\end{eqnarray}
Since $\{ L(x_k,y_k)\}$ is a nonincreasing sequence, the above result implies that
$$
  \lim_{k\rightarrow\infty} L(x_k,y_k) = L(\bar{x},\bar{y}) = \inf_{k} L(x_k,y_k).
$$
Also, \eqref{eq-thm1-b1}-\eqref{eq-thm1-b2} and \eqref{eq-thm1-c} implies that
$$
  \lim_{k\rightarrow\infty} L(x_{k+1},y_k) = L(\bar{x},\bar{y}).
$$
From \eqref{eq-thm1-1} and \eqref{eq-thm1-2}, we have
\begin{align*}
L(x_{k+1},y_k) + \frac{1}{2}\norm{x_{k+1}-x_k}_S^2\\
 \;\leq\; \widehat{L}(\hx_{k+1};x_k,y_k)
\;\leq\; L(x_k,y_k).
\end{align*}
Thus $\lim_{k\rightarrow\infty} \widehat{L}(\hx_{k+1};x_k,y_k) = L(\bar{x},\bar{y})$.

Now by \eqref{eq-thm1-1} and \eqref{eq-thm1-2} again, we have
\begin{align*}
 \frac{1}{2}\norm{x_{k+1}-x_k}_S^2 + \frac{1}{2}\norm{\hx_{k+1}-x_k}_{M-A^*A-S}^2\\
\leq \widehat{L}(\hx_{k+1};x_k,y_k)  - L(x_{k+1},y_k).
\end{align*}
Thus
$
\lim_{k\rightarrow\infty} \norm{\hx_{k+1}-x_k}_{M-A^*A-S}^2 =0$. Since
$M - A^*A-S \succ 0$, we also get $\lim_{k\rightarrow\infty} \norm{\hx_{k+1}-x_k}^2 =0$.
\\[5pt]
(d) From the optimality of $\hx_{k+1}$, we have that
\begin{align*}
 0 &\in \partial \widehat{L}(\hx_{k+1}; x_k,y_k) \\
 &=  \partial f(\hx_{k+1}) + A^*(A x_{k}+By_k-c) +
M (\hx_{k+1}-x_k)
\\
&=  \partial f(\hx_{k+1}) + A^*(A\hx_{k+1}+By_{k+1}-c)
 - \Delta x_{k+1}
\end{align*}
where $\Delta x_{k+1} = -(M-A^*A) (\hx_{k+1}-x_k)  - A^*B(y_k-y_{k+1})$.
Thus
\begin{eqnarray}
\Delta x_{k+1} \in \partial_x L(\hx_{k+1},y_{k+1}).
\end{eqnarray}
From the optimality of $y_{k+1}$, we have that
\begin{eqnarray*}
 0 &\in& \partial g(y_{k+1})
+ B^*(A x_{k+1}+By_{k+1}-c) + T (y_{k+1}-y_k)
\\[5pt]
&=& \partial_y L (\hx_{k+1},y_{k+1}) + T (y_{k+1}-y_k)
+ B^*A (x_{k+1}-\hx_{k+1})
\end{eqnarray*}
Hence
$
\Delta y_{k+1} := -  T (y_{k+1}-y_k)-B^*A (x_{k+1}-\hx_{k+1})  \in \partial_y L(\hx_{k+1},y_{k+1}).
$

From part (b) and (c), we have
that
\begin{align*}
&\lim_{k'\rightarrow \infty} \norm{\hx_{k'+1}-x_{k'}} = 0,&&
\lim_{k'\rightarrow \infty}\norm{\hx_{k'+1}-x_{k'+1}} =0,\\
&\lim_{k'\rightarrow \infty}\hx_{k'+1} = \bar{x},&&
\lim_{k'\rightarrow \infty} y_{k'+1} = \bar{y}.
\end{align*}
Thus
$$
 \lim_{k'\rightarrow \infty} \Delta x_{k'+1} = 0 =  \lim_{k'\rightarrow \infty} \Delta y_{k'+1}.
$$
By the closedness property of $\partial L$ \cite[Proposition 2.1.5]{clarke1990optimization}, we get
$$
 (0,0) \in \partial L(\bar{x},\bar{y}).
$$
Thus $(\bar{x},\bar{y})$ is a stationary point of L. \qed
\end{proof}

\subsection{Software/code used}
The point cloud in Fig.~\ref{fig:Point cloud} are generated using VincentSfMToolbox \citep{vincentsSFMToolbox}. Source codes of BALM, GROUSE, GRASTA, Damped Newton, Wiberg, LM\_X used in the experiments are released by the corresponding author(s) of \citet{DelBue2012balm,balzano2010grouse,he2011grasta,Damped_Newton_2005,WibergL2,Wu_photometric} and \citet{Subspace_ChenPei_2008}. In particular, we are thankful that \citet{balzano2010grouse} and \citet{Wu_photometric} shared with us a customized version of GROUSE and ALM-RPCA that are not yet released online. For Wiberg $\ell_1$ \citep{WibergL1}, we have optimized the computation for Jacobian and adopted the commercial LP solver: cplex. The optimized code performs identically to the released code in small scale problems, but it is beyond the scope for us to verify for larger scale problems. In addition, we implemented SimonFunk's SVD ourselves. The ALS implementation is given in the released code package of LM\_X. For OptManifold, TFOCS and CVX, we use the generic optimization packages released by the author(s) of \citet{yin2013orthogonality,becker2011tfocs,gb08cvx} and customize for the particular problem. For NLCG, we implement the derivations in \citet{srebro2003weighted} and used the generic NLCG package \citep{nlcg}.

\subsection{Additional experimental results}

Illustration of the decomposition on Subject 3 of Extended YaleB dataset is given in Fig.~\ref{fig:YaleB_decomposition}. Additional qualitative comparisons on the recovery of the image is given in Fig.~\ref{fig:comparison_yale_recovery}.
\begin{figure*}[htb]
  \centering
  \subfigure[\scriptsize{The 64 original face images}]{
  \includegraphics[width=0.48\linewidth]{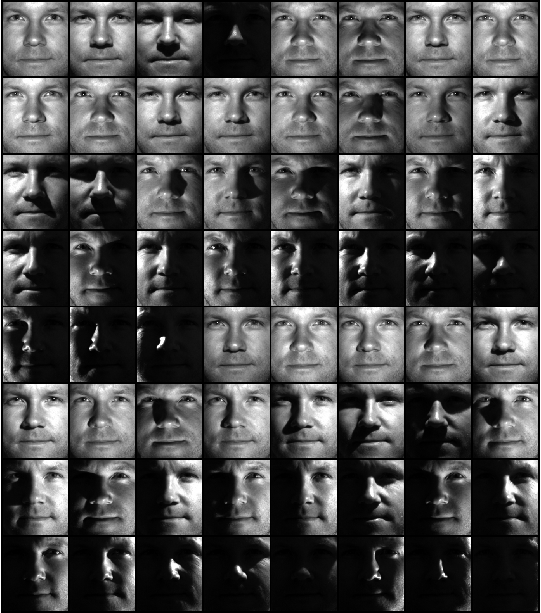}
  \label{subfig:original_images}
  }
  \subfigure[\scriptsize{Input images with missing data (in green)}]{
  \includegraphics[width=0.48\linewidth]{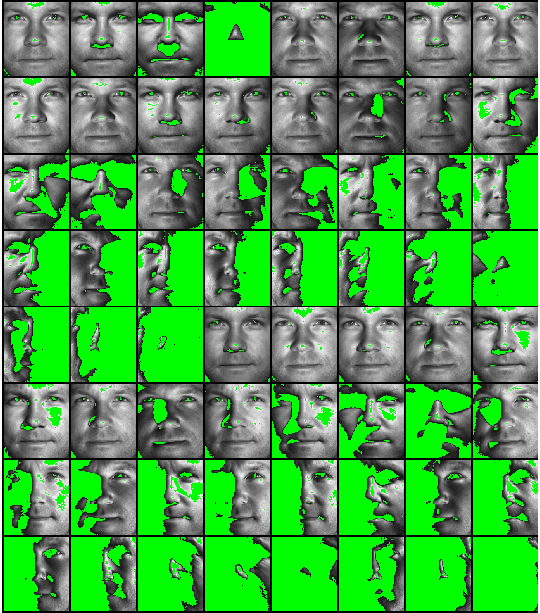}
  \label{subfig:input_missing}
  }
  \subfigure[\scriptsize{The 64 recovered rank-3 face images}]{
  \includegraphics[width=0.48\linewidth]{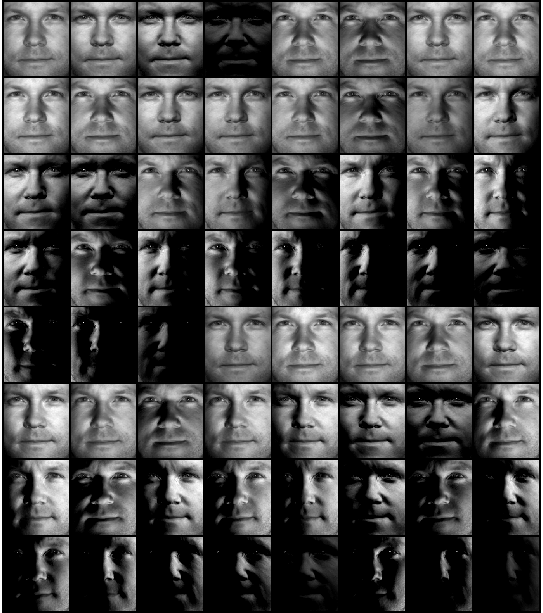}
  \label{subfig:recovered_faces}
  }
  \subfigure[\scriptsize{Sparse corruptions detected}]{
  \includegraphics[width=0.48\linewidth]{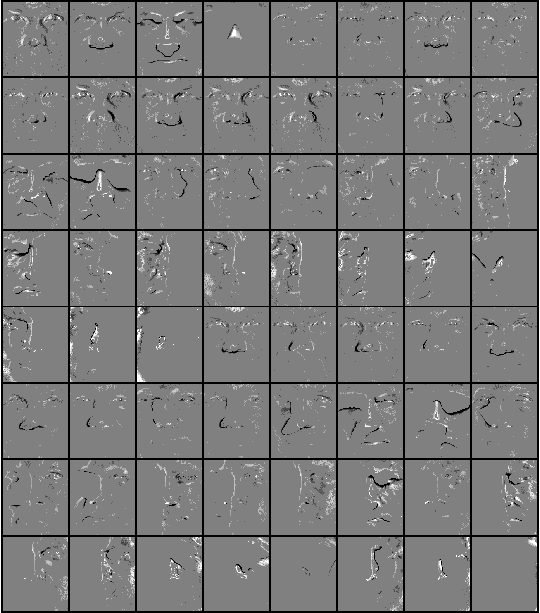}
  \label{subfig:sparse_corruptions}
  }\\
  \caption{\small{Results of PARSuMi on Subject 3 of Extended YaleB. Note that the facial expressions are slightly different and some images have more than 90\% of missing data. Also note that the sparse corruptions detected unified the irregular facial expressions and recovered those highlight and shadow that could not be labeled as missing data by plain thresholding.}}
  \label{fig:YaleB_decomposition}
\end{figure*}
\begin{figure}
  \centering
  \includegraphics[width=\linewidth]{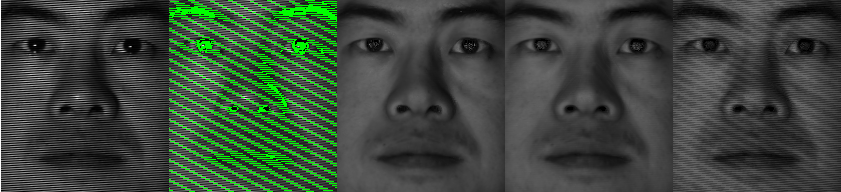}\\
  \includegraphics[width=\linewidth]{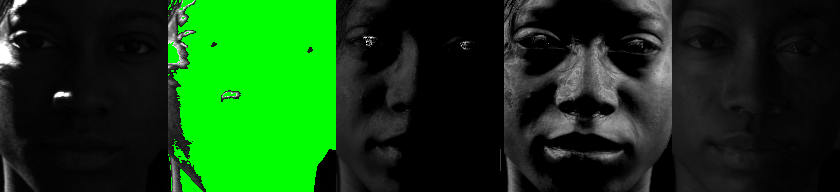}
  \caption{Additional comparisons in the quality of face image recovery. From left to right, they are original image, missing data mask (in green), results for PARSuMi, BALM and missing-RPCA.}\label{fig:comparison_yale_recovery}
\end{figure}

\subsection{The lower bounds in the experiments}
\begin{itemize}
  \item The lower bound in Fig.~\ref{fig:mf_exp1}: the lower bound is obtained by the data set that contains less than $r$ data points per-column and per-row. It is clear from \citet{kiraly2012algebraic} that this is an easy-to-check necessary condition of recoverability.
  \item \emph{The oracle RMSE for Phase Diagram}: We also adapt the oracle lower bound from \citet{CandesNoise} to represent the theoretical limit of recovery accuracy under noise. Our extended oracle bound under both sparse corruptions and Gaussian noise is:
        \begin{equation}\label{eq:OracleBound}
            \mathrm{RMSE}_{oracle}=\sigma\sqrt{\frac{(m+n-r)r}{p-e}},
        \end{equation}
        This is used for benchmarking in our phase diagram experiments.
  \item \emph{The oracle angular error in Table~\ref{tab:photometrc}:} For the Caesar and Elephant experiments, we use \eqref{eq:OracleBound} (ignoring corruptions by taking $e=0$) but transformed it by taking
      $$\arcsin \sqrt{1-(n\cdot \hat{n})^2},$$
      where $\hat{n}$ is the surface normal obtained by an oracle projection of the noisily observed image.
\end{itemize}

\subsection{Summary of parameters used in the experiments}\label{sec:parameters}

\begin{itemize}
  \item \emph{Parameters in our formulation}:
We assume $r$ (the underlying rank) to be known. $N_0$ is chosen to be an upper bound of the number of corrupted entries.
In experiments, we use 120\% of the actual number of corruptions. In practice, we should choose $N_0=0.1|\Omega|$ or $0.15|\Omega|$.
$\epsilon=1e-10$ (almost negligible).$K_E=20\sqrt{N_0\times \mathrm{median}(\cP)_{\Omega}(\widehat{W})}$ (very large, negligible).
In theory, we only need $\epsilon>0$ and $K_E<\infty$ to ensure the convergence. In practice, unless it is meaningful to choose an effective $K_E$, we will choose it large enough so that it has no impact on the optimization.
  \item \emph{Parameters for PARSuMi}:
$\beta_1=\beta_2=\frac{1e-3}{\sqrt{\max{m,n}}}$. For Algorithm~\ref{alg:LM_GN}, $\rho=10$ and initial $\lambda=1e-6$.
  \item \emph{Parameters for APG}:
$\gamma = \frac{1}{\sqrt{\max{m,n}}}$
$\lambda = 0.2$
\end{itemize}





{
\bibliographystyle{spbasic}
\bibliography{ARSuMi}   
}

\end{document}